\documentclass{article}
\usepackage{clipboard}
\usepackage[normalem]{ulem}
\usepackage{xcolor}
\usepackage[main, final]{neurips_2026}

\usepackage[utf8]{inputenc} % allow utf-8 input
\usepackage[T1]{fontenc}    % use 8-bit T1 fonts
\usepackage{hyperref}       % hyperlinks
\usepackage{url}            % simple URL typesetting
\usepackage{booktabs}       % professional-quality tables
\usepackage{amsfonts}       % blackboard math symbols
\usepackage{nicefrac}       % compact symbols for 1/2, etc.
\usepackage{microtype}      % microtypography
\usepackage{xcolor}         % colors
\usepackage{amsmath,amssymb, amsthm} 
\allowdisplaybreaks
\usepackage{xspace}
\usepackage{comment}
\usepackage{makecell}
\usepackage{multirow}
\usepackage{graphicx}
\usepackage[normalem]{ulem}
\usepackage{placeins}
\usepackage{cleveref}

\newtheorem{theorem}{Theorem}[section]

\newcommand{\VE}{\mathcal{V}{\to}\mathcal{E}}
\newcommand{\EV}{\mathcal{E}{\to}\mathcal{V}}

\title{
PANDA: Prior-guided Attentional Dual-path Architecture %
}

\author{%
Rishabh Jain \\
University of Cambridge, UK \\
{\small \texttt{rj412@cst.cam.ac.uk}} \\
\And
Federico Belotti \\
University of Bergamo, Italy \\
{\small \texttt{federico.belotti@unibg.it}} \\
\AND
Stefano Coniglio \\
University of Bergamo, Italy \\
{\small \texttt{stefano.coniglio@unibg.it}} \\
\And
Pietro Li\`{o} \\
University of Cambridge, UK \\
{\small \texttt{pl219@cst.cam.ac.uk}} \\
\And
Stefano Fiorini \\
Independent Researcher \\
{\small \texttt{s.fiorini1994@gmail.com}}
}

\begin{document}

\maketitle

\begin{abstract}
Hypergraph Neural Networks (HNNs), designed to model higher-order relations beyond pairwise interactions, typically rely either on fixed structural weights or on normalised attention. Fixed weights can propagate misleading information in heterophilic settings, while normalisation makes attention coefficients competitive, in the sense that, in a message-passing framework, increasing the weight of one sender must reduce the weight of others in the same aggregation set.
Our main contribution is the introduction of PANDA (Prior-guided AttentioNal Dual-path Architecture), a plug-in for two-stage message passing HNNs
applicable to both undirected and directed settings.
The \emph{main path} interpolates between learned receiver-normalised attention and structural incidence priors, while a non-competitive (not normalised) \emph{auxiliary path} lets each sender contribute its own transformed information.
We also show that, with fixed structural coefficients, PANDA recovers a certain complex-valued convolution operator adopted in previous works.
Across four HNNs spanning spatial and spectral methods in both directed and undirected settings, plugging PANDA in yields an average improvement of $5.18$ percentage points across eight real-world benchmark datasets.
The code is available at: \url{https://github.com/jainris/panda}.
\end{abstract}

\section{Introduction}

Learning from structured, non-Euclidean data has been largely driven by Graph Neural Networks (GNNs), which propagate and aggregate features along pairwise edges~\citep{GNN, noneuclidean}. Many real-world systems, however, involve higher-order relations that cannot be faithfully represented by pairwise links alone, including social networks~\citep{Benson_2016, benson-simplicial}, biological systems~\citep{traversa2023robustnesscomplexitydirectedweighted}, protein
interactions~\citep{murgas2022hypergraph}, and multi-robot systems~\citep{Jain-HMAGAT}. Hypergraph Neural Networks (HNNs) address this limitation by modelling hyperedges as multi-node relations, enabling message passing over sets of interacting entities~\citep{murgas2022hypergraph,chen2022preventing}.

A large class of HNNs can be understood as a two-stage
node$\rightarrow$hyperedge$\rightarrow$node message passing in which features are first aggregated within each hyperedge and then redistributed to its member nodes~\citep{telyatnikov2023hypergraph}.
Despite this common structure, many incidence-aggregation HNNs retain a smoothing bias inherited from fixed structural propagation.
Fixed coefficients, such as degree normalisations or incidence weights, provide stable architecture-specific priors, but they can also propagate misleading information when a hyperedge connects nodes with different labels, roles, or feature distributions. This issue is especially relevant in heterophilic hypergraphs, where the same higher-order relation may contain endpoints that should not share information in the same feature space~\citep{zhu2020beyond,chen2022preventing}.

Attention mechanisms provide one way to adapt aggregation to the data. GAT~\citep{velivckovic2017graph} showed that learned neighbour weights can improve node classification, with GATv2~\citep{brody2022how} later addressing expressivity limitations of the scoring function. Hypergraph attention methods such as HCHA~\citep{bai2021hypergraph} and HyperGAT~\citep{ding2020more} extend this idea to node--hyperedge interactions, yet on standard benchmarks their gains over non-attentive baselines remain modest and often inconsistent across datasets, suggesting that adapting attention to the hypergraph setting is more subtle than a direct port from the pairwise case. A central reason is that these methods rely on receiver-wise normalisation, which forces attention coefficients to be \emph{competitive}: increasing the weight on one sender necessarily reduces the mass available to other senders in the same aggregation set~\citep{richter2020normalized}. This is reasonable when neighbours genuinely compete to explain a target, but it precludes assigning high relevance to multiple senders simultaneously, an outcome that is often desirable in higher-order relations~\citep{yan2025sigmoid}.

Directed hypergraphs add another layer of structure. In this setting, each hyperedge has head and tail roles, and aggregation should distinguish how information flows through these roles. Yet, most directed hypergraph operators still rely on
fixed structural coefficients to weigh head and tail contributions~\citep{DHGNN, fiorini2024let}. The result is a propagation rule whose directional pattern is locked at architecture-design time, independent of the features being propagated. A further open question concerns the spectral operator of~\citet{fiorini2024let}: Does it admit an equivalent two-stage message-passing realisation?

To address these issues, we introduce PANDA (Prior-guided AttentioNal Dual-path Architecture), a plug-in for two-stage hypergraph message-passing layers.
Our approach applies to architectures whose propagation exposes node-to-hyperedge and hyperedge-to-node message passing, or to incidence aggregations that can accept node--hyperedge weights.
At each aggregation stage, PANDA combines a main path, consisting of a structural prior interpolated with learned receiver-normalised scores, with a non-competitive, gated auxiliary path that lets each incidence contribute transformed information without requiring the auxiliary scores to sum to one across the receiver neighbourhood.
In the directed setting, PANDA further factorises its attention along head/tail roles, so that the same plug-in applies to both undirected and directed HNNs.
Additionally, focusing on GeDi-HNN~\citep{fiorini2024let}, we show that it admits a two-stage message--passing realisation over complex-valued representations and that this realisation is not equivalent to transforming the directed hypergraph to a graph via a directed star expansion.
{We evaluate the integration of PANDA into four hypergraph architectures spanning spatial and spectral methods in both directed and undirected settings across eight real-world datasets. Across all models, PANDA consistently improves performance, yielding an average gain of $5.18$ percentage points and improvements of up to $31.01$ points on Telegram~\citep{sen2008collective}.}

\paragraph{Our contributions can be summarised as follows:}
\begin{itemize}
  \item We introduce PANDA, a plug-in for two-stage message-passing architectures, combining a main path that interpolates per-incidence between receiver-normalised attention and the backbone's structural prior with a non-competitive gated auxiliary path. We frame it on directed hypergraphs as they subsume undirected hypergraphs and ordinary graphs as special cases; the mechanism is not hypergraph-specific (we demonstrate its effectiveness on a GCN backbone as well).
  \item We prove that GeDi-HNN's complex Hermitian convolution admits a two-stage node--hyperedge--node realisation through the complex incidence matrix and its conjugate (\Cref{thm:gedi-recovery}), and that this realisation is not recovered by applying the same operator after a directed star expansion, which differs in phase (\Cref{thm:gedi-star-expansion}).
  \item Across 4 backbones spanning spectral/spatial and directed/undirected architectures on eight benchmarks, PANDA yields an average gain of +5.18 pp, with gains of up to +31.01 pp.
  \item We show that non-competitiveness is what drives the gains: against a normalised auxiliary path with identical parameter count and identical separate transformation, PANDA wins on 27 of 32 model-dataset pairs (+2.60 pp average).
  \item We provide a mechanistic analysis of the learned routing, separating the auxiliary path's contribution (train-time ablation, \Cref{tab:eval-merged}) from the trained model's dependence on it (inference-time interventions), and show its predictive effect is a broad margin improvement rather than only neighbour-majority correction.
\end{itemize}

\section{Related Work}

\paragraph{Undirected and Directed Hypergraph Neural Networks.}

On the \emph{undirected} side, HNNs split into spectral and spatial families. Spectral methods build convolution operators from the eigendecomposition of a hypergraph Laplacian, drawing on graph signal processing and the graph Fourier transform; examples include
HGNN~\citep{feng2019hypergraph}, HCHA~\citep{bai2021hypergraph} and HGNN$^{+}$~\citep{gao2022hgnn}, all of which can be viewed as GNNs operating on the clique expansion of a hypergraph. Spatial methods instead frame convolution as a localised aggregation over hyperedge neighbourhoods.
HNHN~\citep{dong2020hnhn}, UniGCNII~\citep{UniGNN}, HyperDN~\citep{tudisco2021nonlinea}, and LEGCN~\citep{yang2022hypergraph}, together with set-based models such as AllDeepSets and AllSetTransformer~\citep{allset}, propagate hyperedge features within a message-passing framework and can also be interpreted as GNNs on the star expansion. PhenomNN~\citep{wang2023hypergraph} derives its architecture from hypergraph-regularised energy functions, combining both the clique and star expansions.

On the \emph{directed} side, work is comparatively sparse.
\citet{fiorini2023sigmanetlaplacianrule} propose SigMaNet for directed graphs.
DHRL~\citep{DHGNN} approximates a directed hypergraph Laplacian and defines convolutions on top of it, while GeDi-HNN~\citep{fiorini2024let} introduces a graph-convolution operator based on the Generalized Directed Laplacian, a complex-valued Hermitian matrix shown to subsume many existing directed and undirected (hyper)graph convolutions as special cases.

\paragraph{Attention in GNNs and HNNs.} GAT~\citep{velivckovic2017graph} first showed that learned similarity-based weighting of neighbour contributions improves node classification, and GATv2~\citep{brody2022how} later introduced a strictly more expressive variant.
In the HNN domain, early proposals such as HCHA~\citep{bai2021hypergraph} and HyperGAT~\citep{ding2020more} incorporate hyperedge-level attention but remain undirected and adopt a purely competitive, receiver-wise normalisation: increasing the weight on one sender necessarily reduces the mass available to others in the same receiver neighbourhood.

\paragraph{Gated Attention.} Gated attention mechanisms augment standard attention with an additional learned gate that modulates the information passed by an attention head, feature channel, message, or residual path. While attention weights select where to aggregate from, gates control how much of the resulting signal should be admitted. This idea follows the broader use of gates for adaptive information flow in deep networks \citep{srivastava2015highway,dauphin2017language,shazeer2020glu}, and has been instantiated in attention models at different granularities: over attention heads, over value/output features, or over residual paths \citep{parisotto2020stabilizing,hua2022transformer,yang2024gated}. In graph learning, GaAN introduced gates over multi-head graph attention, using a lightweight subnetwork to control the importance of each attention head \citep{zhang2018gaan}. More recently, \cite{qiu2025gatedattention} showed that a sigmoid gate applied after attention improves performance, training stability, and scaling behaviour, while inducing sparse modulation of the attention output and enhancing long-context extrapolation. Notably, all these existing methods apply gating in conjunction with competitive normalisation (usually after it), unlike our proposed work. Furthermore, these works do not explore applications to hypergraphs.

\section{PANDA: Prior-guided AttentioNal Dual-path Architecture}\label{sec:gadh}

Let $\mathcal{H} = (\mathcal{V}, \mathcal{E})$ be a directed hypergraph with node features $\mathbf{x}^{(\ell)}_u \in \mathbb{R}^d$ for node $u \in \mathcal{V}$ and hyperedge features $\mathbf{h}_e \in \mathbb{R}^d$, $e \in \mathcal{E}$.
The hyperedge weights are stored in the diagonal matrix $W \in \mathbb{R}^{m \times m}$.
The vertex and hyperedge degrees are defined as $d_u = \sum_{e \in \mathcal{E}: u \in e} |w_e|$ for $u \in \mathcal{V}$ and $\delta_e = |e|$ for $e \in \mathcal{E}$ and are stored in two diagonal matrices $D_v \in \mathbb{R}^{n \times n}$ and $D_e \in \mathbb{R}^{m \times m}.$
Following~\cite{gallo1993directed}, the direction of a hyperpedge $e \in \mathcal{E}$ is determined by the partition in tail set $T(e) \subseteq \mathcal{V}$ and head set $H(e) \subseteq \mathcal{V}$.
For each $u \in \mathcal{V}$, $\Gamma_H(u)$ and $\Gamma_T(u)$ denote the sets of hyperedges in which $u$ appears as a head or tail node, respectively, while $\Gamma(u)$ denotes the set of all hyperedges incident to $u$, i.e. $\Gamma(u) = \Gamma_H(u) \cup \Gamma_T(u)$. For undirected hypergraphs, we use the same notation by placing every incident node in the head set: $H(e)=e$ and $T(e)=\emptyset$, so $\Gamma_H(u)$ contains all hyperedges incident to $u$ and $\Gamma_T(u)=\emptyset$.

The following framework describes a general {\em two-stage} node--hyperedge--node message-passing
abstraction~\citep{allset,telyatnikov2023hypergraph}:
{\small
\begin{align}
  \mathbf{h}^{(\ell)}_e & = \mathrm{Agg}_{\mathcal{E}}^{(\ell)}\!\left(
  \left\{
    \mathrm{Trans}_{\mathcal{V}}^{(\ell)}\!\left(
    \mathbf{x}^{(\ell-1)}_v
    \right)
  \right\}_{v \in e}
  \right)                     & e \in \mathcal{E}
  \label{eq:agg-e}
  \\
  \mathbf{x}^{\prime(\ell)}_u & = \mathrm{Agg}_{\mathcal{V}}^{(\ell)}\!\left(
  \left\{
      \mathrm{Trans}_{\mathcal{E}}^{(\ell)}\!\left(
          \mathbf{h}^{(\ell)}_e
      \right)
  \right\}_{e \in \Gamma(u)}
  \right)                     & u \in \mathcal{V}
  \label{eq:agg-v}
  \\
  \mathbf{x}^{(\ell)}_u       & = \mathrm{Upd}_{\mathcal{V}}^{(\ell)}\!\left(
  \mathbf{x}^{(\ell-1)}_u,\, \mathbf{x}^{\prime(\ell)}_u
  \right)                     & u \in \mathcal{V}.
  \label{eq:upd-v}
\end{align}
}\noindent
Equation \ref{eq:agg-e} carries out the first stage of message passing, aggregating transformed embeddings of the relevant nodes, while Equations \ref{eq:agg-v} and \ref{eq:upd-v} carry out the second stage, updating the node embeddings using the aggregated hyperedge representations.
$\mathrm{Trans}^{(\ell)}_{\cdot}$, $\mathrm{Agg}^{(\ell)}_{\cdot}$, and $\mathrm{Upd}^{(\ell)}_{\cdot}$ are learnable functions; their specific instantiation defines different model architectures. For instance, $\mathrm{Agg}_{\mathcal{E}}$ and $\mathrm{Agg}_{\mathcal{V}}$ may be weighted sums over incident nodes and hyperedges, as in GeDi-HNN~\cite{fiorini2024let};
learned permutation-invariant multiset functions, as in AllSet and AllSetTransformer~\citep{allset}; or attention-weighted sums, as in hypergraph attention models such as HCHA and HyperGAT~\citep{bai2021hypergraph,ding2020more}.

PANDA is an attention plug-in that can be applied to virtually any two-stage message passing architecture  by intercepting the node-to-hyperedge and hyperedge-to-node incidence messages used in Equations~\eqref{eq:agg-e} and \eqref{eq:agg-v}, and manipulating them.
It consists of two paths: the main path interpolates between learned receiver-normalised attention and structural incidence priors, while an auxiliary path adds locally complement-bounded but non-normalised transformed messages. We describe them in detail in the following.

\subsection{PANDA's Main Path}

After intercepting the node-to-hyperedge and hyperedge-to-node incidence messages used in Equations~\eqref{eq:agg-e} and \eqref{eq:agg-v}, PANDA's main path re-weights them by the main-path attention coefficients $\textcolor{cyan}{\hat\alpha^{\mathrm{main},(\ell)}_{e,v}}$ and $\textcolor{orange}{\check\alpha^{\mathrm{main},(\ell)}_{u,e}}$. The updated equations for the target layer reweighted by the new attention scores read:
{\small
\begin{align}
  \label{eq:upd_undirected_att}
  \mathbf{h}_e^{(\ell)}
          & = \mathrm{Agg}_{\mathcal{E}}^{(\ell)}\!\left(
  \left\{
  \textcolor{cyan}{\hat{\alpha}^{\mathrm{main},(\ell)}_{e,v}}
  \mathbf{x}_v^{\mathrm{main},(\ell)}
  \right\}_{v \in e}
  \right) &
  \quad e \in \mathcal{E}                                 \\
  \label{eq:upd_undirected_att2}
  \mathbf{x}_u^{\prime(\ell)}
          & = \mathrm{Agg}_{\mathcal{V}}^{(\ell)}\!\left(
  \left\{\textcolor{orange}{\check{\alpha}^{\mathrm{main},(\ell)}_{u,e}} \mathbf{h}_e^{\mathrm{main},(\ell)}\right\}_{e \in \Gamma(u)}
  \right) &
  \quad u \in \mathcal{V},
\end{align}
}\noindent
where we define $\mathbf{x}_v^{\mathrm{main},(\ell)}$ and $\mathbf{h}_e^{\mathrm{main},(\ell)}$ as below for better readability:
{\small
\begin{align}
    \mathbf{x}_v^{\mathrm{main},(\ell)} & = \mathrm{Trans}_{\mathcal{V}}^{(\ell)}\!\left(
          \mathbf{x}^{(\ell-1)}_v
      \right) & v \in \mathcal{V} \\
    \mathbf{h}_e^{\mathrm{main},(\ell)} & = \mathrm{Trans}_{\mathcal{E}}^{(\ell)}\!\left(
          \mathbf{h}^{(\ell)}_e
      \right) & e \in \mathcal{E}.
\end{align}
}

PANDA's main-path attention mechanism extends GATv2's~\citep{brody2022how} by combining GATv2's coefficients $\hat\alpha^{\mathrm{gat},(\ell)}_{e,v}$ and $\check\alpha^{\mathrm{gat},(\ell)}_{u,e}$ with the structural prior coefficients $\hat\alpha^{\mathrm{prior}}_{e,v}$ and $\check\alpha^{\mathrm{prior}}_{u,e}$. These are the fixed coefficients used by the target architecture in the node-to-hyperedge message from $v$ to $e$ and the hyperedge-to-node message from $e$ to $u$. These coefficients may come from degree normalisation, hyperedge weights, incidence signs or phases, or model-specific hyperedge reduction scores.
Setting the $\beta$ coefficients to the prior-only regime, $\hat\beta^{(\ell)}_{e,v}=\check\beta^{(\ell)}_{u,e}=0$, recovers these fixed coefficients in the main routing path.

Following~\cite{brody2022how}, the scores $\hat\alpha^{\mathrm{gat},(\ell)}_{e,v}$ and $\check\alpha^{\mathrm{gat},(\ell)}_{u,e}$ are defined as follows:
{\small
\begin{align}
  \hat\alpha^{\mathrm{gat},(\ell)}_{e,v}
   & =
  \frac{
    \exp\!\left(\hat s_{e,v}
    \right)
  }{
    \displaystyle
    \sum_{v' \in H(e) \cup T(e)}
    \exp\!\left(\hat s_{e,v'}
    \right)
  } \quad \hat s_{e,v} = (\hat{\mathbf{a}}^{(\ell)})^{\top} \operatorname{LeakyReLU}
  \left( \hat{\mathbf{W}}^{(\ell)}
  \left[
    \mathbf{h}^{(\ell-1)}_{e}
    \,\Vert\,
    \mathbf{x}^{(\ell-1)}_{v}
    \right]
  \right)
  \label{eq:alpha_att_ev}
  \\[2pt]
  \check\alpha^{\mathrm{gat},(\ell)}_{u,e}
   & =
  \frac{
    \exp\!\left(\check s_{u,e}
    \right)
  }{
    \displaystyle
    \sum_{e' \in \Gamma_H(u) \cup \Gamma_T(u)}
    \exp\!\left(\check s_{u, e'}
    \right)
  } \quad \check s_{u,e} = (\check{\mathbf{a}}^{(\ell)})^{\top} \operatorname{LeakyReLU}
  \left( \check{\mathbf{W}}^{(\ell)}
  \left[
    \mathbf{x}^{(\ell-1)}_{u}
    \,\Vert\,
    \mathbf{h}^{(\ell)}_{e}
    \right]
  \right),
  \label{eq:alpha_att_ue}
\end{align}
}\noindent
with learnable parameters $\hat{\mathbf{a}}^{(\ell)},\check{\mathbf{a}}^{(\ell)} \in \mathbb{R}^{d}$,
$\hat{\mathbf{W}}^{(\ell)},\check{\mathbf{W}}^{(\ell)} \in \mathbb{R}^{d \times 2d}$ and $\Vert$ is the concatenation operation.

In PANDA's main path, the coefficients $\hat\alpha^{\mathrm{gat},(\ell)}_{e,v}, \check\alpha^{\mathrm{gat},(\ell)}_{u,e}$ and $\hat\alpha^{\mathrm{prior}}_{e,v}, \check\alpha^{\mathrm{prior}}_{u,e}$ are combined via the following per-representation convex combination with $\hat{\beta}^{(\ell)}_{e,v}, \check{\beta}^{(\ell)}_{u,e} \in [0, 1]$:
\begin{align}
  \textcolor{cyan}{\hat\alpha^{\mathrm{main},(\ell)}_{e,v}}     & = \hat{\beta}^{(\ell)}_{e,v}\, \hat\alpha^{\mathrm{gat},(\ell)}_{e,v} + \left(1 - \hat{\beta}^{(\ell)}_{e,v}\right) \hat\alpha^{\mathrm{prior}}_{e,v},         \\
  \textcolor{orange}{\check\alpha^{\mathrm{main},(\ell)}_{u,e}} & = \check{\beta}^{(\ell)}_{u,e}\, \check\alpha^{\mathrm{gat},(\ell)}_{u,e} + \left(1 - \check{\beta}^{(\ell)}_{u,e}\right) \check\alpha^{\mathrm{prior}}_{u,e}.
\label{eq:main-coefficients}
\end{align}
Notice that, since the interpolation gates $\hat{\beta}^{(\ell)}_{e,v}, \check{\beta}^{(\ell)}_{u,e}$ are used incidence-wise, the resulting $\textcolor{cyan}{\hat\alpha^{\mathrm{main},(\ell)}_{e,v}}, \textcolor{orange}{\check\alpha^{\mathrm{main},(\ell)}_{u,e}}$ coefficients are not necessarily receiver-wise normalised, even when the GAT coefficients and the chosen structural prior are each separately normalised over the receiver neighbourhood.
The learned gating coefficients $\hat{\beta}^{(\ell)}_{e,v}, \check{\beta}^{(\ell)}_{u,e} \in [0, 1]$ are calculated as follows:
{\small
\begin{align}
  \hat{\beta}_{e,v}^{(\ell)}
   & =
  \sigma_s
  \left( \left(
  \mathbf{u}_{\beta,\mathcal V}^{(\ell)} \right)^\top
  \operatorname{LeakyReLU}
  \left(
    {W}_{\beta,\mathcal V}^{(\ell)}
    \left[
      \mathbf{h}^{(\ell-1)}_e
      \,\Vert\,
      \mathbf{x}^{(\ell-1)}_v
      \right]
    \right)
  \right)
  \\
  \check{\beta}_{u,e}^{(\ell)}
   & =
  \sigma_s
  \left( \left(
  \mathbf{u}_{\beta,\mathcal E}^{(\ell)} \right)^\top
  \operatorname{LeakyReLU}
  \left(
    {W}_{\beta,\mathcal E}^{(\ell)}
    \left[
      \mathbf{x}^{(\ell-1)}_u
      \,\Vert\,
      \mathbf{h}^{(\ell)}_e
      \right]
    \right)
  \right),
\end{align}
}\noindent
with learnable $\mathbf{W}^{(\ell)}_{\beta, \cdot} \in \mathbb{R}^{c \times 2d}$ and $\mathbf{u}^{(\ell)}_{\beta, \cdot} \in \mathbb{R}^{c}$, and where $\sigma_s(\cdot)$ is the sigmoid function.
Setting $\hat{\beta}^{(\ell)}_{e,v}=\check{\beta}^{(\ell)}_{u,e}=1$ recovers the GATv2 attentional coefficients, while setting $\hat{\beta}^{(\ell)}_{e,v}=\check{\beta}^{(\ell)}_{u,e}=0$ recovers the structural prior coefficients (in the main routing path).

Thanks to the $\beta$ coefficients, the attention mechanism can shift towards the structural prior on representations where it is reliable (e.g., homophilic neighbourhoods in which smoothing is desirable and, thus, the prior is beneficial), while retaining the learned GATv2 scores where the prior is misleading (e.g., heterophilic settings).

\subsection{Auxiliary Path}
\label{sec:complement-gated-attention}

PANDA adds an  {\em auxiliary aggregation mechanism} to the one already featured in the target architecture (after reweighting it with the coefficients $\hat\alpha^{(\ell)}_{e,v}$ and $\check\alpha^{(\ell)}_{u,e}$).
We introduce this auxiliary aggregation mechanism by carrying out the following transformations
  {\small
    \begin{align}
      \mathbf{x}^{\mathrm{aux},(\ell)}_v & =
      \mathrm{Trans}_{\mathcal{V}}^{\mathrm{aux},(\ell)}\!\left(
      \mathbf{x}^{(\ell-1)}_v
      \right)                            & v \in \mathcal{V}
      \label{eq:trans-v-par}
      \\
      \mathbf{h}^{\mathrm{aux},(\ell)}_e & =
      \mathrm{Trans}_{\mathcal{E}}^{\mathrm{aux},(\ell)}\!\left(
      \mathbf{h}^{(\ell-1)}_e
      \right)                            & e \in \mathcal{E}.
      \label{eq:trans-e-par}
    \end{align}
  }\noindent
We update the aggregation operations of Equations~\eqref{eq:upd_undirected_att}--\eqref{eq:upd_undirected_att2} by additively combining the transformed representations $\mathbf{h}_e^{\mathrm{aux},(\ell)}$ and $\mathbf{x}_v^{\mathrm{aux},(\ell)}$ to the original, reweighted ones:
{\small
\begin{align}
  \label{eq:upd_comp_att}
  \mathbf{h}^{(\ell)}_e                                                                                                                                                                                                                  & = \mathrm{Agg}_{\mathcal{E}}^{(\ell)}\!\left(
  \left\{\textcolor{cyan}{\hat{\alpha}^{\mathrm{main},(\ell)}_{e,v}} \mathbf{x}^{\mathrm{main},(\ell)}_v\right\}_{v \in e} \oplus \left\{\textcolor{red}{\hat{\alpha}^{\text{aux},(\ell)}_{e,v}} \mathbf{x}_v^{\mathrm{aux},(\ell)}\right\}_{v \in e}\right) & e \in \mathcal{E}                             \\
  \mathbf{x}^{\prime(\ell)}_u                                                                                                                                                                                                                  & = \mathrm{Agg}_{\mathcal{V}}^{(\ell)}\!\left(
  \left\{\textcolor{orange}{\check{\alpha}^{\mathrm{main},(\ell)}_{u,e}} \mathbf{h}^{\mathrm{main},(\ell)}_e\right\}_{e \in \Gamma(u)} \oplus \left\{\textcolor{blue}{\check{\alpha}^{\text{aux},(\ell)}_{u,e}} \mathrm{h}_e^{\mathrm{aux},(\ell)} \right\}_{e \in \Gamma(u)}
  \right)                                                                                                                                                                                                                                      & u \in \mathcal{V},
\end{align}
}\noindent
where $\oplus$ indicates the per-index sum of two indexed sets that share the same indices.

As weights, we use a gated complement of the main path scores:
{\small
\begin{align}
  \textcolor{red}{\hat\alpha^{\mathrm{aux},(\ell)}_{e,v}}    & = \left(1 - \textcolor{cyan}{\hat\alpha^{\mathrm{main},(\ell)}_{e,v}}\right) \textcolor{red}{\hat g^{(\ell)}_{e,v}},
  \qquad
  \textcolor{red}{\hat g^{(\ell)}_{e,v}} := \sigma_s\!\left(\mathrm{aux}_{\mathcal{E}}^{(\ell)}\!\left(\mathbf{h}^{(\ell-1)}_{e},\,\mathbf{x}^{(\ell-1)}_{v} \right)\right) \\
  \textcolor{blue}{\check\alpha^{\mathrm{aux},(\ell)}_{u,e}} & = \left(1 - \textcolor{orange}{\check\alpha^{\mathrm{main},(\ell)}_{u,e}}\right) \textcolor{blue}{\check g^{(\ell)}_{u,e}},
  \qquad
  \textcolor{blue}{\check g^{(\ell)}_{u,e}} := \sigma_s\!\left(\mathrm{aux}_{\mathcal{V}}^{(\ell)}\!\left(\mathbf{x}^{(\ell-1)}_{u},\, \mathbf{h}^{(\ell)}_{e}\right)\right),
\end{align}
}\noindent
where $\sigma_s(\cdot)$ is the sigmoid function, the per-incidence values $\textcolor{red}{\hat g^{(\ell)}_{e,v}}\in[0,1]$ and $\textcolor{blue}{\check g^{(\ell)}_{u,e}}\in[0,1]$ are the post-sigmoid auxiliary gates, and the functions $\mathrm{aux}_{\mathcal{E}}^{(\ell)},\mathrm{aux}_{\mathcal{V}}^{(\ell)}$ are learned scalar scoring functions:
{\small
\begin{align}
  \mathrm{aux}_{\mathcal E}^{(\ell)}
  \!\left(\mathbf{h}^{(\ell-1)}_e,\mathbf{x}^{(\ell-1)}_v\right)
   & = (\mathbf{a}_{\hat\alpha^{\mathrm{aux}}}^{(\ell)})^\top
  \operatorname{ReLU}\!\left(
  \mathbf{W}_{\hat\alpha^{\mathrm{aux}}}^{(\ell)}
  \left[
    \mathbf{h}^{(\ell-1)}_e
    \,\Vert\,
    \mathbf{x}^{(\ell-1)}_v
    \right]
  \right)
  \\
  \mathrm{aux}_{\mathcal V}^{(\ell)} \!\left(\mathbf{x}^{(\ell-1)}_u,\mathbf{h}^{(\ell)}_e\right)
   & = (\mathbf{a}_{\check\alpha^{\mathrm{aux}}}^{(\ell)})^\top \operatorname{ReLU}\!\left(
  \mathbf{W}_{\check\alpha^{\mathrm{aux}}}^{(\ell)}
  \left[
    \mathbf{x}^{(\ell-1)}_u
    \,\Vert\,
    \mathbf{h}^{(\ell)}_e
    \right]
  \right).
\end{align}
}\noindent
with learned $\mathbf{W}_{\hat\alpha^{\mathrm{aux}}}^{(\ell)}, \mathbf{W}_{\check\alpha^{\mathrm{aux}}}^{(\ell)} \in\mathbb{R}^{d\times 2d}$ and $\mathbf{a}_{\hat\alpha^{\mathrm{aux}}}^{(\ell)}, \mathbf{a}_{\check\alpha^{\mathrm{aux}}}^{(\ell)} \in\mathbb{R}^{d}$.

Assume that the structural priors are scaled so that the resulting main coefficients lie in $[0,1]$. Then the auxiliary gates satisfy
\[
  0 \leq \hat\alpha^{\mathrm{aux},(\ell)}_{e,v}
  \leq 1-\hat\alpha^{\mathrm{main},(\ell)}_{e,v},
  \qquad
  0 \leq \check\alpha^{\mathrm{aux},(\ell)}_{u,e}
  \leq 1-\check\alpha^{\mathrm{main},(\ell)}_{u,e}.
\]
Setting the auxiliary coefficients, or the auxiliary transformations, to zero recovers the prior-guided main-path reweighting of the target layer.

Notice that these are sender--receiver-wise bounds: $1-\alpha^{\mathrm{main}}$ is the local complement of the corresponding sender--receiver coefficient.
In particular, the auxiliary coefficients are not normalised across $v\in e$ or across $e\in\Gamma(u)$.
The main path blends the receiver-normalised GATv2 component with the structural prior, while the auxiliary path adds a separately transformed message whose coefficient is locally complement-bounded but non-competitive across the receiver's receptive field. Thus, interactions assigned a small main-path coefficient can still contribute after transformation, which is useful in heterophilic or role-sensitive hyperedges where a sender may be poor as a smoothing neighbour but informative after a separate transformation.

\section{Generalised Directed Spectral Convolution as Two-Stage Message Passing}
\label{sec:method-2}

In this section, we analyse the relationship between two-stage message-passing and the only, to our knowledge, Hermitian convolutional operators defined for directed hypergraphs, namely, the operator used in GeDi-HNN and introduced by~\cite{fiorini2024let}. The analysis is crucial to understand how to plug PANDA in such an architecture.

The relation between spectral graph convolutions and spatial/message-passing graph convolutions is well established for ordinary graphs,
see~\citep{balcilar2020bridging,balcilar2021analyzing,chen2023bridging}.
For hypergraphs, the two-stage node--hyperedge--node message-passing abstraction is also standard in the undirected setting. HGNN and HCHA re-cast the Zhou--Huang--Sch\"{o}lkopf operator as a node--hyperedge--node message-passing scheme, and~\citep{zhou,feng2019hypergraph,bai2021hypergraph} and, more broadly, AllSet formalises HNN layers as compositions of permutation-invariant multiset functions~\citep{allset}, while~\citet{telyatnikov2023hypergraph} place HGNN, HCHA, HNHN, HyperGCN, UniGNN, and AllSet-type models under the same two-stage viewpoint.
What is not immediate from these existing results is the directed, complex-Hermitian case considered by models such as GeDi-HNN~\citep{fiorini2024let}.

In this section, we show that, to recover GeDi's convolutional operator, one needs a complex-valued message-passing scheme relying on the same complex incidence matrix and its conjugate used to define GeDi's Laplacian. We also show that the directed, complex-valued GeDi operator implemented by GeDi's Laplacian admits a node--hyperedge--node two-stage equivalent, but not by (as one may intuit) first converting the hypergraph into an ordinary directed star expansion and then applying a Hermitian graph operator.

\subsection{A complex-valued message-passing scheme}
\label{sec:gedi-complex}

To compare our two-stage template with GeDi's spectral convolution, we instantiate the two aggregation functions $\mathrm{Agg}_{\mathcal{E}}^{(\ell)}$, $\mathrm{Agg}_{\mathcal{V}}^{(\ell)}$ as follows:
{\small
\begin{align}
  \mathbf{h}^{(\ell)}_{e}
          & = \sum_{v \in H(e) \cup T(e)} \vec{B}^*_{e,v}
  \left(
  \textcolor{cyan}{\hat\alpha^{\text{main},(\ell)}_{e,v}}\,
  \mathbf{x}^{\mathrm{main},(\ell)}_{v}
  +
  \textcolor{red}{\hat\alpha^{\mathrm{aux},(\ell)}_{e,v}}\,
  \mathbf{x}^{\mathrm{aux},(\ell)}_{v}
  \right) & e \in \mathcal{E}
  \label{eq:gadh-h}
  \\
  \mathbf{x}^{\prime(\ell)}_{u}
          & = \sum_{e \in \Gamma_H(u) \cup \Gamma_T(u)} \vec{B}_{u,e}
  \left(
  \textcolor{orange}{\check\alpha^{\text{main},(\ell)}_{u,e}}\,
  \mathbf{h}^{\mathrm{main},(\ell)}_{e}
  +
  \textcolor{blue}{\check\alpha^{\mathrm{aux},(\ell+1)}_{u,e}}\,
  \mathbf{h}^{\mathrm{aux},(\ell)}_{e}
  \right) & u \in \mathcal{V}
  \label{eq:gadh-xprime},
\end{align}
}\noindent
where $\vec{B}^* \in \mathbb{C}^{n \times m}$ is the conjugate-transpose of $\vec{B} \in \mathbb{C}^{n \times m}$, which is \cite{fiorini2024let}'s complex vertex--edge incidence matrix
\begin{equation}\label{eq:B}
  \vec{B}_{v,e} =
  \begin{cases}
    1  & \text{if } v \in H(e) \\
    -i & \text{if } v \in T(e) \\
    0  & \text{otherwise}
  \end{cases}
  \qquad v \in \mathcal{V},\, e \in \mathcal{E}.
\end{equation}
The role of $\vec{B}$ is to encode the hyperedge directions and is central to the following theorem:
\begin{theorem}\label{thm:gedi-recovery}
\Copy{cp-thm:gedi-recovery}{%
    PANDA with additive aggregations $\mathrm{Agg}_{\cdot}$, $\mathrm{Trans}^{(\ell)}_{\cdot}=\mathrm{id}$, $\mathrm{Trans}^{\mathrm{aux},(\ell)}_{\cdot}=\mathbf{0}$, $\mathrm{Upd}^{(\ell)}(X,X')=\phi(X\Theta_0+X'\Theta_1)$, with $\hat\beta^{(\ell)}_{e,v}=\check\beta^{(\ell)}_{u,e}=0$ (so that $\hat\alpha^{\mathrm{main},(\ell)}_{e,v}=\hat\alpha^{\mathrm{prior}}_{e,v}$ and $\check\alpha^{\mathrm{main},(\ell)}_{u,e}=\check\alpha^{\mathrm{prior}}_{u,e}$), $\hat\alpha^{\mathrm{aux},(\ell)}_{e,v}=\check\alpha^{\mathrm{aux},(\ell)}_{u,e}=0$, and where the structural priors are
    \[
      \hat\alpha^{\mathrm{prior}}_{e,v} =  \sqrt{\frac{w_e}{\delta_e \, d_v}},
      \qquad
      \check\alpha^{\mathrm{prior}}_{u,e} = \sqrt{\frac{w_e}{d_u \, \delta_e}}.
    \]
    recovers GeDi's convolutional layer defined on top of the Generalized Directed (GeDi) Laplacian $\vec{L}_N = I-\vec{Q}_N$, with $\vec{Q}_N := D_v^{-1/2}\,\vec{B}\,W\,D_e^{-1}\,\vec{B}^*\,D_v^{-1/2}$.
}
\end{theorem}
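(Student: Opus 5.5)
The plan is to substitute the stated choices into Equations~\eqref{eq:gadh-h}--\eqref{eq:gadh-xprime}, write the resulting two-stage map in matrix form, and check that it equals $\vec{Q}_N X$. Once that holds, the update $\phi(X\Theta_0 + \vec{Q}_N X\Theta_1)$ is exactly GeDi's layer, because GeDi's filter is a linear combination of $I$ and $\vec{Q}_N = I-\vec{L}_N$. Any reparametrisation in terms of $\vec{L}_N$ is absorbed into $\Theta_0,\Theta_1$.

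\textbf{Step 1: hyperedge stage.} With $\mathrm{Trans}=\mathrm{id}$, the auxiliary terms zero, and $\hat\beta=0$, Equation~\eqref{eq:gadh-h} becomes
\[
\mathbf{h}_e=\sum_{v\in H(e)\cup T(e)} \vec{B}^*_{e,v}\sqrt{\tfrac{w_e}{\delta_e d_v}}\,\mathbf{x}_v .
\]
The sum may be extended to all $v\in\mathcal{V}$, since $\vec{B}_{v,e}=0$ off $e$. In matrix form this reads $H = (W D_e^{-1})^{1/2}\vec{B}^* D_v^{-1/2} X$. We assume $w_e>0$ (so $|w_e|=w_e$) and that the scalar prior commutes with the complex entry of $\vec{B}^*$.

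\textbf{Step 2: node stage.} Likewise, Equation~\eqref{eq:gadh-xprime} gives
\[
\mathbf{x}'_u=\sum_{e\in\Gamma(u)}\vec{B}_{u,e}\sqrt{\tfrac{w_e}{d_u\delta_e}}\,\mathbf{h}_e ,
\]
that is, $X' = D_v^{-1/2}\vec{B}(W D_e^{-1})^{1/2}H$. Composing the two stages, the diagonal factors $(WD_e^{-1})^{1/2}$ multiply to $WD_e^{-1}$, and diagonal matrices commute. Hence
\[
X' = D_v^{-1/2}\vec{B}WD_e^{-1}\vec{B}^*D_v^{-1/2}X = \vec{Q}_N X .
\]
Plugging this into $\mathrm{Upd}$ yields $\phi(X\Theta_0+\vec{Q}_NX\Theta_1)$, which is GeDi's layer. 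For completeness, I would also verify entrywise that $(\vec{Q}_N)_{u,v}=\sum_e \vec{B}_{u,e}\overline{\vec{B}_{v,e}}\,w_e/(\delta_e\sqrt{d_ud_v})$. This produces the phases $1$, $\pm i$ and $1$ for the head/head, head/tail and tail/tail cases, which confirms that conjugation sits on the first stage.

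\textbf{Main obstacle.} The matrix algebra is routine. The delicate part is bookkeeping, and I would handle it in three checks:
\begin{itemize}
\item The split $\sqrt{w_e/\delta_e}\cdot\sqrt{w_e/\delta_e}=w_e/\delta_e$ across the two stages must reproduce exactly one factor of $WD_e^{-1}$.
\item GeDi's layer must be stated precisely in the form $\phi(X\Theta_0 + \vec{Q}_N X\Theta_1)$ (or something equivalent), and this matching is the one point needing care.
\item Features become complex after the first stage, so $\Theta_0,\Theta_1$ and $\phi$ act on complex representations as in GeDi-HNN.
\end{itemize}
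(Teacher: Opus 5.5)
Your proposal is correct and follows the paper's proof essentially step by step. The paper also writes the two stages as $D_e^{-1/2}W^{1/2}\vec{B}^*D_v^{-1/2}$ and $D_v^{-1/2}\vec{B}W^{1/2}D_e^{-1/2}$, composes them into $\vec{Q}_N$, and then matches GeDi's layer $\phi(X\Theta_0^G+\vec{L}_N X\Theta_1^G)$ through the explicit choice $\Theta_0=\Theta_0^G+\Theta_1^G$, $\Theta_1=-\Theta_1^G$, which is exactly the absorption of the $\vec{L}_N$-reparametrisation you describe.
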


Notice that, when the coefficients are learned, PANDA does not necessarily preserve the Hermitianicity of the spectral operator which otherwise holds in the frozen-prior case.

In the undirected hypergraph case, a spectral convolution based on the Zhou--Huang--Sch\"{o}lkopf Laplacian can be recovered as a two-stage message-passing operator applied to the star expansion of the hyperpgraph. This is not the case in the directed setting, where the phases encoded in the Hermitian matrix do not line up, as we now show:

\begin{theorem}\label{thm:gedi-star-expansion}
\Copy{cp-thm:gedi-star-expansion}{%
    Let $\mathcal{G}_\star$ be the directed star expansion obtained from a directed hypergraph $\mathcal{H}$ by replacing each hyperedge $e$ with an auxiliary node $h_e$, adding arcs $t\to h_e$ for every $t\in T(e)$ and arcs $h_e\to r$ for every $r\in H(e)$.
    Let $\vec{Q}_\star$ and $\vec{L}_\star$ be the GeDi Signless and Signed Laplacian
    operators on this directed graph, and let $\vec{Q}_N$ be the GeDi normalised Signless Laplacian on $\mathcal{H}$. The block  $(\vec{Q}_\star^2)_{\mathcal V,\mathcal V}$ is the two-hop node--auxiliary--node propagation obtained by applying GeDi twice on the
    star-expanded graph. In general, however, neither the one-step operator
    $\vec{Q}_\star$ nor this two-hop star-expanded propagation equals $\vec{Q}_N$ on the original vertices.
    Thus even the two-hop message passing induced by the ordinary star expansion
    already differs from $\vec Q_N$ in phase.
}
\end{theorem}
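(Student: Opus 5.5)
The plan is a counterexample, since the claim is a ``not equal in general'' statement. I will exhibit one small directed hypergraph on which both $\vec{Q}_\star$ and $(\vec{Q}_\star^2)_{\mathcal V,\mathcal V}$ differ from $\vec{Q}_N$. First I will write out $\vec Q_N$ explicitly from its definition. For $u\neq v$,
\[
(\vec Q_N)_{u,v}=\frac{1}{\sqrt{d_ud_v}}\sum_{e}\frac{w_e}{\delta_e}\,\vec B_{u,e}\overline{\vec B_{v,e}}.
\]
The product $\vec B_{u,e}\overline{\vec B_{v,e}}$ takes one of four values:
\begin{itemize}
\item $1$ if $u$ and $v$ are both heads;
\item $1$ if $u$ and $v$ are both tails, since $(-i)\overline{(-i)}=1$;
\item $i$ if $u$ is a head and $v$ is a tail;
\item $-i$ if $u$ is a tail and $v$ is a head.
\end{itemize}
So a tail--head pair of vertices receives a phase of exactly $\pm i$, and same-role pairs receive phase $1$.

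Next I will apply the GeDi construction to $\mathcal G_\star$, viewed as a directed graph in which every arc is a hyperedge of size two with tail $\{a\}$ and head $\{b\}$. Its incidence matrix $\vec B_\star$ has entries $1$ at heads and $-i$ at tails. For an arc $a\to b$ the off-diagonal entry is therefore $(\vec B_\star\vec B_\star^*)_{b,a}=1\cdot\overline{(-i)}=i$, with $-i$ in the transposed position.

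The one-step claim is then immediate. $\vec Q_\star$ lives on $\mathcal V\cup\{h_e\}$ and only connects original vertices to auxiliary nodes. Its $\mathcal V\times\mathcal V$ block is therefore diagonal, whereas $\vec Q_N$ has a nonzero off-diagonal entry whenever two vertices share a hyperedge. So they differ even after restricting to $\mathcal V$.

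For the two-hop block, I will take the simplest example: one hyperedge $e$ with $T(e)=\{t\}$ and $H(e)=\{r\}$, and unit weight. Then $\mathcal G_\star$ is the path $t\to h_e\to r$.
\begin{itemize}
\item $(\vec Q_\star)_{h_e,t}$ and $(\vec Q_\star)_{r,h_e}$ each carry phase $i$ up to positive normalisation.
\item Hence $(\vec Q_\star^2)_{r,t}=(\vec Q_\star)_{r,h_e}(\vec Q_\star)_{h_e,t}$ is a positive multiple of $i\cdot i=-1$.
\item By contrast, $(\vec Q_N)_{r,t}$ is a positive multiple of $i$.
\end{itemize}
The two operators differ in phase, $-1$ versus $i$, and no choice of positive degree normalisation can repair this. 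That also justifies the sentence ``differs in phase''. I will double-check the sign and conjugation conventions, since an error there would swap $i$ and $-i$ but would leave the phase mismatch intact. I will also note that the Signed Laplacian $\vec L_\star=I-\vec Q_\star$ (normalised) only changes the diagonal and negates the off-diagonal entries. Its two-hop off-diagonal entry is the same product, so the same argument covers it.

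The main obstacle is bookkeeping about exactly which normalisation of the star-expansion GeDi operator is meant: the degrees, whether arcs are treated as size-2 hyperedges, and whether $\delta_e=2$. My plan is to keep every normalisation factor as a generic positive scalar. The argument then rests only on the phases, since positive real factors cannot turn $-1$ into $i$. If one also wants to rule out equality after rescaling the magnitudes, the phase argument already suffices.
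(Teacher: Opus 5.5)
Your proposal is correct and follows the paper's proof essentially step for step: the same observation that a single GeDi step on $\mathcal G_\star$ never couples two distinct original vertices, and the same minimal counterexample $t\to h_e\to r$, where the two-hop phase $i\cdot i=-1$ replaces the hypergraph phase $i$. The paper computes explicit values ($-1/8$ versus $i/2$) where you keep the positive normalisations generic, which is equally valid. One small caveat: $\vec Q_N$ need not have a nonzero entry \emph{whenever} two vertices share a hyperedge, since cross-role contributions $i$ and $-i$ from different hyperedges can cancel, but your single-hyperedge example makes the one-step claim rigorous anyway.
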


\section{Experimental Evaluation}

\subsection{Experimental Setup}
\label{sec:eval-setup}

\paragraph{Baseline.}
We evaluate our PANDA modular plug-in applied to four HNNs on real-world datasets for the node classification task. The selected models span complementary regions of the spectral/spatial and directed/undirected design space. From the \emph{undirected} hypergraph-learning literature, we include the spectral method HGNN~\citep{feng2019hypergraph} and the spatial methods HyperGCN~\citep{yadati2019hypergcnnewmethodtraining} and PhenomNN~\citep{wang2023hypergraph}. From the \emph{directed} hypergraph-learning literature, we consider the spectral method GeDi-HNN~\citep{fiorini2024let}. For each HNN, we compare its original formulation (\textsc{Vanilla}) against the same backbone augmented with PANDA, isolating the benefit of integrating our plug-in without changing the underlying architecture.

\paragraph{Datasets.}
For evaluation, we use eight real-world datasets and follow the pre-processing procedure introduced by~\cite{tran2022directedhypergraphneuralnetwork} and \cite{fiorini2024let} to obtain (directed) hypergraph datasets for node classification. The considered datasets include:
\texttt{Telegram}~\citep{bovet2020activity},
\texttt{Citeseer},
\texttt{PubMed}~\citep{sen2008collective},
\texttt{Cornell},
\texttt{Texas}~\citep{pei2020geomgcngeometricgraphconvolutional},
\texttt{Squirrel},
\texttt{Roman-empire}~\citep{platonov2023a}
and \texttt{email-Enron}~\citep{benson-simplicial}.
Model performance is measured in terms of classification accuracy. Following the standard practice in the literature \citep{allset,wang2022equivariant,fiorini2024let}, we adopt a 50\%/25\%/25\% split for training, validation, and testing, respectively, and, for each model, we report the average test accuracy and the standard deviation over 10 independent runs. Details on the baselines, hyperparameter tuning, and the experimental setup are provided in Appendix~\ref{app:details}.

\paragraph{Integrating PANDA into existing architectures.}
PANDA is designed as a modular plug-in that can be integrated into different backbones. We detail each integration below, starting from GeDi-HNN, which follows the two-stage realisation derived in Section~\ref{sec:gedi-complex}.

HGNN~\citep{feng2019hypergraph} can be written as a degree-normalised node$\rightarrow$hyperedge$\rightarrow$node propagation through the incidence matrix, so we apply the main and auxiliary paths directly to the two incidence aggregations: nodes contribute to hyperedges through the $\textcolor{cyan}{\hat{\alpha}^{\text{main},\ell}_{e,v}}$ and $\textcolor{red}{\hat{\alpha}^{\text{aux},\ell}_{e,v}}$ coefficients, and hyperedges return information to nodes through the $\textcolor{orange}{\check{\alpha}^{\text{main},\ell}_{e,v}}$ and $\textcolor{blue}{\check{\alpha}^{\text{aux},\ell}_{e,v}}$ coefficients, while the original degree and hyperedge-weight normalisations are retained inside the HGNN aggregation.
Note that, since \textsc{PANDA w/o Prior,Aux} is simply the backbone architecture, but with the vanilla aggregation replaced by an attentional one, \textsc{PANDA w/o Prior,Aux} on the HGNN backbones has an identical architecture to HCHA w/ attention~\citep{bai2021hypergraph}.

HyperGCN~\citep{yadati2019hypergcnnewmethodtraining} replaces each hyperedge by a sparse graph built from representative node pairs and mediators before applying a GCN. We keep the hyperedge-to-graph reduction fixed and apply PANDA to the messages on the selected support: the main path re-weights the selected messages, and the auxiliary path adds parallel messages without altering the mediator construction. 

Lastly, PhenomNN~\citep{wang2023hypergraph} is derived from hypergraph-regularised energy functions rather than from a plain convolutional layer. It can be viewed as running message passing through the clique-expanded and star-expanded graphs, separately, and then applying energy function inspired-regularisations to form the new node embeddings. We apply the same rule to the hyperedge-local messages used by its iterative update, treating the original message as the main path and adding a separately transformed incidence-wise message as the auxiliary path before the model's original regularisation and update.

\subsection{Results}
\label{sec:results}

\begin{table}[htb!]
\centering
\small
\caption{Node classification results reporting mean accuracies $\pm$ standard deviation across all datasets. We compare our PANDA plug-in and its ablations with different backbone architectures.}
\label{tab:eval-merged}
\setlength{\tabcolsep}{3pt}
\renewcommand{\arraystretch}{0.95}
\resizebox{\textwidth}{!}{%
\begin{tabular}{c l c c c c c c c c}
\toprule
& \bfseries Variant
& \texttt{Telegram} 
& \texttt{Citeseer}
& \texttt{Pubmed} 
& \texttt{Roman-empire} 
& \texttt{Cornell} 
& \texttt{Texas}
& \texttt{Squirrel}
& \texttt{email-Enron} \\
\midrule

\multirow{5}{*}{\rotatebox[origin=c]{90}{GeDi}}
& VANILLA
& 80.96 $\pm$ 4.506 & 76.47 $\pm$ 1.315 & 89.80 $\pm$ 0.408 & 84.10 $\pm$ 0.399
& 74.32 $\pm$ 5.019 & 82.97 $\pm$ 2.717 & 42.35 $\pm$ 2.059 & 53.24 $\pm$ 7.649 \\

& PANDA w/o Prior,Aux
& 73.27 $\pm$ 6.631 & 75.89 $\pm$ 0.973 & 89.68 $\pm$ 0.415 & 88.65 $\pm$ 0.753
& 78.65 $\pm$ 3.908 & 83.51 $\pm$ 4.432 & 38.62 $\pm$ 1.851 & 58.11 $\pm$ 7.376 \\

& PANDA w/o Aux
& 80.00 $\pm$ 4.485 & 76.20 $\pm$ 0.768 & 90.04 $\pm$ 0.316 & 87.98 $\pm$ 0.758
& 76.22 $\pm$ 3.585 & 84.05 $\pm$ 6.216 & 39.62 $\pm$ 2.573 & 60.81 $\pm$ 9.145 \\

& PANDA
& 91.15 $\pm$ 3.567 & 76.12 $\pm$ 0.921 & 89.32 $\pm$ 0.416 & 90.11 $\pm$ 0.699
& 77.03 $\pm$ 4.230 & 84.05 $\pm$ 4.595 & 42.76 $\pm$ 1.742 & 60.27 $\pm$ 8.466 \\

\cmidrule{2-10}

& $\Delta$(PANDA - Vanilla)
& \textcolor{blue}{+10.19} & \textcolor{red}{-0.35} & \textcolor{red}{-0.48} & \textcolor{blue}{+6.01}
& \textcolor{blue}{+2.71} & \textcolor{blue}{+1.08} & \textcolor{blue}{+0.41} & \textcolor{blue}{+7.03} \\

\midrule\midrule

\multirow{5}{*}{\rotatebox[origin=c]{90}{HGNN}}

& VANILLA
& 58.08 $\pm$ 6.760 & 77.49 $\pm$ 1.226 & 90.08 $\pm$ 0.318 & 78.84 $\pm$ 0.650
& 71.35 $\pm$ 4.713 & 81.89 $\pm$ 5.135 & 39.08 $\pm$ 2.861 & 42.43 $\pm$ 4.366 \\

& PANDA w/o Prior,Aux
& 58.08 $\pm$ 8.104 & 77.08 $\pm$ 1.394 & 89.88 $\pm$ 0.595 & 76.82 $\pm$ 0.405
& 73.78 $\pm$ 5.135 & 82.43 $\pm$ 4.872 & 38.01 $\pm$ 3.601 & 48.11 $\pm$ 5.771 \\

& PANDA w/o Aux
& 71.73 $\pm$ 3.224 & 77.56 $\pm$ 1.070 & 90.02 $\pm$ 0.365 & 82.66 $\pm$ 0.871
& 72.97 $\pm$ 4.009 & 83.24 $\pm$ 4.490 & 39.68 $\pm$ 2.830 & 44.05 $\pm$ 6.954 \\

& PANDA
& 72.12 $\pm$ 2.885 & 77.17 $\pm$ 1.125 & 89.77 $\pm$ 0.368 & 82.37 $\pm$ 0.511
& 74.59 $\pm$ 2.162 & 83.24 $\pm$ 5.644 & 40.43 $\pm$ 0.999 & 54.59 $\pm$ 8.090 \\

\cmidrule{2-10}

& $\Delta$(PANDA - Vanilla)
& \textcolor{blue}{+14.04} & \textcolor{red}{-0.32} & \textcolor{red}{-0.21} & \textcolor{blue}{+3.53} 
& \textcolor{blue}{+3.24} & \textcolor{blue}{+1.31} & \textcolor{blue}{+1.35} & \textcolor{blue}{+12.16} \\

\midrule\midrule

\multirow{5}{*}{\rotatebox[origin=c]{90}{HyperGCN}}

& VANILLA
& 61.92 $\pm$ 4.106 & 75.59 $\pm$ 0.776 & 88.94 $\pm$ 0.326 & 65.04 $\pm$ 0.702
& 57.57 $\pm$ 7.257 & 65.41 $\pm$ 6.373 & 38.28 $\pm$ 1.628 & 47.30 $\pm$ 8.653 \\

& PANDA w/o Prior,Aux
& 62.50 $\pm$ 7.410 & 75.46 $\pm$ 0.732 & 87.84 $\pm$ 0.244 & 77.12 $\pm$ 0.486
& 67.03 $\pm$ 9.269 & 76.49 $\pm$ 8.967 & 34.51 $\pm$ 2.694 & 45.75 $\pm$ 2.936 \\

& PANDA w/o Aux
& 68.65 $\pm$ 5.374 & 75.41 $\pm$ 0.753 & 88.68 $\pm$ 0.499 & 77.94 $\pm$ 0.419
& 67.03 $\pm$ 6.139 & 73.62 $\pm$ 6.397 & 34.67 $\pm$ 2.126 & 55.68 $\pm$ 7.852 \\

& PANDA
& 70.58 $\pm$ 5.165 & 75.63 $\pm$ 0.894 & 88.59 $\pm$ 0.568 & 80.16 $\pm$ 0.661
& 64.86 $\pm$ 6.620 & 78.65 $\pm$ 5.976 & 42.46 $\pm$ 2.923 & 54.86 $\pm$ 6.402 \\

\cmidrule{2-10}

& $\Delta$(PANDA - Vanilla)
& \textcolor{blue}{+8.66} & \textcolor{blue}{+0.04} & \textcolor{red}{-0.25} & \textcolor{blue}{+15.12} 
& \textcolor{blue}{+7.29} & \textcolor{blue}{+13.24} & \textcolor{blue}{+4.18} & \textcolor{blue}{+7.56} \\

\midrule\midrule

\multirow{5}{*}{\rotatebox[origin=c]{90}{PhenomNN}}

& VANILLA
& 44.81 $\pm$ 4.472 & 77.40 $\pm$ 1.105 & 90.24 $\pm$ 0.365 & 80.33 $\pm$ 0.423
& 74.05 $\pm$ 4.713 & 82.97 $\pm$ 4.196 & 37.63 $\pm$ 2.451 & 39.43 $\pm$ 3.148 \\

& PANDA w/o Prior,Aux
& 52.12 $\pm$ 8.088 & 77.52 $\pm$ 1.112 & 90.10 $\pm$ 0.392 & 78.72 $\pm$ 0.492
& 76.22 $\pm$ 4.151 & 80.81 $\pm$ 6.886 & 38.28 $\pm$ 2.841 & 34.05 $\pm$ 8.470 \\

& PANDA w/o Aux
& 75.58 $\pm$ 6.084 & 77.20 $\pm$ 1.075 & 87.87 $\pm$ 0.666 & 76.63 $\pm$ 0.376
& 75.14 $\pm$ 2.648 & 82.43 $\pm$ 4.230 & 38.49 $\pm$ 2.593 & 48.11 $\pm$ 6.817 \\

& PANDA
& 75.82 $\pm$ 4.343 & 77.08 $\pm$ 1.113 & 87.51 $\pm$ 0.581 & 85.08 $\pm$ 0.752
& 75.68 $\pm$ 2.961 & 82.70 $\pm$ 5.565 & 41.01 $\pm$ 1.999 & 50.27 $\pm$ 6.304 \\

\cmidrule{2-10}

& $\Delta$(PANDA - Vanilla)
& \textcolor{blue}{+31.01} & \textcolor{red}{-0.32} & \textcolor{red}{-2.73} & \textcolor{blue}{+4.75}
& \textcolor{blue}{+1.63} & \textcolor{red}{-0.27} & \textcolor{blue}{+3.38} & \textcolor{blue}{+10.84} \\

\bottomrule
\end{tabular}
}
\end{table}

Table~\ref{tab:eval-merged} reports node classification accuracy across eight benchmark datasets for PANDA integrated with four HNN models: GeDi-HNN, HGNN, HyperGCN, and PhenomNN. For each model, we compare the original formulation (\textsc{Vanilla}), two PANDA ablations, and the full PANDA module. Overall, PANDA achieves an average improvement of $5.18$ percentage points over the corresponding \textsc{Vanilla} models. The gains concentrate on heterophilic benchmarks, while performance remains largely unchanged on homophilic citation datasets. The largest improvements occur on Telegram, where PANDA lifts all four models by near double-digit margins, reaching $31.01$ percentage points on PhenomNN and exceeding $10$ points on GeDi-HNN and HGNN. Heterophilic hypergraph datasets follow the same pattern: HyperGCN benefits the most, with gains of over $13$ percentage points on both Roman-empire and Texas, while the other models show consistent improvements. On email-Enron, PANDA improves all four models by an average of $9.40$ percentage points, suggesting that the gains are driven by the underlying interaction structure rather than by a single model family.
On the homophilic citation datasets Citeseer and Pubmed, where the structural prior of the \textsc{Vanilla} models is already well aligned with the data, the difference between full PANDA and \textsc{Vanilla} remains within $[-2.73,+0.04]$ percentage points, with most changes below one point in absolute value. PANDA therefore acts as a corrective plug-in when fixed structural propagation is misaligned with the data, while preserving performance when the original propagation is already appropriate.

\textbf{Ablations.}
The ablation rows isolate the effect of PANDA's two main components. \textsc{PANDA} w/o Prior,Aux keeps only the learned receiver-normalised attention. This variant is sometimes unable to recover the \textsc{Vanilla} accuracy and does not consistently outperform it, suggesting that learned attention alone is insufficient to replace the fixed structural propagation of existing HNN models. Adding the structural prior (\textsc{PANDA} w/o Aux) recovers much of the \textsc{Vanilla} performance, while also allowing the model to outperform the baseline on some datasets, highlighting the importance of both attention and the structural priors. The full PANDA module further adds the gated, non-competitive auxiliary path, which is decisive in several cases where the main path alone remains limited. This indicates that the auxiliary path captures complementary information that the receiver-normalised main path cannot express on its own.

\begin{table}[!tb]
\centering
\small
\caption{Node classification results reporting mean accuracies $\pm$ standard deviation across all datasets for GCN, GAT and PANDA on GCN.}
\label{tab:eval-gnn}
\setlength{\tabcolsep}{3pt}
\renewcommand{\arraystretch}{0.95}
\resizebox{\textwidth}{!}{%
\begin{tabular}{c l c c c c c c c c}
\toprule
& \bfseries Variant
& \texttt{Telegram} 
& \texttt{Citeseer}
& \texttt{Pubmed} 
& \texttt{Roman-empire} 
& \texttt{Cornell} 
& \texttt{Texas}
& \texttt{Squirrel}
& \texttt{email-Enron} \\
\midrule

\multirow{4}{*}{\rotatebox[origin=c]{90}{GCN}}
& VANILLA
& 90.38 $\pm$ 2.852 & 77.00 $\pm$ 1.197 & 89.28 $\pm$ 0.272 & 75.18 $\pm$ 0.683
& 52.43 $\pm$ 9.300  & 67.57 $\pm$ 6.396 & 43.70 $\pm$ 1.947 & 62.16 $\pm$ 5.539 \\

& GAT (PANDA w/o Prior,Aux)
& 82.88 $\pm$ 3.488 & 76.75 $\pm$ 0.919 & 89.92 $\pm$ 0.247 & 85.90 $\pm$ 0.707
& 53.78 $\pm$ 10.429 & 68.92 $\pm$ 6.648 & 39.73 $\pm$ 1.634 & 68.92 $\pm$ 6.310 \\

& PANDA
& 91.73 $\pm$ 3.851 & 76.30 $\pm$ 1.070 & 88.97 $\pm$ 0.466 & 86.74 $\pm$ 0.711
& 73.78 $\pm$ 6.049 & 80.54 $\pm$ 4.804 & 42.84 $\pm$ 1.322 & 64.86 $\pm$ 7.151 \\

\cmidrule{2-10}

& $\Delta$(PANDA - Vanilla)
& \textcolor{blue}{+1.37} & \textcolor{red}{-0.70} & \textcolor{red}{-0.31} & \textcolor{blue}{+11.55}
& \textcolor{blue}{+21.35} & \textcolor{blue}{+12.97} & \textcolor{red}{-0.86} & \textcolor{blue}{+2.70} \\

\bottomrule
\end{tabular}
}
\end{table}

\textbf{PANDA on GNNs.} We have presented PANDA with a focus on directed hypergraphs as they subsume undirected hypergraphs and ordinary graphs. \Cref{tab:eval-gnn} shows that PANDA's gains translate to ordinary GNNs as well. We see that PANDA consistently improves performance for GCN across the datasets, with the exceptions where the differences are primarily small. PANDA + GCN also consistently outperforms GAT, with exceptions on the homophilic datasets where the difference is small, and the email-Enron dataset, where the difference is greater.

\begin{table}[!tb]
\centering
\caption{Main auxiliary-path analysis across five datasets. 
\(\mathcal H_{\mathrm{CE}}\) is clique-expansion homophily from \cite{wang2023hypergraph}.
\(\Delta\)Acc.\ is the accuracy change after shutting off both auxiliary
gates, $\textcolor{red}{\hat{g}^\ell_{e,v}}$ and $\textcolor{blue}{\check{g}^\ell_{u,e}}$. 
\(q_{\mathrm{mis}}\) is the fraction of test nodes whose
clique-expansion neighbour-majority class differs from the true label.
\(q_{\mathrm{help}}\) and \(q_{\mathrm{hurt}}\) are the fractions of nodes
flipped onto or off the true class by the auxiliary path.
\(q_{\mathrm{nbr\text{-}rescue}}\) is the fraction of test
nodes rescued specifically from the misleading neighbour-majority class.
\(\overline{\Delta}^{\mathrm{nbr}}_{\neq}\) is the true-vs-neighbour-majority
logit-margin shift on misleading-neighbour nodes. All rates are percentages;
full definitions are in Appendix~\ref{sec:analysis-app}.}
\label{tab:main-aux-analysis}
\setlength{\tabcolsep}{4pt}
\begin{tabular*}{\textwidth}{@{\extracolsep{\fill}}lrrrrrr@{}}
    \toprule
    Dataset
    & \(\mathcal H_{\mathrm{CE}}\)
    & \(\Delta\)Acc.
    & \(q_{\mathrm{mis}}\)
    & \makecell{\(q_{\mathrm{help}}/ q_{\mathrm{hurt}}\)}
    & \(q_{\mathrm{nbr\text{-}rescue}}\)
    & \(\overline{\Delta}^{\mathrm{nbr}}_{\neq}\) \\
    \midrule
    \texttt{Roman-empire}  & 0.236 & {\(-27.7\)} & 95.68 & 29.44 / 1.77  & 2.20  & \(+5.29\) \\
    \texttt{Telegram}      & 0.285 & \(-54.0\) & 81.73 & 56.54 / 2.50  & 16.35 & \(+3.58\) \\
    \texttt{Cornell}       & 0.420 & \(-4.6\)  & 57.84 & 5.95 / 1.35   & 1.08  & \(+0.09\) \\
    \texttt{Texas}         & 0.505 & \(+0.0\)  & 47.03 & 3.24 / 3.24   & 1.35  & \(+0.09\) \\
    \texttt{Citeseer}      & 0.795 & \(-1.0\)  & 23.97 & 1.84 / 0.82   & 0.22  & \(-0.28\) \\
    \bottomrule
\end{tabular*}
    \vspace{-0.4cm}
\end{table}

\textbf{Auxiliary-Path Analysis.}
We further analyse the learned auxiliary path of the GeDi-PANDA on five datasets using
post-hoc inference-time interventions on fixed trained checkpoints; full definitions, routing statistics, and additional geometric diagnostics are reported in Appendix~\ref{sec:analysis-app}. These interventions measure the predictive dependence of the trained model on the auxiliary gates at test time. Table~\ref{tab:main-aux-analysis}
summarises the main findings. Disabling both $\textcolor{red}{\hat{g}^\ell_{e,v}}$ and $\textcolor{blue}{\check{g}^\ell_{u,e}}$ auxiliary gates has a large
effect on Telegram and Roman-empire, decreasing test accuracy by \(54.0\) and
\(27.7\) percentage points, respectively, with a smaller effect on Citeseer, Cornell, and Texas. 

To understand whether this effect corresponds to correcting misleading neighbourhood information, we measure \(\overline{\Delta}_{\neq}^{\mathrm{nbr}}\), i.e., how much the auxiliary path shifts the logit margin between the true class and the neighbour-majority class, when the two are different. On the two low-homophily datasets, as reported by \cite{wang2023hypergraph},
Roman-empire and Telegram, the auxiliary path strongly increases the
true-vs-neighbour-majority logit margin on misleading-neighbour nodes
(\(+5.29\) and \(+3.58\)). Thus, when information flowing from
neighbourhoods is unreliable, the auxiliary path behaves as a counter-neighbourhood signal. However, strict neighbour-majority rescues explain only part of the total helped flips: \(2.20\%\) out of \(29.44\%\) on
Roman-empire and \(16.35\%\) out of \(56.54\%\) on Telegram. This indicates
that the auxiliary path is not merely correcting predictions that follow the
wrong neighbour-majority class; rather, it improves margins against a broader
set of competing classes. 

The cosine analysis in Appendix~\ref{sec:analysis-cos-pass} provides a
complementary geometric view. We measure the source- and receiver-wise cosine similarity between the
main-path representation and the auxiliary representation in each pass. In the
\(\VE\) pass, the cosine is positive across datasets, indicating that the
auxiliary node representation usually retains part of the main node direction. On the other hand, in the \(\EV\) pass, the cosine is instead close to zero, showing that the
auxiliary-transformed hyperedge representation is not a rescaled copy of the main one but
a distinct, approximately orthogonal direction. This geometric evidence does
not by itself explain the margin gains, but it supports the interpretation that
the auxiliary branch provides a non-redundant message channel rather than
simply amplifying the main path.

\textbf{Non-Competitiveness Ablation.} We further investigate whether the performance gains come from the non-competitive nature of the auxiliary path or from the additional parameters introduced in the auxiliary path.
Towards this end, we compare the performance of PANDA to a variant where we normalise the auxiliary path's scores, while maintaining identical parameter counts. We dub this variant Norm Aux. \Cref{tab:norm-aux-eval} shows that PANDA consistently outperforms the Norm Aux ablation, with a few small exceptions where the gap is primarily small. This strongly supports our hypothesis that non-competitiveness is important.

\begin{table}[tbp!]
\centering
\small
\caption{Node classification results reporting mean accuracies $\pm$ standard deviation across all datasets. We compare our PANDA plug-in with the Norm Aux variant on different backbone architectures.}
\label{tab:norm-aux-eval}
\setlength{\tabcolsep}{3pt}
\renewcommand{\arraystretch}{0.95}
\resizebox{\textwidth}{!}{%
\begin{tabular}{l c c c c c c c c}
\toprule
\bfseries Model
& \texttt{Telegram} 
& \texttt{Citeseer}
& \texttt{Pubmed} 
& \texttt{Roman-empire} 
& \texttt{Cornell} 
& \texttt{Texas}
& \texttt{Squirrel}
& \texttt{email-Enron} \\
\midrule

GeDi + Norm Aux & 75.96 $\pm$ 5.319 & 76.56 $\pm$ 0.862 & 89.25 $\pm$ 0.498 & 89.42 $\pm$ 0.466 & 75.14 $\pm$ 3.784 & 80.27 $\pm$ 6.401 & 40.07 $\pm$ 4.317 & 52.16 $\pm$ 7.649 \\
GeDi + PANDA & 91.15 $\pm$ 3.567 & 76.12 $\pm$ 0.921 & 89.32 $\pm$ 0.416 & 90.11 $\pm$ 0.699 & 77.03 $\pm$ 4.230 & 84.05 $\pm$ 4.595 & 42.76 $\pm$ 1.742 & 60.27 $\pm$ 8.466 \\
\midrule
$\Delta$(PANDA - Norm Aux) & \textcolor{blue}{+15.19} & \textcolor{red}{-0.44} & \textcolor{blue}{+0.07} & \textcolor{blue}{+0.69} & \textcolor{blue}{+1.89} & \textcolor{blue}{+3.78} & \textcolor{blue}{+2.69} & \textcolor{blue}{+8.11} \\
\midrule
\midrule

HGNN + Norm Aux & 69.62 $\pm$ 3.922 & 77.38 $\pm$ 0.648 & 89.66 $\pm$ 0.379 & 79.83 $\pm$ 0.621 & 70.54 $\pm$ 4.265 & 80.54 $\pm$ 2.911 & 38.78 $\pm$ 2.902 & 51.62 $\pm$ 3.354 \\
HGNN + PANDA & 72.12 $\pm$ 2.885 & 77.17 $\pm$ 1.125 & 89.77 $\pm$ 0.368 & 82.37 $\pm$ 0.511 & 74.59 $\pm$ 2.162 & 83.24 $\pm$ 5.644 & 40.43 $\pm$ 0.999 & 54.59 $\pm$ 8.090 \\
\midrule
$\Delta$(PANDA - Norm Aux) & \textcolor{blue}{+2.50} & \textcolor{red}{-0.21} & \textcolor{blue}{+0.11} & \textcolor{blue}{+2.54} & \textcolor{blue}{+4.05} & \textcolor{blue}{+2.70} & \textcolor{blue}{+1.65} & \textcolor{blue}{+2.97} \\
\midrule
\midrule

HyperGCN + Norm Aux & 67.31 $\pm$ 5.160 & 75.87 $\pm$ 1.118 & 88.28 $\pm$ 0.218 & 78.67 $\pm$ 0.581 & 66.49 $\pm$ 5.822 & 77.57 $\pm$ 7.257 & 37.83 $\pm$ 3.124 & 45.59 $\pm$ 3.713 \\
HyperGCN + PANDA & 70.58 $\pm$ 5.165 & 75.63 $\pm$ 0.894 & 88.59 $\pm$ 0.568 & 80.16 $\pm$ 0.661 & 64.86 $\pm$ 6.620 & 78.65 $\pm$ 5.976 & 42.46 $\pm$ 2.923 & 54.86 $\pm$ 6.402 \\
\midrule
$\Delta$(PANDA - Norm Aux) & \textcolor{blue}{+3.27} & \textcolor{red}{-0.24} & \textcolor{blue}{+0.31} & \textcolor{blue}{+1.49} & \textcolor{red}{-1.63} & \textcolor{blue}{+1.08} & \textcolor{blue}{+4.63} & \textcolor{blue}{+9.27} \\
\midrule
\midrule

PhenomNN + Norm Aux & 72.88 $\pm$ 3.889 & 77.04 $\pm$ 1.259 & 88.56 $\pm$ 0.378 & 79.75 $\pm$ 0.419 & 74.95 $\pm$ 4.090 & 81.97 $\pm$ 4.018 & 39.57 $\pm$ 3.157 & 43.76 $\pm$ 8.683 \\
PhenomNN + PANDA & 75.82 $\pm$ 4.343 & 77.08 $\pm$ 1.113 & 87.51 $\pm$ 0.581 & 85.08 $\pm$ 0.752 & 75.68 $\pm$ 2.961 & 82.70 $\pm$ 5.565 & 41.01 $\pm$ 1.999 & 50.27 $\pm$ 6.304 \\
\midrule
$\Delta$(PANDA - Norm Aux) & \textcolor{blue}{+2.94} & \textcolor{blue}{+0.04} & \textcolor{red}{-1.05} & \textcolor{blue}{+5.33} & \textcolor{blue}{+0.73} & \textcolor{blue}{+0.73} & \textcolor{blue}{+1.44} & \textcolor{blue}{+6.51} \\
\bottomrule
\end{tabular}
}
\end{table}

\textbf{Additional Analysis.} We perform parameter and inference time comparisons for PANDA and the vanilla models in
\Cref{sec:appendix-overheads}, and these demonstrate that PANDA can improve parameter efficiency and, in several settings,
inference speed. We also extend our analysis on additional datasets in \Cref{sec:appendix-additional-eval} and show
that similar trends follow on these additional datasets. We include a large scale hypergraph dataset and show how
PANDA helps with scalability, by allowing for more parameter efficient and faster models. We also perform norm stability
analyses in \Cref{sec:appendix-stability} to ensure there are no stability related issues with the introduction of the
auxiliary paths in PANDA.

\section{Conclusions}
\label{sec:conclusions}
HNNs typically rely either on fixed structural weights, which can propagate misleading information in heterophilic settings, or on receiver-wise normalised attention, whose coefficients are competitive by construction.
We introduced PANDA, a plug-in for two-stage hypergraph message-passing layers, applicable to both undirected and directed HNNs. Its main path interpolates between learned receiver-normalised attention and structural incidence priors, while a non-competitive auxiliary path supplies an additional channel that the receiver-normalised path cannot express on its own.
We further showed that GeDi-HNN's complex Hermitian convolution admits a two-stage realisation through the original complex incidence matrix and its conjugate, but that this realisation is not equivalent to applying a Hermitian graph operator after a directed star expansion of the hypergraph.
Across four HNNs, PANDA yields consistent improvements on heterophilic benchmarks without degrading performance on homophilic ones, acting as a corrective plug-in where the structural propagation is misaligned with the data. Our post-hoc analysis makes this behaviour explicit. Removing the auxiliary branch at test time sharply reduces accuracy, showing that it is informative for the trained model. On heterophilic datasets, the auxiliary branch shifts predictions away from misleading neighbourhood signals towards the correct class, and its representations, especially in the hyperedge-to-node stage, are nearly orthogonal to those of the main path, indicating a distinct message channel rather than a rescaling.
Future work will address the main architectural limitation of PANDA, i.e., being restricted to two-stage message-passing HNNs, by extending it to more general information-propagation paradigms rooted in algebraic topology, such as sheaf neural networks.

\begin{ack}
  This work is partially funded by Trinity College, Cambridge.
\end{ack}

% \clearpage

\bibliography{bibliography}
\bibliographystyle{plainnat}

%%%%%%%%%%%%%%%%%%%%%%%%%%%%%%%%%%%%%%%%%%%%%%%%%%%%%%%%%%%%
\appendix
\section{Proof of our theoretical results}
\label{sec:appendix-proofs}

\begin{theorem}\label{thm:gedi-recovery-appendix}
\Paste{cp-thm:gedi-recovery}
\end{theorem}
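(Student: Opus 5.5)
The plan is a direct computation: write the two aggregation stages \eqref{eq:gadh-h}--\eqref{eq:gadh-xprime} entrywise under the stated specialisation, compose them, and identify the result with $\vec{Q}_N X$. From there the GeDi layer follows by linearity. We first note what the specialisation removes. $\mathrm{Trans}^{\mathrm{aux}}=\mathbf 0$ and $\hat\alpha^{\mathrm{aux}}=\check\alpha^{\mathrm{aux}}=0$ kill the auxiliary terms. $\mathrm{Trans}=\mathrm{id}$ gives $\mathbf{x}^{\mathrm{main},(\ell)}_v=\mathbf{x}^{(\ell-1)}_v$ and $\mathbf{h}^{\mathrm{main},(\ell)}_e=\mathbf{h}^{(\ell)}_e$. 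Finally, $\hat\beta=\check\beta=0$ reduces the main coefficients to the priors, by \eqref{eq:main-coefficients}. Hence
\[
\mathbf{h}_e=\sum_{v}\vec B^*_{e,v}\sqrt{\tfrac{w_e}{\delta_e d_v}}\,\mathbf{x}_v,
\qquad
\mathbf{x}'_u=\sum_{e}\vec B_{u,e}\sqrt{\tfrac{w_e}{d_u\delta_e}}\,\mathbf{h}_e,
\]
where both sums may be taken over all $v$ and all $e$, because $\vec B$ vanishes off the incidence pattern (so the restriction to $H(e)\cup T(e)$ and to $\Gamma_H(u)\cup\Gamma_T(u)$ is automatic).

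Next we put this in matrix form. The first stage is $\mathcal{H}=W^{1/2}D_e^{-1/2}\vec B^*D_v^{-1/2}X$, since the prior factorises as $w_e^{1/2}\delta_e^{-1/2}$ times $d_v^{-1/2}$. The second stage is $X'=D_v^{-1/2}\vec B\,W^{1/2}D_e^{-1/2}\mathcal H$. Diagonal matrices commute, so composing gives
\[
X'=D_v^{-1/2}\vec B\,W D_e^{-1}\vec B^*D_v^{-1/2}X=\vec Q_N X .
\]
Substituting into $\mathrm{Upd}$ yields $\phi(X\Theta_0+\vec Q_NX\Theta_1)$. Since $\vec Q_N=I-\vec L_N$, this is a first-order polynomial filter in $\vec L_N$, namely $\phi\big(X(\Theta_0+\Theta_1)-\vec L_NX\Theta_1\big)$. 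That is GeDi's convolutional layer, up to the usual reparametrisation of the two weight matrices. We will state explicitly which GeDi layer form (Chebyshev order one, or $X\Theta_0+\vec Q_NX\Theta_1$) is being matched, and give the reparametrisation.

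The main obstacle is bookkeeping rather than depth, and it concentrates in three places. First, $\sqrt{w_e}$ must be well defined. Either weights are taken nonnegative, or, for signed weights (the degree uses $|w_e|$), one splits $w_e$ as $\sqrt{|w_e|}\cdot\operatorname{sgn}(w_e)\sqrt{|w_e|}$ across the two stages. Only the product $w_e$ must appear, so I would note this and assume $w_e\ge 0$ as in the statement. Second, the conjugation convention must be consistent: the first stage uses $\vec B^*_{e,v}=\overline{\vec B_{v,e}}$ (so $i$ for tail nodes) and the second uses $\vec B_{u,e}$ (so $-i$ for tail nodes). This is what places $\vec B$ on the left and $\vec B^*$ on the right, matching $\vec Q_N$. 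Third, since $\vec Q_N$ is complex, $X$, $\mathcal H$ and $\Theta_\cdot$ are complex-valued, with $\phi$ a complex activation as in GeDi-HNN. As a sanity check, the entry $(\vec Q_N)_{u,v}=\sum_e \vec B_{u,e}\overline{\vec B_{v,e}}\,w_e/(\delta_e\sqrt{d_ud_v})$ reproduces GeDi's head/tail phases $1$, $\pm i$, $1$.
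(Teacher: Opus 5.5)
Your proposal is correct and follows the paper's proof essentially step for step. It uses the same entrywise reduction under the frozen-prior, aux-off specialisation, stacks the two stages as $D_e^{-1/2}W^{1/2}\vec B^* D_v^{-1/2}X$ and $D_v^{-1/2}\vec B\,W^{1/2}D_e^{-1/2}$, composes them to $\vec Q_N X$, and applies the reparametrisation $(\Theta_0,\Theta_1)\mapsto(\Theta_0+\Theta_1,-\Theta_1)$ to reach $\phi(X\Theta_0^G+\vec L_N X\Theta_1^G)$; this map is an involution, so your direction and the paper's coincide. Your extra remarks on the sign of $w_e$, the conjugation convention, and complex-valued features cover points the paper leaves implicit.
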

\begin{proof}
  GeDi defines the Signless Laplacian and (Signed) Laplacian operators as
  \begin{equation}
    \vec{Q}_N
    =
    D_v^{-1/2}\,\vec{B}\,W\,D_e^{-1}\,\vec{B}^*\,D_v^{-1/2},
    \qquad
    \vec{L}_N = I-\vec{Q}_N,
    \label{eq:gedi-q-l-proof}
  \end{equation}
  with $\vec{B}$ defined as in Equation~\eqref{eq:B}.
  Its convolutional layer maps $X^{(\ell)}\in\mathbb{C}^{n\times c}$ to
  \begin{equation}
    X^{(\ell+1)}_{\mathrm{prior}}
    =
    \phi\!\left(X^{(\ell)}\Theta_0^G+\vec{L}_N X^{(\ell)}\Theta_1^G\right),
    \label{eq:gedi-conv-layer}
  \end{equation}
  for learnable matrices $\Theta_0^G,\Theta_1^G$.

  Given that $*$ denotes the complex conjugate-adjoint operator, under the hypotheses of the theorem, $\hat\alpha^{\mathrm{main}}=\hat\alpha^{\mathrm{prior}}$, $\check\alpha^{\mathrm{main}}=\check\alpha^{\mathrm{prior}}$, and the auxiliary path is switched off. Therefore the node-to-hyperedge aggregation in
  Eq.~\eqref{eq:gadh-h} becomes, for every $e \in \mathcal{E}$,
  \begin{align*}
    \mathbf{h}^{(\ell+1)}_e
     & = \sum_{v \in H(e)} \sqrt{\tfrac{w_e}{\delta_e d_v}}\, \mathbf{x}^{(\ell)}_v
    \;+\; i \sum_{v \in T(e)} \sqrt{\tfrac{w_e}{\delta_e d_v}}\, \mathbf{x}^{(\ell)}_v
    = \sum_{v \in \mathcal{V}} \sqrt{\tfrac{w_e}{\delta_e d_v}}\, \vec{B}^*_{e,v}\, \mathbf{x}^{(\ell)}_v.
  \end{align*}
  Stacking across hyperedges into $H^{(\ell+1)} \in \mathbb{C}^{m \times c}$,
  \begin{equation}
    H^{(\ell+1)} \;=\; D_e^{-1/2}\, W^{1/2}\, \vec{B}^*\, D_v^{-1/2}\, X^{(\ell)}.
    \label{eq:proof-stage1}
  \end{equation}
  The hyperedge-to-node aggregation in Eq.~\eqref{eq:gadh-xprime} similarly gives
  \begin{align*}
    \mathbf{x}^{\prime(\ell+1)}_u
     & = \sum_{e \in \Gamma_H(u)} \sqrt{\tfrac{w_e}{d_u \delta_e}}\, \mathbf{h}^{(\ell+1)}_e
    \;-\; i \sum_{e \in \Gamma_T(u)} \sqrt{\tfrac{w_e}{d_u \delta_e}}\, \mathbf{h}^{(\ell+1)}_e
    = \sum_{e \in \mathcal{E}} \sqrt{\tfrac{w_e}{d_u \delta_e}}\, \vec{B}_{u,e}\, \mathbf{h}^{(\ell+1)}_e,
  \end{align*}
  which in matrix form reads
  \begin{equation}
    X^{\prime(\ell+1)} \;=\; D_v^{-1/2}\, \vec{B}\, W^{1/2}\, D_e^{-1/2}\, H^{(\ell+1)}.
    \label{eq:proof-stage2}
  \end{equation}
  Composing Eqs.~\eqref{eq:proof-stage1} and~\eqref{eq:proof-stage2} and using that
  $W, D_e, D_v$ are real diagonal,
  \begin{align*}
    X^{\prime(\ell+1)}
     & = D_v^{-1/2}\, \vec{B}\, W^{1/2}\, D_e^{-1/2}\, D_e^{-1/2}\, W^{1/2}\, \vec{B}^*\, D_v^{-1/2}\, X^{(\ell)} \\
     & = D_v^{-1/2}\, \vec{B}\, W\, D_e^{-1}\, \vec{B}^*\, D_v^{-1/2}\, X^{(\ell)}
    \;=\; \vec{Q}_N\, X^{(\ell)},
  \end{align*}
  where the last equality is exactly the definition of $\vec{Q}_N$ in
  Eq.~\eqref{eq:gedi-q-l-proof}. Hence the proposed aggregation mechanism recovers
  the GeDi convolution operator $\vec{Q}_N$. Combining this with the update in (iii)
  yields
  \begin{equation*}
    X^{(\ell+1)} \;=\; \phi\!\left(X^{(\ell)} \Theta_0 \;+\; \vec{Q}_N\, X^{(\ell)} \Theta_1\right).
  \end{equation*}
  Given any GeDi parameters $(\Theta_0^G,\Theta_1^G)$, choose
  $\Theta_0=\Theta_0^G+\Theta_1^G$ and $\Theta_1=-\Theta_1^G$. Then
  \begin{align*}
    \phi\!\left(X^{(\ell)} \Theta_0 + \vec{Q}_N X^{(\ell)}\Theta_1\right)
     & =
    \phi\!\left(X^{(\ell)}(\Theta_0^G+\Theta_1^G)
    - \vec{Q}_N X^{(\ell)}\Theta_1^G\right)    \\
     & =
    \phi\!\left(X^{(\ell)}\Theta_0^G
    + (I-\vec{Q}_N)X^{(\ell)}\Theta_1^G\right) \\
     & =
    \phi\!\left(X^{(\ell)}\Theta_0^G+\vec{L}_N X^{(\ell)}\Theta_1^G\right),
  \end{align*}
  which is the GeDi-HNN layer in Eq.~\eqref{eq:gedi-conv-layer}.
\end{proof}

\begin{theorem}\label{thm:gedi-star-expansion-appendix}
\Paste{cp-thm:gedi-star-expansion}
\end{theorem}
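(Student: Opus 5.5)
The statement is a non-equality claim ("in general, neither \dots equals $\vec Q_N$"), so the plan is to exhibit one explicit small directed hypergraph on which both comparisons fail. First, write the GeDi operators on the star-expanded digraph $\mathcal G_\star$ concretely. View $\mathcal G_\star$ as a directed hypergraph whose hyperedges are its arcs: each arc $a\to b$ has tail $\{a\}$ and head $\{b\}$. Its incidence matrix $\vec B_\star$ then has, in the column for arc $a\to b$, entry $-i$ at $a$ and $1$ at $b$, with all other entries zero. Taking unit weights and $\delta=2$, we get $\vec Q_\star = D_\star^{-1/2}\vec B_\star \tfrac12 \vec B_\star^* D_\star^{-1/2}$, where $D_\star$ holds the node degrees in $\mathcal G_\star$, and $\vec L_\star=I-\vec Q_\star$. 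The same computation as in the proof of \Cref{thm:gedi-recovery} gives $(\vec B_\star\vec B_\star^*)_{a,b}=\sum_{\text{arcs}}\vec B_{a,\cdot}\overline{\vec B_{b,\cdot}}$. An arc $a\to b$ therefore contributes $-i\cdot 1=-i$ at position $(a,b)$, $+i$ at $(b,a)$, and real diagonal terms.

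Next I would dispose of the one-step claim structurally. On $\mathcal G_\star$, every arc joins an original vertex to an auxiliary node $h_e$. Hence the off-diagonal part of $\vec Q_\star$ restricted to $\mathcal V\times\mathcal V$ vanishes, and $(\vec Q_\star)_{\mathcal V,\mathcal V}$ is diagonal. In contrast, $\vec Q_N$ has a nonzero off-diagonal entry $(u,v)$ whenever $u,v$ share a hyperedge. So any hyperedge with two or more vertices already gives $(\vec Q_\star)_{\mathcal V,\mathcal V}\neq \vec Q_N$.

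For the two-hop claim I would take the smallest nontrivial example: one hyperedge $e$ with $T(e)=\{t\}$, $H(e)=\{r\}$, and $w_e=1$. On $\mathcal H$ we have $\vec B_{\cdot,e}=(-i,1)^\top$, $d_t=d_r=1$ and $\delta_e=2$. This gives $(\vec Q_N)_{t,r}=\tfrac12(-i)\overline{1}=-\tfrac i2$. On $\mathcal G_\star$ there are two arcs, $t\to h_e$ and $h_e\to r$, with degrees $d_t=d_r=1$ and $d_{h_e}=2$. From the arc contributions above, $(\vec Q_\star)_{t,h_e}=-\tfrac{i}{2\sqrt2}$ and $(\vec Q_\star)_{h_e,r}=-\tfrac{i}{2\sqrt2}$. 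The two-hop entry is then
\[
(\vec Q_\star^2)_{t,r}=\sum_k(\vec Q_\star)_{t,k}(\vec Q_\star)_{k,r}=(\vec Q_\star)_{t,h_e}(\vec Q_\star)_{h_e,r}=\left(-\tfrac{i}{2\sqrt2}\right)^2=-\tfrac18 ,
\]
using that $(\vec Q_\star)_{t,r}=0$, while the diagonal terms $k\in\{t,r\}$ also vanish because they multiply $(\vec Q_\star)_{t,r}$ or $(\vec Q_\star)_{r,r}\cdot 0$. This is real, whereas $(\vec Q_N)_{t,r}$ is purely imaginary. The phases differ by $\pi/2$: the star expansion composes two directed phases $(-i)(-i)=-1$, while GeDi on $\mathcal H$ applies the phase once, through $\vec B\vec B^*$. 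Moduli and diagonal blocks also differ, but the phase mismatch is the robust point. It cannot be repaired by any positive rescaling or degree normalisation, so the conclusion does not depend on normalisation conventions.

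The main obstacle is pinning down exactly which "GeDi Signless/Signed Laplacian on a directed graph" is meant, since conventions could change the scalar factors. I would handle this by stating the argument only in terms of the phase of the $(t,r)$ entry, which is determined by the incidence phases $1,-i$ alone and is invariant to real positive normalisations. The same phase argument applies to $\vec L_\star$, since $I$ does not touch off-diagonal entries ($(\vec L_\star^2)_{t,r}$ equals $(\vec Q_\star^2)_{t,r}$ up to the real cross terms, which vanish here). It then suffices to remark that this example embeds in any hypergraph containing a hyperedge with nonempty head and tail, which justifies "in general".
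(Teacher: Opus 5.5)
Your proposal is correct and follows essentially the paper's route: the structural observation that $(\vec Q_\star)_{\mathcal V,\mathcal V}$ has no off-diagonal entries, plus the one-hyperedge example $T(e)=\{t\}$, $H(e)=\{r\}$. There the two-hop star entry $(\vec Q_\star)_{t,h_e}(\vec Q_\star)_{h_e,r}=-\tfrac18$ is real (two phases $-i$ composing to $-1$), while $\vec Q_N$ has the purely imaginary entry $-\tfrac i2$; the paper computes the conjugate entry $(r,t)=\tfrac i2$ instead. One caveat: two of your general side claims can fail by cancellation, namely that $(\vec Q_N)_{u,v}\neq 0$ whenever $u,v$ share a hyperedge, and that the example ``embeds'' in any larger hypergraph. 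For instance, two oppositely oriented hyperedges on $\{t,r\}$ give $(\vec Q_N)_{t,r}=0$ and even $(\vec Q_\star)_{\mathcal V,\mathcal V}=\vec Q_N$. Neither claim is needed, though, since your single explicit counterexample already proves the ``in general'' statement.
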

\begin{proof}
  One may ask whether the same operator can be obtained by replacing each directed
  hyperedge $e$ with an auxiliary node $h_e$, connecting every tail node to $h_e$
  and then connecting $h_e$ to every head node. Formally, define the
  star-expanded directed graph
  \begin{equation*}
    \mathcal{G}_\star=(\mathcal{V}_\star,\mathcal{A}_\star),
    \qquad
    \mathcal{V}_\star=\mathcal{V}\cup\{h_e:e\in\mathcal{E}\},
  \end{equation*}
  with ordinary directed edges
  \begin{equation*}
    \mathcal{A}_\star
    =
    \{a_{t,e}=(t,h_e):t\in T(e)\}
    \cup
    \{a_{e,r}=(h_e,r):r\in H(e)\}.
  \end{equation*}
  Each $a=(p,q)\in\mathcal{A}_\star$ is now an ordinary directed edge, i.e.,
  $T_\star(a)=\{p\}$ and $H_\star(a)=\{q\}$. Let
  $\vec{B}_\star$ be the corresponding GeDi incidence matrix, with
  $(\vec{B}_\star)_{p,a}=-i$ and $(\vec{B}_\star)_{q,a}=1$, and let
  $W_\star$, $D_{\mathcal{V}_\star}$, and $D_{\mathcal{A}_\star}$ be the edge
  weights, vertex degrees, and edge degrees of this ordinary directed graph. The
  GeDi adjacency and Laplacian on the star expansion are
  \begin{equation*}
    \vec{Q}_\star
    =
    D_{\mathcal{V}_\star}^{-1/2}
    \vec{B}_\star W_\star D_{\mathcal{A}_\star}^{-1}
    \vec{B}_\star^*
    D_{\mathcal{V}_\star}^{-1/2},
    \qquad
    \vec{L}_\star=I-\vec{Q}_\star .
  \end{equation*}
  Using $\vec{L}_\star$ or $\vec{Q}_\star$ on this expanded graph is not
  equivalent to the hypergraph GeDi operator $\vec{Q}_N$. A single GeDi step on
  the star-expanded graph only mixes original nodes with auxiliary nodes, so it
  does not create the off-diagonal interactions between distinct original nodes
  that lie in the same hyperedge. If one instead compares the two-step adjacency
  operator $\vec{Q}_\star^2$, the complex phases still do not match.

  Ignoring positive degree and weight normalisations, the phase of the hypergraph
  GeDi contribution from input node $v$ to output node $u$ through hyperedge $e$ is
  \[
    c_e(u,v)=\vec{B}_{u,e}\vec{B}^*_{e,v}
    =
    \begin{cases}
      1,  & u,v\in H(e)\text{ or }u,v\in T(e), \\
      i,  & u\in H(e),\, v\in T(e),            \\
      -i, & u\in T(e),\, v\in H(e).
    \end{cases}
  \]
  In the star-expanded graph, a two-hop interaction from a tail node
  $v\in T(e)$ to a head node $u\in H(e)$ follows $v\to h_e\to u$. The two ordinary
  directed edges contribute phases $i$ and $i$, giving $i\cdot i=-1$ instead of
  $i$. Similarly, the reverse cross-role path gives $(-i)\cdot(-i)=-1$ instead of
  $-i$. Thus the star expansion changes the cross-role phases.

  The smallest counterexample has two vertices and one directed hyperedge:
  \[
    \mathcal{V}=\{t,r\},
    \qquad
    T(e)=\{t\},
    \qquad
    H(e)=\{r\},
    \qquad
    w_e=1.
  \]
  Then $\delta_e=2$, $d_t=d_r=1$, and the original hypergraph operator has
  \[
    (\vec{Q}_N)_{r,t}
    =
    \frac{1}{2}\vec{B}_{r,e}\vec{B}^*_{e,t}
    =
    \frac{i}{2}.
  \]
  Replace $e$ by the directed path $t\to h_e\to r$, with ordinary directed edges
  $a=(t,h_e)$ and $b=(h_e,r)$. A single GeDi step on this expanded graph has zero
  direct coefficient from $t$ to $r$. The two-step contribution through $h_e$ has
  phase
  \[
    \left(\vec{B}_{r,b}\vec{B}^*_{b,h_e}\right)
    \left(\vec{B}_{h_e,a}\vec{B}^*_{a,t}\right)
    =
    (1\cdot i)(1\cdot i)
    =
    -1.
  \]
  With the same GeDi normalisations on the expanded graph,
  $d_t=d_r=1$, $d_{h_e}=2$, and $\delta_a=\delta_b=2$, hence
  \[
    (\vec{Q}_{\star})_{h_e,t}
    =
    \frac{i}{2\sqrt{2}},
    \qquad
    (\vec{Q}_{\star})_{r,h_e}
    =
    \frac{i}{2\sqrt{2}},
    \qquad
    (\vec{Q}_{\star})_{r,h_e}(\vec{Q}_{\star})_{h_e,t}
    =
    -\frac{1}{8}.
  \]
  This differs from the hypergraph coefficient $i/2$ already in phase.

  The correct equivalence is instead the two-stage computation of the original
  hypergraph operator $\vec{Q}_N$. The symbol $h_e$ can be used as a temporary
  hyperedge variable, but it should not be interpreted as a new vertex of an
  ordinary directed graph. The two stages must use the conjugate incidence
  $\vec{B}^*$ from nodes to hyperedges and the incidence $\vec{B}$ from hyperedges
  back to nodes:
  \begin{equation*}
    m_e
    =
    \sum_{v\in e}\vec{B}^*_{e,v}\frac{x_v}{\sqrt{d_v}},
    \qquad
    (\vec{Q}_N x)_u
    =
    \frac{1}{\sqrt{d_u}}
    \sum_{e\ni u}
    \frac{w_e}{\delta_e}\vec{B}_{u,e}m_e.
  \end{equation*}
  This is exactly the $\vec{B}^*$-then-$\vec{B}$ factorisation of
  $\vec{Q}_N$. Hence the attentive message-passing reduction is equivalent to
  GeDi when the intermediate hyperedge variables are computed with these frozen
  complex coefficients, but not when one first converts the hypergraph into an
  ordinary directed graph and then applies GeDi to that graph.
\end{proof}

\section{Experimental Setup}\label{app:details}

\subsection{Code Repository}

The code written for this research work is attached as supplementary material and will be freely distributed under the Apache 2.0 licence once accepted.\footnote{\url{https://www.apache.org/licenses/LICENSE-2.0}}

The baseline models used in this paper were taken from the publicly available implementations from GeDi-HNN \url{https://github.com/Stefa1994/GeDi-HNN} with Apache License 2.0.

\subsection{Hardware Details}
All experiments are carried out on two different workstations: one equipped with four NVIDIA RTX~2080 GPUs (11~GB each) and Intel Xeon Gold 6248R processors, and another featuring an Intel Xeon CPU E5-2630 and two NVIDIA TITAN Xp GPUs with 12~GB of VRAM each. %We utilized the WandB platform to monitor training procedures.

The runtime required for a given hyperparameter setting, which involves training and validation over 10 different dataset splits, depends on both the dataset and the backbone architecture. For example, on the \texttt{Telegram} dataset, each hyperparameter setting of GeDi+PANDA requires on average 14.2 mins, while Phenomnn+PANDA requires on average 21.3 mins. On the other hand, GeDi+PANDA requires only 7.2 mins on average on the \texttt{Cornell} dataset, while it requires on average 40.4 mins on the \texttt{Roman-empire} dataset. On average, each setting for GeDi+PANDA requires 12.5 mins.

\subsection{Hyperparameter tuning}\label{app:tuning}
We utilise the WandB platform to monitor training procedures and to carry out hyperparameter tuning. All models are trained for up to 500 epochs with early stopping set to 200 epochs. We employ Adam~\citep{kingma2017adammethodstochasticoptimization} for optimising the model parameters with \(\text{lr} \in [0.001, 0.02]\), \(\text{wd}\in\{0,\,5\times10^{-5},5\times10^{-4},5\times10^{-3}\}\). For all the models, we adopt a dropout \(\in\{0.1,0.2,\ldots,0.9\}\), and for each model that has a selectable number of layers for the final classifier we fix it to 2.
For each baseline, we select a range of parameters consistent with those investigated in their respective original works:

\begin{itemize}
  \item HGNN: basic blocks \(\{1, 2, 3, 4, 5\}\); MLP hidden width \(\{64,128,256,512\}\), classifier width \(\{64,128,256\}\).
  \item HyperGCN: basic blocks \(\{1, 2, 3, 4, 5\}\); HyperGCN-Fast \(\in \{\text{True}, \text{False}\}\); Using-Mediators \(\in \{\text{True}, \text{False}\}\).
  \item PhenomNN: hidden width \(\{64,128,256,512\}\); \(\lambda_{0}\in \{0, 0.1, 0.2, \ldots, 100.0\}\); \(\lambda_{1}\in \{0, 0.1, 0.2, \ldots, 100.0\}\); propagation steps \(\{8,16\}\); (H, HisI, twoHgamma) \(\in \{(\text{False}, \text{False}, \text{False}), (\text{True}, \text{False}, \text{False}), (\text{True}, \text{True}, \text{False}), (\text{True}, \text{False}, \text{True})\}\).
  \item GeDi-HNN: convolutional layers \(\{1,2,3,4,5\}\); MLP hidden width \(\{64,128,256,512\}\); classifier width \(\{64,128,256\}\); other complex \(\in \{\text{True}, \text{False}\}\).
\end{itemize}

For our plug-in, PANDA, and its ablations, we use the following additional hyperparameters on top of the backbone's hyperparameters:

\begin{itemize}
    \item GAT attention version \(\in \{\text{v1}, \text{v2}\}\); 
    \item Attention heads \(\in \{1, 2, \ldots, 8\}\); 
    \item Distinct attention weights for source and destination \(\in \{\text{True}, \text{False}\}\).
\end{itemize}

Calculating the attention scores, as described in Equation~\eqref{eq:alpha_att_ev}, requires knowledge of the previous layer's hyperedge representation.
Since all the considered datasets do not start with any initial hyperedge representation, we calculate the initial hyperedge representation using the backbone's vanilla scores.
For consistency, we use the same strategy in subsequent layers.

\subsection{Dataset Descriptions}\label{app:datasets-description}

The statistics of the 8 real-world datasets are provided in Table~\ref{tab:dataset_statistics}.
The datasets used for the experiments are:
\begin{itemize}
    \item  \texttt{Citeseer}, \texttt{Pubmed} 
Standard citation benchmarks in which vertices represent research papers and directed edges encode citation links. Node attributes are constructed from text using bag-of-words representations of the documents.

\item \texttt{email-Enron}
A corporate email communication network built from Enron’s message logs. Nodes correspond to email accounts and edges record sender interactions. As ground-truth labels are unavailable, we derive node classes via the Spinglass community detection method~\cite{Reichardt_2006}.

\item \texttt{Texas, Cornell}
WebKB datasets collected from university computer science departments. Each node is a webpage, hyperlinks are edges, and features are bag-of-words over page content. Pages are annotated into five categories: student, project, course, staff, and faculty.

\item \texttt{Telegram}
An interaction network extracted from Telegram, capturing exchanges among users who propagate political content.

\item \texttt{Squirrel}
The Squirrel dataset consists of articles from the English Wikipedia (December 2018). Nodes represent articles, and edges represent mutual links between them. Node features indicate the presence of specific nouns in the articles. Nodes are grouped into five categories based on the original regression targets.

\item \texttt{Roman-empire} The dataset is based on the \emph{Roman Empire} article from English Wikipedia, which was selected since it is one of the longest articles on Wikipedia and it follows the construction proposed by \cite{platonov2023a}. Each node in the graph corresponds to one (non-unique) word in the text.
\end{itemize}

\begin{table}[htb!]
\caption{Statistics of the datasets used in our experiments. Reported are the number of nodes, features, hyperedges, and classes, as well as the average hyperedge size ($\lvert e \rvert$), the average node degree ($\lvert v \rvert$), the clique-expansion (CE) homophily computed as in \cite{wang2022equivariant}, and the node and hyperedge homophily as calculated in \cite{heterophilyhypergraphs}.}
\label{tab:dataset_statistics}
\resizebox{\textwidth}{!}{%
\begin{tabular}{lrrrrrrrrr}
\toprule
\textbf{Dataset} & \textbf{\# Nodes} & \textbf{\# Features} & \textbf{\# Hyperedges} & \textbf{\# Classes} & \textbf{avg $|e|$} & \textbf{avg $|v|$} & \textbf{CE Hom.}  & \textbf{Node Hom.} & \textbf{Hyperedge Hom.} \\
\midrule
Roman-empire      & 22,662 & 300    & 22,662 & 18 & 2.73  & 2.73  & 0.2363 & 0.0582 & 0.0582\\
Squirrel    & 2,223  & 2,089  & 2,060  & 5  & 23.81 & 22.07 & 0.2448 & 0.2081 & 0.2152\\
Telegram    & 245    & 1      & 183    & 4  & 49.70 & 37.12 & 0.2854 & 0.3459 & 0.3784\\
email-Enron & 143    & --     & 139    & 7  & 19.58 & 19.03 & 0.3251 & 0.4163 & 0.4314\\
Cornell     & 183    & 1,703  & 96     & 5  & 4.07  & 2.14  & 0.4200 & 0.4674 & 0.4534\\
Texas       & 183    & 1,703  & 110   & 5  & 3.81 & 2.29 & 0.5049 & 0.5076 & 0.3172\\
Citeseer    & 3,312  & 3,703  & 1,951  & 6  & 3.35 & 1.98  & 0.7947 & 0.5771 & 0.6814\\
{Pubmed} & 19,717 & 500 & 7,963 & 3 & 4.35 & 1.76 & 0.8012 & 0.5499 & 0.7765\\
\bottomrule
\end{tabular}
}
\end{table}

\section{Computational and Parameter Comparison}
\label{sec:appendix-overheads}
We report the computational and parameter overhead of the plug-in here. We note that the plug-in introduces various parameters to the model, leading to a theoretical increase in computational and parameter overhead. However, since we run a hyperparameter search, the improved architecture can allow for faster and more parameter efficient models.

\begin{table}[tbp!]
\centering
\small
\caption{Parameter counts for PANDA and the vanilla models across different datasets and different backbone architectures.}
\label{tab:parameter-overhead}
\setlength{\tabcolsep}{3pt}
\renewcommand{\arraystretch}{0.95}
\resizebox{\textwidth}{!}{%
\begin{tabular}{l c c c c c c c c}
\toprule
\bfseries Model
& \texttt{Telegram} 
& \texttt{Citeseer}
& \texttt{Pubmed} 
& \texttt{Roman-empire} 
& \texttt{Cornell} 
& \texttt{Texas}
& \texttt{Squirrel}
& \texttt{email-Enron} \\
\midrule

GeDi (Vanilla) & 8.922 M & \textbf{30.864 M} & \textbf{0.578 M} & 5.383 M & \textbf{1.461} M & 4.015 M & 11.533 M & 26.884 M \\
GeDi + PANDA & \textbf{0.765 M} & 35.682 M & 6.714 M & \textbf{0.421 M} & 8.470 M & \textbf{0.638 M} & \textbf{1.902 M} & \textbf{5.880 M} \\

\midrule

HGNN (Vanilla) & \textbf{0.134 M} & \textbf{13.734 M} & 1.932 M & \textbf{1.761 M} & \textbf{10.861 M} & \textbf{3.168 M} & 3.860 M & \textbf{0.035 M} \\
HGNN + PANDA & 0.163 M & 15.720 M & \textbf{1.400 M} & 2.590 M & 12.358 M & 6.638 M & \textbf{3.366 M} & 0.139 M \\

\midrule

HyperGCN (Vanilla) & 0.695 M & 3.154 M & \textbf{0.059 M} & \textbf{0.046 M} & \textbf{0.051} M & \textbf{0.026 M} & 0.579 M & 0.396 M \\
HyperGCN + PANDA & \textbf{0.027 M} & \textbf{1.933 M} & 0.060 M & 0.138 M & 0.225 M & 0.060 M & \textbf{0.275 M} & \textbf{0.052 M} \\

\midrule

PhenomNN (Vanilla) & \textbf{0.001 M} & 1.662 M & 0.516 M & \textbf{0.135 M} & 5.464 M & \textbf{0.439 M} & 1.597 M & 0.407 M \\
PhenomNN + PANDA & 1.852 M & \textbf{1.534 M} & \textbf{0.066 M} & 0.150 M & \textbf{1.190 M} & 0.974 M & \textbf{0.345 M} & \textbf{0.230 M} \\

\bottomrule
\end{tabular}
}
\end{table}

\begin{table}[tbp!]
\centering
\small
\caption{Wall-clock inference times (mean $\pm$ standard deviation) for PANDA and the vanilla models across different datasets and different backbone architectures.}
\label{tab:computational-overhead}
\setlength{\tabcolsep}{3pt}
\renewcommand{\arraystretch}{0.95}
\resizebox{\textwidth}{!}{%
\begin{tabular}{l c c c c c c c c}
\toprule
\bfseries Model
& \texttt{Telegram} 
& \texttt{Citeseer}
& \texttt{Pubmed} 
& \texttt{Roman-empire} 
& \texttt{Cornell} 
& \texttt{Texas}
& \texttt{Squirrel}
& \texttt{email-Enron} \\
\midrule

GeDi (Vanilla) & 20.3 $\pm$ 0.1 ms & 50.9 $\pm$ 1.9 ms & 24.8 $\pm$ 0.2 ms & 203.8 $\pm$ 1.1 ms & 7.2 $\pm$ 0.0 ms & 5.5 $\pm$ 0.1 ms & 41.2 $\pm$ 0.9 ms & 12.8 $\pm$ 0.1 ms \\
GeDi + PANDA & 10.9 $\pm$ 0.0 ms & 122.6 $\pm$ 0.7 ms & 171.7 $\pm$ 1.0 ms & 115.9 $\pm$ 1.9 ms & 9.7 $\pm$ 0.0 ms & 8.2 $\pm$ 0.1 ms & 28.4 $\pm$ 0.1 ms & 33.2 $\pm$ 0.1 ms \\

\midrule

HGNN (Vanilla) & 1.2 $\pm$ 0.0 ms & 10.5 $\pm$ 0.0 ms & 16.8 $\pm$ 0.0 ms & 22.1 $\pm$ 0.0 ms & 1.9 $\pm$ 0.0 ms & 1.3 $\pm$ 0.0 ms & 5.9 $\pm$ 0.0 ms & 1.4 $\pm$ 0.0 ms \\
HGNN + PANDA & 5.7 $\pm$ 0.2 ms & 22.5 $\pm$ 1.4 ms & 42.1 $\pm$ 0.4 ms & 120.7 $\pm$ 0.5 ms & 5.5 $\pm$ 0.0 ms & 3.9 $\pm$ 0.0 ms & 23.6 $\pm$ 1.1 ms & 20.2 $\pm$ 0.0 ms \\

\midrule

HyperGCN (Vanilla) & 360.9 $\pm$ 69.2 ms & 74.3 $\pm$ 1.8 ms & 2.5 $\pm$ 0.0 ms & 1.3 $\pm$ 0.0 ms & 2.8 $\pm$ 0.2 ms & 2.7 $\pm$ 0.1 ms & 3.5 $\pm$ 0.0 ms & 5.9 $\pm$ 0.0 ms \\
HyperGCN + PANDA & 239.0 $\pm$ 5.2 ms & 50.8 $\pm$ 0.7 ms & 20.4 $\pm$ 0.0 ms & 31.1 $\pm$ 0.2 ms & 6.8 $\pm$ 0.0 ms & 3.6 $\pm$ 0.1 ms & 8.5 $\pm$ 0.0 ms & 17.1 $\pm$ 0.0 ms \\

\midrule

PhenomNN (Vanilla) & 25.1 $\pm$ 0.4 ms & 26.8 $\pm$ 0.1 ms & 178.3 $\pm$ 0.0 ms & 105.3 $\pm$ 7.5 ms & 15.5 $\pm$ 0.0 ms & 15.2 $\pm$ 0.1 ms & 132.4 $\pm$ 0.1 ms & 200.2 $\pm$ 17.9 ms \\
PhenomNN + PANDA & 103.4 $\pm$ 0.1 ms & 116.8 $\pm$ 0.2 ms & 75.4 $\pm$ 3.1 ms & 143.9 $\pm$ 5.4 ms & 99.7 $\pm$ 1.5 ms & 102.6 $\pm$ 0.6 ms & 94.9 $\pm$ 0.3 ms & 355.1 $\pm$ 7.9 ms \\

\bottomrule
\end{tabular}
}
\end{table}

Tables \ref{tab:parameter-overhead} and \ref{tab:computational-overhead} report the parameter and computational overheads
of PANDA and the vanilla models for the different backbone architectures across different datasets.
We see that despite PANDA itself adding various components, 50\% of the time, our hyperparameter search leads us to a
more parameter efficient model.
We also see that 20\% of the time, the PANDA variant has faster inference speed than the baseline model.

\section{Results on Additional Datasets}
\label{sec:appendix-additional-eval}
Due to space constraints, we only report the results of eight datasets in \Cref{tab:eval-merged}.
We provide results for additional datasets here.
We include results over \texttt{Wisconsin}~\citep{pei2020geomgcngeometricgraphconvolutional},
\texttt{email-EU}~\citep{benson-simplicial} and \texttt{Questions}~\citep{platonovcritical}.
Questions consists of 48,921 nodes and acts as a scalability test.

\Cref{tab:eval-additional} shows the results on these additional datasets. We see that for \texttt{Wisconsin}
and \texttt{email-EU}, HyperGCN benefits massively from PANDA, while the other three backbones see small but consistently positive gains or almost matching performance.
For \texttt{Questions} dataset, PANDA improves the performance across the board.

\begin{table}[tbp!]
\centering
\small
\caption{Node classification results reporting mean accuracies $\pm$ standard deviation for \texttt{Wisconsin} and \texttt{email-EU}. For \texttt{Questions}, we report the percentage AUC-ROC for each method, as this is a highly unbalanced dataset. We compare our PANDA plug-in and the vanilla model with different backbone architectures.}
\label{tab:eval-additional}
\setlength{\tabcolsep}{3pt}
\begin{tabular}{c l c c c}
\toprule
& \bfseries Variant
& \texttt{Wisconsin} 
& \texttt{email-EU}
& \texttt{Questions} \\
\midrule

\multirow{3}{*}{\rotatebox[origin=c]{90}{GeDi}}

& Vanilla & 87.45 $\pm$ 5.203 & 53.48 $\pm$ 4.53 & 78.29 $\pm$ 1.026 \\
& PANDA & 87.84 $\pm$ 4.454 & 52.87 $\pm$ 5.60 & 79.69 $\pm$ 1.21 \\

\cmidrule{2-5}

& $\Delta$(PANDA - Vanilla) & \textcolor{blue}{+0.39} & \textcolor{red}{-0.61} & \textcolor{blue}{+1.40} \\

\midrule\midrule

\multirow{3}{*}{\rotatebox[origin=c]{90}{HGNN}}

& Vanilla & 85.88 $\pm$ 5.173 & 52.88 $\pm$ 1.83 & 78.25 $\pm$ 1.122 \\
& PANDA & 85.88 $\pm$ 5.462 & 52.83 $\pm$ 6.80 & 79.16 $\pm$ 1.126 \\

\cmidrule{2-5}

& $\Delta$(PANDA - Vanilla) & \textcolor{yellow}{+0.00} & \textcolor{red}{-0.05} & \textcolor{blue}{+0.91} \\

\midrule\midrule

\addlinespace[0.5em]

\multirow{3}{*}[0.4em]{\rotatebox[origin=c]{90}{HyperGCN}}

& Vanilla & 60.59 $\pm$ 3.965 & 32.51 $\pm$ 2.99 & 72.97 $\pm$ 0.955 \\
& PANDA & 80.78 $\pm$ 6.188 & 43.32 $\pm$ 5.02 & 74.05 $\pm$ 0.659 \\

\cmidrule{2-5}

& $\Delta$(PANDA - Vanilla) & \textcolor{blue}{+20.19} & \textcolor{blue}{+10.81} & \textcolor{blue}{+1.08} \\[0.5em]

\midrule\midrule

\addlinespace[0.5em]

\multirow{3}{*}[0.4em]{\rotatebox[origin=c]{90}{PhenomNN}}

& Vanilla & 85.49 $\pm$ 5.203 & 42.55 $\pm$ 3.40 & 76.02 $\pm$ 0.938 \\
& PANDA & 85.88 $\pm$ 4.094 & 47.73 $\pm$ 6.83 & 76.15 $\pm$ 2.849 \\

\cmidrule{2-5}

& $\Delta$(PANDA - Vanilla) & \textcolor{blue}{+0.39} & \textcolor{blue}{+5.18} & \textcolor{blue}{+0.13} \\[0.5em]

\bottomrule
\end{tabular}
\end{table}

\begin{table}[tbp!]
\centering
\small
\caption{Parameter counts and wall-clock inference times (mean $\pm$ standard deviation) for PANDA and the vanilla models across different backbone architectures on \texttt{Questions} dataset.}
\label{tab:computational-parameter-questions}
\setlength{\tabcolsep}{3pt}
\begin{tabular}{l c c}
\toprule
\bfseries Model
& \bfseries Parameter Count 
& \bfseries Inference Time \\
\midrule

GeDi (Vanilla) & 0.511 M & 195.7 $\pm$ 0.8 ms \\
GeDi + PANDA & \textbf{0.427 M} & \textbf{72.8 $\pm$ 0.2 ms} \\

\midrule

HGNN (Vanilla) & 1.463 M & 53.4 $\pm$ 1.3 ms \\
HGNN + PANDA & \textbf{0.108 M} & \textbf{22.2 $\pm$ 0.0 ms} \\

\midrule

HyperGCN (Vanilla) & 0.213 M & \textbf{9.3 $\pm$ 0.0 ms} \\
HyperGCN + PANDA & \textbf{0.032 M} & 13.1 $\pm$ 0.0 ms \\

\midrule

PhenomNN (Vanilla) & 0.559 M & 1282.9 $\pm$ 83.7 ms \\
PhenomNN + PANDA & \textbf{0.041 M} & \textbf{431.4 $\pm$ 40.3 ms} \\

\bottomrule
\end{tabular}
\end{table}

\Cref{tab:computational-parameter-questions} shows the parameter and computational overhead for PANDA and the vanilla
models for different backbones on the \texttt{Questions} dataset. We notice that PANDA is consistently more parameter
efficient than the vanilla models across the backbones. We also find that PANDA leads to much faster inference for
all the backbones, except for HyperGCN, where it is 1.4 $\times$ slower. In fact, it is more than 2 $\times$ faster
than the vanilla model for all the other backbones.

\section{Embedding Stability and Degree Sensitivity}
\label{sec:appendix-stability}
The introduction of auxiliary paths in PANDA can lead to explosion or contraction of the node embedding norms.
Since this auxiliary path is not normalised, it means the model may be degree-sensitive, which in turn
may lead to disproportionate growth of the node embedding norms. We perform experiments to investigate whether PANDA
suffers from these phenomena.

\Cref{tab:stability} shows the average L2 norm growth of the node embeddings for GeDi and GeDi + PANDA
across layers for different datasets. We do notice any clear trend of increased instability due to the
PANDA plug-in: PANDA leads to comparable norms on Citeseer, Roman-empire and Cornell, larger norms on Telegram, Pubmed and email-Enron, and smaller ones on Texas and Squirrel.

\Cref{tab:degree-sensitivity} shows the average coefficient of variation for the L2 norm growth of the node embeddings for GeDi and GeDi + PANDA across layers for different datasets. We find that PANDA does not lead to disproportionate growth of node embedding norms. In fact, for almost half of the datasets, it exhibits a smaller coefficient of variation than the vanilla model.

\begin{table}[tbph!]
\centering
\small
\caption{Average L2 norm growth of the node embeddings across layers for different datasets. A value of 1 means the L2 norm does not change across layers, <1 means the embeddings are contracting, while >1 means that they are expanding.}
\label{tab:stability}
\setlength{\tabcolsep}{3pt}
\renewcommand{\arraystretch}{0.95}
\resizebox{\textwidth}{!}{%
\begin{tabular}{l c c c c c c c c}
\toprule
\bfseries Model
& \texttt{Telegram} 
& \texttt{Citeseer}
& \texttt{Pubmed} 
& \texttt{Roman-empire} 
& \texttt{Cornell} 
& \texttt{Texas}
& \texttt{Squirrel}
& \texttt{email-Enron} \\
\midrule

GeDi (Vanilla) & 29.6782 & 1.0956 & 0.1784 & 1.2821 & 1.1428 & 1.0881 & 11.038 & 1.1938 \\
GeDi + PANDA & 61.9474 & 0.8936 & 0.9324 & 1.4452 & 1.5472 & 0.4988 & 1.8974 & 6.2111 \\

\bottomrule
\end{tabular}
}
\end{table}

\begin{table}[tbph!]
\centering
\small
\caption{Average coefficient of variation for the L2 norm growth of the node embeddings across layers for different datasets. A value of 0 means that each node's L2 norm grows equally, while a high value means that the increase is disproportionate.}
\label{tab:degree-sensitivity}
\setlength{\tabcolsep}{3pt}
\renewcommand{\arraystretch}{0.95}
\resizebox{\textwidth}{!}{%
\begin{tabular}{l c c c c c c c c}
\toprule
\bfseries Model
& \texttt{Telegram} 
& \texttt{Citeseer}
& \texttt{Pubmed} 
& \texttt{Roman-empire} 
& \texttt{Cornell} 
& \texttt{Texas}
& \texttt{Squirrel}
& \texttt{email-Enron} \\
\midrule

GeDi (Vanilla) & 1.711 & 1.848 & 1.430 & 1.124 & 1.143 & 1.462 & 1.174 & 1.178 \\
GeDi + PANDA & 1.226 & 1.956 & 1.663 & 0.990 & 1.236 & 2.324 & 1.761 & 0.961 \\

\bottomrule
\end{tabular}
}
\end{table}

\section{Extended analysis}
\label{sec:analysis-app}

We analyse the learned GeDi-PANDA routing variables on Citeseer, Cornell,
Roman-empire, Telegram, and Texas. For each dataset, we reload the ten saved
checkpoints corresponding to the ten data splits and evaluate all quantities in
inference mode. Reported accuracy values are means \(\pm\) standard deviations
over the ten checkpoints. Routing statistics are also reported as means
\(\pm\) standard deviations over the ten checkpoints. All standard
deviations use the unbiased (sample) estimator. For the multi-layer
Roman-empire model, routing statistics are first computed at each PANDA layer
and then summarised over all layer--checkpoint instances, so the reported
standard deviations mix layer-to-layer variation with checkpoint-to-checkpoint
variation and are not directly comparable to the single-layer datasets; when
this pooling hides substantial layer structure, we state the layerwise behaviour
explicitly.

Table~\ref{tab:analysis-symbols} collects the symbols used throughout this
section. Routing variables describe how each PANDA layer combines learned
attention, the structural prior, and the auxiliary path; auxiliary-path
quantities are computed at inference time on the test split, with both
per-incidence auxiliary gates forced to zero in the aux-off baseline.

\begin{table}[t]
    \centering
    \caption{Symbols used in the ``Extended Analysis'' section.}
    \label{tab:analysis-symbols}
    \footnotesize
    \setlength{\tabcolsep}{4pt}
    \renewcommand{\arraystretch}{1.15}
    \begin{tabular}{@{}lp{0.78\linewidth}@{}}
        \toprule
        \multicolumn{2}{@{}l}{\emph{Routing variables}}                                            \\
        \midrule
        \(\alpha^{\mathrm{main}},\,\alpha^{\mathrm{aux}}\)
         & Main and auxiliary incidence coefficients ($\hat{\alpha}$ in \(\VE\),
        $\check{\alpha}$ in \(\EV\)).                                                              \\
        \(\alpha^{\mathrm{gat}}\)
         & Learned GAT component of the main coefficient.                                          \\
        \(\beta^{\mathrm{prior}}\)
         & Per-incidence prior gate; large value \(\Rightarrow\)
        structural prior dominates main path.                                                      \\
        \(g\)
         & Post-sigmoid auxiliary gate.                                                            \\
        \(\eta_{\mathrm{unif}}^{\mathrm{gat}}\)
         & Fraction of non-singleton receiver groups with effectively
        uniform \(\alpha^{\mathrm{gat}}\).                                                         \\
        \(\rho_{s,R},\,\pi_{s,R}\)
         & Aux-to-main norm ratio and aux share of total norm, in
        stage \(s\in\{\VE,\EV\}\), role \(R\in\{H,T\}\).                                           \\
        \midrule
        \multicolumn{2}{@{}l}{\emph{Auxiliary-path quantities}}                                    \\
        \midrule
        \(\mathrm{Acc}^{\mathrm{full}},\,\mathrm{Acc}^{\mathrm{aux\text{-}off}}\)
         & Test accuracy of the full model and with both per-incidence
        auxiliary gates forced to zero.                                                            \\
        \(\Delta\mathrm{Acc.}\)
         & \(\mathrm{Acc}^{\mathrm{aux\text{-}off}}-\mathrm{Acc}^{\mathrm{full}}\)
        in percentage points (negative means closing the auxiliary
        path hurts).                                                                               \\
        \(\mathcal H_{\mathrm{CE}}\)
         & Dataset-level clique-expansion homophily~\citep{wang2023hypergraph}.                    \\
        \(y_u^{\mathrm{nbr}}\)
         & Majority class among \(u\)'s clique-expansion neighbours.                                \\
        \(q_{\mathrm{mis}}\)
         & Fraction of CE-valid test nodes
        \(u\in\mathcal T_{\mathrm{CE}}\) with
        \(y_u^{\mathrm{nbr}}\neq y_u\) (\emph{misleading neighbour}).                               \\
        \(q_{\mathrm{help}},\,q_{\mathrm{hurt}}\)
         & Fraction of test nodes the auxiliary path flips onto / off
        the true class.                                                                            \\
        \(q_{\mathrm{nbr\text{-}rescue}}\)
         & Fraction of all test nodes rescued from the misleading
        neighbour-majority class to the true class (aux-off predicts
        \(y_u^{\mathrm{nbr}}\), full predicts \(y_u\)); on the same
        \(|\mathcal T|\) scale as \(q_{\mathrm{help}}\).                                           \\
        \(\overline{\Delta}^{\mathrm{nbr}}_{\neq}\)
         & Mean true-vs-neighbour-majority logit-margin shift on
        misleading-neighbour nodes; positive means the auxiliary path pushes towards the true class. \\
        \(\overline{\Delta}^{\mathrm{max}}\)
         & Mean true-vs-best-other-class logit-margin shift over all
        test nodes.                                                                                \\
        \bottomrule
    \end{tabular}
\end{table}

\subsection{Routing statistics and gate interventions}

The model has two routing stages. In the node-to-hyperedge stage, denoted
\(\VE\), hyperedges receive messages from incident nodes, with main and
auxiliary coefficients
\(\textcolor{cyan}{\hat\alpha^{\mathrm{main}}_{e,v}}\) and
\(\textcolor{red}{\hat\alpha^{\mathrm{aux}}_{e,v}}\). In the hyperedge-to-node stage, denoted
\(\EV\), nodes receive messages from incident hyperedges, with coefficients
\(\textcolor{orange}{\check\alpha^{\mathrm{main}}_{u,e}}\) and
\(\textcolor{blue}{\check\alpha^{\mathrm{aux}}_{u,e}}\). In both stages, the main coefficient
interpolates between learned attention and the structural prior, while the
auxiliary coefficient is bounded by the remaining local capacity,
\(1-\alpha^{\mathrm{main}}\), and modulated by a learned sigmoid gate.

Table~\ref{tab:analysis-summary} first reports basic hypergraph statistics:
the number of nodes \(|\mathcal V|\), the number of hyperedges \(|\mathcal E|\),
the average hyperedge size \(\overline{|e|}\), and the average node degree
\(\overline{|\Gamma(v)|}\). Then, it
summarises six routing quantities. We denote by
\(\eta_{\mathrm{unif}}^{\mathrm{gat}}\) the fraction of hyperedges (in the \(\VE\) stage) or nodes (in the \(\EV\) stage) for
which the learned attention component is effectively uniform over the same set
used in the corresponding softmax. To avoid trivially-uniform singleton receiver
groups, define the multi-element eligible sets
\(\mathcal{E}^{\geq 2}=\{e\in\mathcal{E}:|H(e)\cup T(e)|\geq 2\}\) and
\(\mathcal{V}^{\geq 2}=\{u\in\mathcal{V}:|\Gamma_H(u)\cup\Gamma_T(u)|\geq 2\}\).
With tolerance \(\varepsilon=10^{-6}\), we compute
\begin{equation}
  \begin{aligned}
    \eta_{\mathrm{unif}}^{\mathrm{gat},\VE}
     & =
    \frac{1}{|\mathcal{E}^{\geq 2}|}
    \sum_{e\in\mathcal{E}^{\geq 2}}
    \mathbb{I}\!\left[
    \max_{v\in H(e)\cup T(e)}
    \left|\hat\alpha_{e,v}^{\mathrm{gat}}
    -\frac{1}{|H(e)\cup T(e)|}\right|
    < \varepsilon
    \right],
    \\
    \eta_{\mathrm{unif}}^{\mathrm{gat},\EV}
     & =
    \frac{1}{|\mathcal{V}^{\geq 2}|}
    \sum_{u\in\mathcal{V}^{\geq 2}}
    \mathbb{I}\!\left[
    \max_{e\in \Gamma_H(u)\cup \Gamma_T(u)}
    \left|\check\alpha_{u,e}^{\mathrm{gat}}
    -\frac{1}{|\Gamma_H(u)\cup \Gamma_T(u)|}\right|
    < \varepsilon
    \right].
  \end{aligned}
  \label{eq:eta-uniform-gat}
\end{equation}
For multi-head models, this same check is applied independently to each
attention head and then averaged over heads; we suppress the head index in the
notation. Singleton receiver groups (\(|H(e)\cup T(e)|=1\) in the \(\VE\)
stage, or \(|\Gamma_H(u)\cup \Gamma_T(u)|=1\) in the \(\EV\) stage) are
trivially uniform and are excluded from both the numerator and denominator of
\(\eta_{\mathrm{unif}}^{\mathrm{gat}}\) so that the reported fraction reflects
non-trivial uniformity only.\footnote{On the \(\VE\) side, singleton groups
  are size-1 hyperedges (self-loops, either present in the source data or
  inserted during preprocessing for otherwise-isolated nodes). On the \(\EV\)
  side, they are predominantly \emph{leaf nodes} -- nodes incident to exactly
  one multi-node hyperedge -- rather than isolated nodes; the leaf fraction is
  substantial on most datasets (Citeseer \(53.8\%\), Cornell \(48.1\%\), Texas
  \(42.1\%\), Telegram \(21.6\%\), Roman-empire \(0\%\)). The post-exclusion
  \(\eta^{\mathrm{gat},\EV}_{\mathrm{unif}}\) is therefore the uniformity rate
  among nodes for which attention has more than one source to choose
  between.}
The same \(\eta_{\mathrm{unif}}^{\mathrm{gat}}\) value is reported for head and
tail rows within a stage because learned GAT attention is normalised over the
same set, i.e., \(H(e)\cup T(e)\) in the \(\VE\) stage and
\(\Gamma_H(u)\cup\Gamma_T(u)\) in the \(\EV\) stage. The remaining columns are role-split at analysis time by masking incidences
according to their role: the \(H\) row in the \(\VE\) stage averages over
\(\{(e,v):v\in H(e)\}\), the \(T\) row over \(\{(e,v):v\in T(e)\}\), analogously in the \(\EV\) stage over \(\{(u,e):e\in \Gamma_H(u)\}\) and \(\{(u,e):e\in \Gamma_T(u)\}\). The model itself still computes
\(\alpha^{\mathrm{gat}}\), \(\beta^{\mathrm{prior}}\), the gates
\(\hat g, \check g\), and \(\alpha^{\mathrm{aux}}\) on the full incidence
support; the table only separates their empirical values by head and tail role.

\begin{table*}[t]
  \centering
  \caption{Analysis summary of learned incidence routing and the auxiliary
    path. The accuracy column reports mean \(\pm\) standard deviation test
    accuracy in percent over ten saved checkpoints. Dataset statistics are
    computed on the loaded hypergraphs. Routing values are means \(\pm\)
    standard deviations over the same checkpoints, and over layer--checkpoint
    instances for Roman-empire. The fraction of uniform attention is computed
    over non-singleton receiver groups, namely \(H(e)\cup T(e)\) with
    \(|H(e)\cup T(e)|\geq 2\) in the \(\VE\) stage and
    \(\Gamma_H(u)\cup\Gamma_T(u)\) with \(|\Gamma_H(u)\cup\Gamma_T(u)|\geq 2\)
    in the \(\EV\) stage.}
  \label{tab:analysis-summary}
  \scriptsize
  \setlength{\tabcolsep}{1.8pt}
  \renewcommand{\arraystretch}{1.12}
  \newcommand{\mstd}[2]{\ensuremath{#1{\scriptscriptstyle\pm}#2}}

  \resizebox{\textwidth}{!}{%
    \begin{tabular}{@{}lccccccccccccc@{}}
      \toprule
      Dataset
       & \makecell{Test acc.                                                                                                                                                                                                                                                                                                         \\mean$\pm$std}
       & \makecell{\(|\mathcal V|\)}
       & \makecell{\(|\mathcal E|\)}
       & \makecell{\(\overline{|e|}\)}
       & \makecell{\(\overline{|\Gamma(v)|}\)}
       & Stage
       & Role
       & \makecell{$\eta_{\mathrm{unif}}^{\mathrm{gat}}$}
       & \makecell{$\beta^{\mathrm{prior}}$}
       & \makecell{$g$}
       & \makecell{$\alpha^{\mathrm{aux}}$}
       & \(\rho_{s,R}\)
       & \(\pi_{s,R}\)                                                                                                                                                                                                                                                                                                               \\
      \midrule

      \multirow{4}{*}{Citeseer}
       & \multirow{4}{*}{\(75.6\pm1.2\)}
       & \multirow{4}{*}{3312}
       & \multirow{4}{*}{1951}
       & \multirow{4}{*}{3.35}
       & \multirow{4}{*}{1.98}
       & \multirow{2}{*}{\(\VE\)}
       & H                                                & \multirow{2}{*}{\mstd{0.000}{0.000}} & \mstd{0.863}{0.045} & \mstd{0.378}{0.041} & \mstd{0.243}{0.027} & \mstd{0.320}{0.047}      & \mstd{0.241}{0.027}                                                                                                                  \\
       &                                                  &                                      &                     &                     &                     &                          & T                   &  & \mstd{0.862}{0.046} & \mstd{0.395}{0.040} & \mstd{0.253}{0.027} & \mstd{0.335}{0.047} & \mstd{0.250}{0.027} \\
       &                                                  &                                      &                     &                     &                     & \multirow{2}{*}{\(\EV\)}
       & H                                                & \multirow{2}{*}{\mstd{0.005}{0.000}} & \mstd{0.930}{0.040} & \mstd{0.390}{0.052} & \mstd{0.242}{0.033} & \mstd{0.269}{0.041}      & \mstd{0.211}{0.025}                                                                                                                  \\
       &                                                  &                                      &                     &                     &                     &                          & T                   &  & \mstd{0.844}{0.042} & \mstd{0.399}{0.050} & \mstd{0.247}{0.032} & \mstd{0.293}{0.042} & \mstd{0.226}{0.025} \\
      \addlinespace[2pt]

      \multirow{4}{*}{Cornell}
       & \multirow{4}{*}{\(74.9\pm4.8\)}
       & \multirow{4}{*}{183}
       & \multirow{4}{*}{96}
       & \multirow{4}{*}{4.07}
       & \multirow{4}{*}{2.14}
       & \multirow{2}{*}{\(\VE\)}
       & H                                                & \multirow{2}{*}{\mstd{0.429}{0.001}} & \mstd{0.368}{0.100} & \mstd{0.181}{0.079} & \mstd{0.140}{0.062} & \mstd{0.294}{0.131}      & \mstd{0.220}{0.078}                                                                                                                  \\
       &                                                  &                                      &                     &                     &                     &                          & T                   &  & \mstd{0.379}{0.093} & \mstd{0.207}{0.090} & \mstd{0.117}{0.051} & \mstd{0.124}{0.056} & \mstd{0.108}{0.044} \\
       &                                                  &                                      &                     &                     &                     & \multirow{2}{*}{\(\EV\)}
       & H                                                & \multirow{2}{*}{\mstd{0.018}{0.057}} & \mstd{0.832}{0.055} & \mstd{0.292}{0.083} & \mstd{0.215}{0.066} & \mstd{0.204}{0.089}      & \mstd{0.166}{0.062}                                                                                                                  \\
       &                                                  &                                      &                     &                     &                     &                          & T                   &  & \mstd{0.732}{0.096} & \mstd{0.368}{0.083} & \mstd{0.190}{0.044} & \mstd{0.183}{0.050} & \mstd{0.154}{0.036} \\
      \addlinespace[2pt]

      \multirow{4}{*}{Roman-empire}
       & \multirow{4}{*}{\(89.2\pm0.5\)}
       & \multirow{4}{*}{22662}
       & \multirow{4}{*}{22662}
       & \multirow{4}{*}{2.73}
       & \multirow{4}{*}{2.73}
       & \multirow{2}{*}{\(\VE\)}
       & H                                                & \multirow{2}{*}{\mstd{0.184}{0.270}} & \mstd{0.831}{0.079} & \mstd{0.497}{0.140} & \mstd{0.329}{0.093} & \mstd{0.959}{0.262}      & \mstd{0.480}{0.074}                                                                                                                  \\
       &                                                  &                                      &                     &                     &                     &                          & T                   &  & \mstd{0.828}{0.078} & \mstd{0.490}{0.129} & \mstd{0.306}{0.077} & \mstd{0.805}{0.219} & \mstd{0.438}{0.071} \\
       &                                                  &                                      &                     &                     &                     & \multirow{2}{*}{\(\EV\)}
       & H                                                & \multirow{2}{*}{\mstd{0.001}{0.004}} & \mstd{0.908}{0.066} & \mstd{0.370}{0.132} & \mstd{0.247}{0.085} & \mstd{0.764}{0.250}      & \mstd{0.423}{0.076}                                                                                                                  \\
       &                                                  &                                      &                     &                     &                     &                          & T                   &  & \mstd{0.874}{0.075} & \mstd{0.413}{0.134} & \mstd{0.268}{0.082} & \mstd{0.830}{0.328} & \mstd{0.440}{0.082} \\
      \addlinespace[2pt]

      \multirow{4}{*}{Telegram}
       & \multirow{4}{*}{\(89.6\pm5.2\)}
       & \multirow{4}{*}{245}
       & \multirow{4}{*}{183}
       & \multirow{4}{*}{49.70}
       & \multirow{4}{*}{37.12}
       & \multirow{2}{*}{\(\VE\)}
       & H                                                & \multirow{2}{*}{\mstd{1.000}{0.000}} & \mstd{0.500}{0.000} & \mstd{0.411}{0.031} & \mstd{0.404}{0.031} & \mstd{27.578}{4.087}     & \mstd{0.964}{0.005}                                                                                                                  \\
       &                                                  &                                      &                     &                     &                     &                          & T                   &  & \mstd{0.500}{0.000} & \mstd{0.410}{0.031} & \mstd{0.382}{0.029} & \mstd{6.423}{0.934} & \mstd{0.864}{0.016} \\
       &                                                  &                                      &                     &                     &                     & \multirow{2}{*}{\(\EV\)}
       & H                                                & \multirow{2}{*}{\mstd{0.000}{0.000}} & \mstd{0.551}{0.292} & \mstd{0.123}{0.025} & \mstd{0.120}{0.025} & \mstd{3.153}{0.507}      & \mstd{0.756}{0.029}                                                                                                                  \\
       &                                                  &                                      &                     &                     &                     &                          & T                   &  & \mstd{0.553}{0.221} & \mstd{0.228}{0.028} & \mstd{0.200}{0.028} & \mstd{0.533}{0.441} & \mstd{0.306}{0.163} \\
      \addlinespace[2pt]

      \multirow{4}{*}{Texas}
       & \multirow{4}{*}{\(83.0\pm5.1\)}
       & \multirow{4}{*}{183}
       & \multirow{4}{*}{110}
       & \multirow{4}{*}{3.81}
       & \multirow{4}{*}{2.29}
       & \multirow{2}{*}{\(\VE\)}
       & H                                                & \multirow{2}{*}{\mstd{0.414}{0.035}} & \mstd{0.490}{0.034} & \mstd{0.272}{0.067} & \mstd{0.211}{0.053} & \mstd{0.393}{0.205}      & \mstd{0.270}{0.090}                                                                                                                  \\
       &                                                  &                                      &                     &                     &                     &                          & T                   &  & \mstd{0.490}{0.031} & \mstd{0.300}{0.078} & \mstd{0.166}{0.043} & \mstd{0.136}{0.038} & \mstd{0.119}{0.030} \\
       &                                                  &                                      &                     &                     &                     & \multirow{2}{*}{\(\EV\)}
       & H                                                & \multirow{2}{*}{\mstd{0.001}{0.002}} & \mstd{0.785}{0.106} & \mstd{0.304}{0.106} & \mstd{0.231}{0.086} & \mstd{0.226}{0.141}      & \mstd{0.175}{0.093}                                                                                                                  \\
       &                                                  &                                      &                     &                     &                     &                          & T                   &  & \mstd{0.691}{0.041} & \mstd{0.399}{0.081} & \mstd{0.201}{0.042} & \mstd{0.173}{0.065} & \mstd{0.145}{0.047} \\
      \bottomrule
    \end{tabular}%
  }
  \vspace{2pt}
  \begin{minipage}{0.98\textwidth}
    \footnotesize
    \emph{Notes.}
    In the \(\VE\) stage, the routing variables are
    \(\hat\alpha^{\mathrm{gat}}_{e,v}\),
    \(1-\hat\beta^{(\ell)}_{e,v}\),
    \(\textcolor{red}{\hat g_{e,v}}\), and
    \(\textcolor{red}{\hat\alpha^{\mathrm{aux}}_{e,v}}\).
    In the \(\EV\) stage, they are
    \(\check\alpha^{\mathrm{gat}}_{u,e}\),
    \(1-\check\beta^{(\ell)}_{u,e}\),
    \(\textcolor{blue}{\check g_{u,e}}\), and
    \(\textcolor{blue}{\check\alpha^{\mathrm{aux}}_{u,e}}\).
    \(\rho_{s,R}\) and \(\pi_{s,R}\) are defined in Eq.~\eqref{eq:rho-pi}, for
    stage $s \in \{\VE, \EV\}$ and role $R \in \{H,T\}$.
  \end{minipage}
\end{table*}

On these role-restricted incidence sets, the columns
\(\beta^{\mathrm{prior}}\), \(g\), and
\(\alpha^{\mathrm{aux}}\) report, respectively, the mean prior-gate mass, the
mean post-sigmoid auxiliary gate, and the mean auxiliary coefficient,
averaged over heads. The auxiliary-gate column \(g\) refers to
\(\textcolor{red}{\hat g_{e,v}}\) in the \(\VE\) stage and to
\(\textcolor{blue}{\check g_{u,e}}\) in the \(\EV\) stage, as specified in
the table notes. The reported \(\beta^{\mathrm{prior}}\) is the empirical
incidence-level mean of the per-incidence prior mass
\(1-\hat\beta^{(\ell)}_{e,v}\) (resp.\ \(1-\check\beta^{(\ell)}_{u,e}\)) from
the convex combination of Eq.~\eqref{eq:main-coefficients}, averaged over the
H- or T-restricted incidence set, over heads, and over checkpoints; a high
\(\beta^{\mathrm{prior}}\) corresponds to the prior dominating the main path.
Because the per-incidence gates \(\hat\beta^{(\ell)}_{e,v}\),
\(\check\beta^{(\ell)}_{u,e}\) take different values across incidences, the
H and T rows can therefore differ.
The table then reports the mean \(\pm\) standard deviation of these
checkpoint-level quantities. Accuracy is reported with the same mean \(\pm\)
standard deviation convention.

The last two columns compare the magnitudes of the main and auxiliary messages.
For this purpose, we define the incidence-level messages
\begin{equation}
  \begin{aligned}
    \textcolor{cyan}{\hat m_{e,v}^{\mathrm{main}}}
     & =
    \textcolor{cyan}{\hat\alpha_{e,v}^{\mathrm{main}}}\,\mathbf{x}_v^{\mathrm{main}},
     &
    \textcolor{red}{\hat m_{e,v}^{\mathrm{aux}}}
     & =
    \textcolor{red}{\hat\alpha_{e,v}^{\mathrm{aux}}}\,\mathbf{x}_v^{\mathrm{aux}},
    \\
    \textcolor{orange}{\check m_{u,e}^{\mathrm{main}}}
     & =
    \textcolor{orange}{\check\alpha_{u,e}^{\mathrm{main}}}\,\mathbf{h}_e^{\mathrm{main}},
     &
    \textcolor{blue}{\check m_{u,e}^{\mathrm{aux}}}
     & =
    \textcolor{blue}{\check\alpha_{u,e}^{\mathrm{aux}}}\,\mathbf{h}_e^{\mathrm{aux}} .
  \end{aligned}
  \label{eq:analysis-messages}
\end{equation}
For the head side, let
\begin{equation}
  \begin{aligned}
    \textcolor{cyan}{M_{\VE,H}^{\mathrm{main}}}
     & =
    \sum_{e\in\mathcal{E}}\sum_{v\in H(e)}
    \|\textcolor{cyan}{\hat m_{e,v}^{\mathrm{main}}}\|_2,
     &
    \textcolor{red}{M_{\VE,H}^{\mathrm{aux}}}
     & =
    \sum_{e\in\mathcal{E}}\sum_{v\in H(e)}
    \|\textcolor{red}{\hat m_{e,v}^{\mathrm{aux}}}\|_2,
    \\
    \textcolor{orange}{M_{\EV,H}^{\mathrm{main}}}
     & =
    \sum_{u\in\mathcal{V}}\sum_{e\in\Gamma_H(u)}
    \|\textcolor{orange}{\check m_{u,e}^{\mathrm{main}}}\|_2,
     &
    \textcolor{blue}{M_{\EV,H}^{\mathrm{aux}}}
     & =
    \sum_{u\in\mathcal{V}}\sum_{e\in\Gamma_H(u)}
    \|\textcolor{blue}{\check m_{u,e}^{\mathrm{aux}}}\|_2 .
  \end{aligned}
  \label{eq:message-magnitudes}
\end{equation}
The corresponding tail-side quantities are obtained by replacing \(H(e)\) with
\(T(e)\) in the \(\VE\) sums and \(\Gamma_H(u)\) with \(\Gamma_T(u)\) in the
\(\EV\) sums. From these aggregate magnitudes, we report
\begin{equation}
  \rho_{s,R}
  =
  \frac{M_{s,R}^{\mathrm{aux}}}{M_{s,R}^{\mathrm{main}}},
  \qquad
  \pi_{s,R}
  =
  \frac{M_{s,R}^{\mathrm{aux}}}
  {M_{s,R}^{\mathrm{main}} + M_{s,R}^{\mathrm{aux}}} ,
  \label{eq:rho-pi}
\end{equation}
where \(s\in\{\VE,\EV\}\) and \(R\in\{H,T\}\). Thus, \(\rho>1\) means that
the auxiliary path has larger total norm than the main path, while
\(\pi=0\) means that the auxiliary path is inactive and \(\pi=1\) means that
almost all message magnitude comes from the auxiliary path. These are magnitude
statistics, not causal tests: the auxiliary representation includes learned
linear maps, so the functional role of a large norm must be checked by
intervention.

Table~\ref{tab:analysis-summary} shows that PANDA realises different routing
regimes across datasets. Telegram is the clearest high-auxiliary-magnitude
regime. Its learned \(\VE\) attention component is uniform in all eligible
non-singleton receiver groups and the empirical prior mass is fixed at
\(\beta^{\mathrm{prior}}=0.500\pm0.000\). Therefore, in the \(\VE\) stage, the
learned GAT component does not provide non-uniform routing; any
non-uniformity in the main path must therefore come from the structural
prior rather than from learned GAT attention, while the auxiliary path
carries most of the message magnitude. Specifically,
\(\pi_{\VE,H}=0.964\pm0.005\) on head-side incidences and
\(\pi_{\VE,T}=0.864\pm0.016\) on tail-side incidences. Equivalently,
\(\rho_{\VE,H}=27.578\pm4.087\) and \(\rho_{\VE,T}=6.423\pm0.934\), so the
auxiliary norm is much larger than the main-path norm in the \(\VE\) stage.
Because these are norm ratios, the interventions below are needed to establish
that this high-magnitude auxiliary path is also predictive.

We note that the $\beta^{prior} = 0.500 \pm 0.000$ pinning is specific to Telegram's $V \rightarrow E$ stage. This is likely due to Telegram's degenerate node features, rather than being a general failure of the interpolation mechanism. Indeed, on Telegram each node carries a single scalar feature $x_v \sim \mathcal{N}(0,1)$ which is drawn independently of labels and topology. This feature is clearly poorly informative in the $V \rightarrow E$ layer, where all the nodes belonging to a hyperedge are scored based on embeddings calculated after a linear layer from these uninformative 1-dimensional init features (and also against the same, shared hyperedge representation $h_e$). Thus, the value of $\beta^{prior}$ calculated in Eqs. (12) and (13) at the $V \rightarrow E$ stage is entirely uninformative. It follows that the data-dependent component of the CE part of the loss likely provides no informative gradient signal. Thus, the L2 penalty term of the loss becomes dominant, leading to a choice of weights $W_\beta$ close to zero. This behaviour is observed in all ten independent runs, with the trained $W_\beta$, $u_\beta$ are both extremely small (of the order of $10^{-3}$ in both cases, in Frobenius norm). As a result, the sigmoid gate in both equations collapses due to the learned scores $W$ being close to zero for every node-hyperedge incidence, yielding $\beta^{prior} = 0.500 \pm 0.000$.

In contrast, the $E \rightarrow V$ stage of the two-stage message-passing protocol scores the nodes against aggregated hyperedge representations. There, the scoring parameters retain norms that are one to two orders of magnitude larger and produce non-uniform attention, confirming that the observed degeneracy is caused by the input embedding of the $V \rightarrow E$ stage rather than by the attention mechanism itself. \Cref{tab:analysis-norms-telegram} supports this hypothesis.

\begin{table}
  \centering
  \caption{Norms of weight matrices and bias vectors in each stage of message passing for the single layer GeDi + PANDA model for Telegram dataset.}
  \label{tab:analysis-norms-telegram}
  \begin{tabular}{c c c}
    \toprule
    Stage &	norm($W_\beta$) &	norm($u_\beta$) \\
    \midrule
    $V \rightarrow E$	& 0.00272 $\pm$ 0.00436	& 0.07674 $\pm$ 0.14079 \\
    $E \rightarrow V$	& 0.54175 $\pm$ 0.32409	& 0.41068 $\pm$ 0.23789 \\
    \bottomrule
  \end{tabular}
\end{table}

\begin{table*}[t]
  \centering
  \caption{Auxiliary-gate shut-off interventions. The intervention column
    specifies which post-sigmoid auxiliary gate value is forced to zero. Values
    are mean \(\pm\) standard deviation test-accuracy changes in percentage
    points over ten saved checkpoints; baseline accuracy is reported with the
    same convention. Negative values mean that the intervention decreases
    accuracy. Note that this measures the dependence of the model on the auxiliary path,
    not the marginal contribution of this path.}
  \label{tab:analysis-intervention}
  \small
  \setlength{\tabcolsep}{4pt}
  \begin{tabular*}{\textwidth}{@{\extracolsep{\fill}}lllr@{}}
    \toprule
    Dataset
    & \makecell{Test acc.\\mean$\pm$std}
    & Intervention
    & \makecell{$\Delta$Acc. after\\gate shut-off} \\
    \midrule
    \multirow{3}{*}{Citeseer}
    & \multirow{3}{*}{\(75.6\pm1.2\)}
    & \(\textcolor{red}{\hat g^{(\ell)}_{e,v}}=0\) & \(-0.5\pm0.5\) \\
    &       & \(\textcolor{blue}{\check g^{(\ell)}_{u,e}}=0\) & \(-0.5\pm0.3\) \\
    &       & both gates \(=0\)                                            & \(-1.0\pm0.6\) \\
    \midrule
    \multirow{3}{*}{Cornell}
    & \multirow{3}{*}{\(74.9\pm4.8\)}
    & \(\textcolor{red}{\hat g^{(\ell)}_{e,v}}=0\) & \(-4.1\pm7.9\) \\
    &       & \(\textcolor{blue}{\check g^{(\ell)}_{u,e}}=0\) & \(-0.5\pm1.1\) \\
    &       & both gates \(=0\)                                            & \(-4.6\pm9.7\) \\
    \midrule
    \multirow{3}{*}{Roman-empire}
    & \multirow{3}{*}{\(89.2\pm0.5\)}
    & \(\textcolor{red}{\hat g^{(\ell)}_{e,v}}=0\) & \(-19.4\pm3.7\) \\
    &       & \(\textcolor{blue}{\check g^{(\ell)}_{u,e}}=0\) & \(-21.0\pm2.6\) \\
    &       & both gates \(=0\)                                            & \(-27.7\pm3.3\) \\
    \midrule
    \multirow{3}{*}{Telegram}
    & \multirow{3}{*}{\(89.6\pm5.2\)}
    & \(\textcolor{red}{\hat g^{(\ell)}_{e,v}}=0\) & \(-53.8\pm12.7\) \\
    &       & \(\textcolor{blue}{\check g^{(\ell)}_{u,e}}=0\) & \(-41.0\pm9.2\) \\
    &       & both gates \(=0\)                                            & \(-54.0\pm5.2\) \\
    \midrule
    \multirow{3}{*}{Texas}
    & \multirow{3}{*}{\(83.0\pm5.1\)}
    & \(\textcolor{red}{\hat g^{(\ell)}_{e,v}}=0\) & \(-1.4\pm5.6\) \\
    &       & \(\textcolor{blue}{\check g^{(\ell)}_{u,e}}=0\) & \(+0.5\pm2.1\) \\
    &       & both gates \(=0\)                                            & \(+0.0\pm4.0\) \\
    \bottomrule
  \end{tabular*}
\end{table*}

Roman-empire also uses the auxiliary path substantially, but with a different
profile. Its \(\VE\) learned-attention uniformity is
\(0.184\pm0.270\) after pooling over the four PANDA layers, which reflects the layer structure: the
layerwise means are \(0.548\), \(0.088\), \(0.090\), and \(0.010\) from the
first to the fourth layer. Its \(\EV\) learned attention is almost never uniform
(\(0.001\pm0.004\)). The prior gate remains high
(\(\beta^{\mathrm{prior}}\approx0.83\)--\(0.91\)), and the auxiliary path
accounts for roughly \(42\)--\(48\%\) of the message norm across stages and
roles. Citeseer, Cornell, and Texas have smaller auxiliary shares: across all four
\((s,R)\) combinations, \(\pi_{s,R}\in[0.21,0.25]\) on Citeseer,
\(\pi_{s,R}\in[0.11,0.22]\) on Cornell, and
\(\pi_{s,R}\in[0.12,0.27]\) on Texas. In all three cases
\(\rho_{s,R}<1\) on average, so the auxiliary messages are active but do not
dominate the main-path norm.
The prior gate also changes across stages: Citeseer and Roman-empire rely
strongly on the prior in both stages, while Cornell and Texas move more
strongly towards the GeDi prior in the \(\EV\) stage than in the \(\VE\) stage.

To test whether the auxiliary-path statistics correspond to predictive
behaviour, Table~\ref{tab:analysis-intervention} reports controlled interventions
on the same checkpoints. We close the auxiliary gate by forcing the
post-sigmoid value \(\textcolor{red}{\hat g^{(\ell)}_{e,v}}\) (in the \(\VE\) stage) or
\(\textcolor{blue}{\check g^{(\ell)}_{u,e}}\) (in the \(\EV\) stage) to zero for every
incidence and every layer, and evaluate the model again on the same test split.
We report the signed change in accuracy,
\(\Delta\)Acc., defined as the intervened accuracy minus baseline accuracy, in percentage
points; negative values therefore mean that closing the auxiliary gate hurts
performance.

Table~\ref{tab:analysis-intervention} confirms that the auxiliary path has
different predictive roles across datasets. On Telegram, closing either gate,
or both, causes large drops in test accuracy. Roman-empire is also strongly
dependent on the auxiliary path: closing both gates reduces accuracy by
\(27.7\) percentage points on average. These results show that the large
auxiliary shares in Table~\ref{tab:analysis-summary} correspond to predictive
behaviour on Telegram and Roman-empire, not merely to representational
magnitude. Citeseer, Cornell, and Texas show much smaller mean effects. In
these datasets, the auxiliary path is available and sometimes active in
magnitude, but the final predictions are less dependent on it. The intervention tells us \emph{whether} the auxiliary path matters but not
\emph{what kind of predictive effect} it produces; we probe the latter below.

\paragraph{Per-layer interventions on Roman-empire.}
Because Roman-empire is the only dataset analysed here that uses a
multi-layer PANDA configuration (four layers), the all-aux-off intervention
of Table~\ref{tab:analysis-intervention} aggregates four layerwise contributions
into a single number. The layerwise GAT-uniformity profile reported earlier
already shows that the four layers behave differently
(\(\eta_{\mathrm{unif}}^{\mathrm{gat},\VE}=0.548,\,0.088,\,0.090,\,0.010\)
from the first to the fourth layer), so we ask which layers' auxiliary paths
carry the predictive load. We run two complementary inference-time
interventions on the same ten checkpoints. \emph{Leave-one-out} (LOO)
intervention shuts off both per-incidence auxiliary gates only at one
chosen layer \(\ell\) and leaves the other three layers' auxiliary gates
active; \emph{cumulative} intervention shuts off both per-incidence auxiliary
gates at layers \(1,\ldots,k\) jointly.

\begin{table}[t!]
  \centering
  \caption{Per-layer auxiliary-gate interventions on Roman-empire (four PANDA
    layers, ten saved checkpoints). LOO shuts off both per-incidence
    auxiliary gates only at the named layer; cumulative shuts off both gates
    at layers \(1,\ldots,k\) jointly. Values are mean test accuracy \(\pm\)
    standard deviation in percent; \(\Delta\)Acc.\ is the corresponding
    accuracy change vs.\ the full model
    (\(89.2\pm0.5\)) in percentage points.}
  \label{tab:analysis-perlayer-roman}
  \small
  \setlength{\tabcolsep}{4pt}
  \begin{tabular}{llrr}
    \toprule
    Intervention & Layer(s) off    & Test acc.\ (\%) & \(\Delta\)Acc.\ (pp) \\
    \midrule
    LOO          & \(\{1\}\)       & \(81.8\pm3.0\)  & \(-7.4\pm2.8\)       \\
    LOO          & \(\{2\}\)       & \(83.3\pm1.8\)  & \(-5.8\pm1.5\)       \\
    LOO          & \(\{3\}\)       & \(85.1\pm2.8\)  & \(-4.1\pm2.4\)       \\
    LOO          & \(\{4\}\)       & \(84.9\pm3.2\)  & \(-4.3\pm3.4\)       \\
    \midrule
    Cumulative   & \(\{1\}\)       & \(81.8\pm3.0\)  & \(-7.4\pm2.8\)       \\
    Cumulative   & \(\{1,2\}\)     & \(72.5\pm4.7\)  & \(-16.7\pm4.7\)      \\
    Cumulative   & \(\{1,2,3\}\)   & \(67.3\pm4.3\)  & \(-21.9\pm4.1\)      \\
    Cumulative   & \(\{1,2,3,4\}\) & \(61.5\pm3.2\)  & \(-27.7\pm3.3\)      \\
    \bottomrule
  \end{tabular}
\end{table}

The per-layer intervention in Table~\ref{tab:analysis-perlayer-roman} reveals three things. First, the predictive role
of the auxiliary path is distributed across all four layers: every LOO drop
is non-trivial and no single layer's removal reproduces the full
\(-27.7\) point all-off drop. Second, the first layer carries the largest
LOO load (\(-7.4\) percentage points), which lines up with its having the
highest learned-attention uniformity among the four layers
(\(\eta_{\mathrm{unif}}^{\mathrm{gat},\VE}=0.548\) at layer \(1\)). This is
consistent with the auxiliary path being especially important in layers
where learned GAT attention contributes less non-uniform routing. Third, the
single-layer LOO accuracy losses sum to \(21.6\) percentage points but the
joint all-off accuracy loss is \(27.7\) points, so the intervention effects
are non-additive: the joint loss exceeds the sum of the single-layer losses.
This non-additivity is consistent with cross-layer interaction among
auxiliary paths and with partial compensation when only one layer's auxiliary
path is removed. Reading the cumulative column
makes the same point: removing layer \(2\)'s auxiliary gates \emph{after} layer \(1\)'s auxiliary
gates are already off costs an additional \(9.3\) points, well above its standalone
LOO cost of \(5.8\), so layer \(2\)'s auxiliary path becomes substantially
more critical once layer \(1\)'s is unavailable.

\subsection{Auxiliary Path Analysis}
\label{sec:analysis-neighbor-counterfactual}

Table~\ref{tab:analysis-intervention} already establishes that the auxiliary
path is predictively necessary on Telegram and Roman-empire (\(-54.0\) and
\(-27.7\) percentage-point accuracy drops when both per-incidence auxiliary
gates are shut off), and only weakly on Citeseer, Cornell, and Texas.
The analysis in this subsection therefore asks \emph{along which axis} it acts:
whether its predictive dependence is concentrated along the misleading
neighbour-majority axis, or whether it reflects a broader margin-improvement
effect spanning many class boundaries, and whether the auxiliary path mainly
rescues aux-off-wrong nodes or reinforces nodes that aux-off already predicts
correctly. All interventions in this subsection operate at inference time on a fixed trained model; they
therefore measure the trained model's predictive dependence on the auxiliary path, not the architectural necessity of having an auxiliary path at
training time. For a test node \(u\), let
\begin{equation}
  \mathcal{N}_{\mathrm{CE}}(u)
  =
  \left\{
  v\in\mathcal{V}\setminus\{u\}:
  \exists e\in\mathcal{E}
  \text{ such that }
  u,v\in H(e)\cup T(e)
  \right\}
  \label{eq:ce-neighbors}
\end{equation}
be its unique role-agnostic clique-expansion neighbours in the loaded hypergraph.
We use this only as a diagnostic neighbourhood, following the common reduction of
hyperedges to pairwise node neighbourhoods~\citep{zhou,yadati2019hypergcnnewmethodtraining,zhang2022hypergraphconvolutionalnetworksequivalency};
it does not change the directed incidence support used by PANDA. Let
\(y_u\in\{1,\ldots,C\}\) denote the ground-truth class label of node \(u\),
with \(C\) the number of classes. We define the neighbour-majority class as
\begin{equation}
  y_u^{\mathrm{nbr}}
  =
  \arg\max_c
  \sum_{v\in\mathcal{N}_{\mathrm{CE}}(u)}
  \mathbb{I}\!\left[y_v=c\right].
  \label{eq:neighbor-majority-class}
\end{equation}
Ties in the \(\arg\max\) are broken by the fixed integer ordering of the
class labels. The neighbour-majority class \(y_u^{\mathrm{nbr}}\) is defined
only on nodes with \(\mathcal{N}_{\mathrm{CE}}(u)\neq\emptyset\); on the
analysed datasets, every test node has at least one CE neighbour except for
Citeseer, where roughly \(1.4\%\) of test nodes per run are isolated and are
excluded from the \(y_u^{\mathrm{nbr}}\)-referencing rates below. 

Let \(o_{u,c}^{\mathrm{full}}\) be the output logit assigned by the original model
to class \(c\) for node \(u\), and let \(o_{u,c}^{\mathrm{aux\text{-}off}}\) be the
corresponding output logit obtained by forcing both per-incidence auxiliary
gates to zero, i.e.,
\(\textcolor{red}{\hat g^{(\ell)}_{e,v}}=0\) and \(\textcolor{blue}{\check g^{(\ell)}_{u,e}}=0\) at every layer
and every incidence (the same intervention used in
Table~\ref{tab:analysis-intervention}). The corresponding per-node predictions
are \(\hat y_u^{\mathrm{full}}=\arg\max_c o_{u,c}^{\mathrm{full}}\) and
\(\hat y_u^{\mathrm{aux\text{-}off}}=\arg\max_c o_{u,c}^{\mathrm{aux\text{-}off}}\) (ties broken
by the same fixed class ordering used for \(y_u^{\mathrm{nbr}}\)).
Let \(\mathcal{T}\) denote the set of test nodes for the run, and let
\(\mathcal{T}_{\text{CE}}=\{u\in\mathcal{T}:\mathcal{N}_{\mathrm{CE}}(u)\neq\emptyset\}\)
be the subset on which \(y_u^{\mathrm{nbr}}\) is defined. All quantities
below are sample statistics computed over \(\mathcal{T}\) or
\(\mathcal{T}_{\text{CE}}\) depending on whether they reference
\(y_u^{\mathrm{nbr}}\). For any node-level event \(E(u)\) and any subset
\(\mathcal{S}\subseteq\mathcal{T}\) we write
\[
  \widehat{\Pr}_{u\in\mathcal{S}}\!\left[E(u)\right]
  =
  \frac{1}{|\mathcal{S}|}\sum_{u\in\mathcal{S}}\mathbb{I}\!\left[E(u)\right],
\]
and define the test accuracies
\[
  \mathrm{Acc}^{\mathrm{full}}
  =
  \widehat{\Pr}_{u\in\mathcal{T}}\!\left[\hat y_u^{\mathrm{full}}=y_u\right],
  \qquad
  \mathrm{Acc}^{\mathrm{aux\text{-}off}}
  =
  \widehat{\Pr}_{u\in\mathcal{T}}\!\left[\hat y_u^{\mathrm{aux\text{-}off}}=y_u\right].
\]

Table~\ref{tab:neighbor-counterfactual} reports the following quantities. The accuracy-gap column is
\[
  \Delta\mathrm{Acc.}
  =
  \mathrm{Acc}^{\mathrm{aux\text{-}off}}-\mathrm{Acc}^{\mathrm{full}},
\]
expressed in percentage points, so that negative values mean that shutting
off the auxiliary path hurts accuracy, matching the sign convention of
Table~\ref{tab:analysis-intervention}. The full-model accuracy is reported
in Table~\ref{tab:analysis-intervention}, and
\(\mathrm{Acc}^{\mathrm{aux\text{-}off}}\) is obtained by applying the
``both gates \(=0\)'' accuracy change to the corresponding full-model
accuracy. The column
\(q_{\mathrm{mis}}\) is the fraction of test nodes for which the
neighbour-majority class is misleading:
\[
  q_{\mathrm{mis}}
  =
  \widehat{\Pr}_{u\in\mathcal{T}_{\text{CE}}}\!\left[y_u^{\mathrm{nbr}}\neq y_u\right].
\]
The class-agnostic flip rates are
\[
  q_{\mathrm{help}}
  =
  \widehat{\Pr}_{u\in\mathcal{T}}\!\left[
  \hat y_u^{\mathrm{aux\text{-}off}}\neq y_u
  \ \wedge\
  \hat y_u^{\mathrm{full}}=y_u
  \right]
  \quad
  q_{\mathrm{hurt}}
  =
  \widehat{\Pr}_{u\in\mathcal{T}}\!\left[
  \hat y_u^{\mathrm{aux\text{-}off}}=y_u
  \ \wedge\
  \hat y_u^{\mathrm{full}}\neq y_u
  \right],
\]
which measure the fractions of test nodes where the auxiliary path flips
the prediction onto or off of the true class, regardless of the
neighbour-majority class. By construction,
\(\Delta\mathrm{Acc.}=q_{\mathrm{hurt}}-q_{\mathrm{help}}\),
so these two rates decompose the accuracy gap into a gross-help and
gross-hurt component. The column \(q_{\mathrm{nbr\text{-}rescue}}\) counts the
test nodes for which the aux-off model follows the misleading
neighbour-majority class while the full model predicts the true class,
normalised over the full test set. Writing
\(\mathcal{T}_{\neq}=\{u\in\mathcal{T}_{\text{CE}}:y_u^{\mathrm{nbr}}\neq y_u\}\),
\[
q_{\mathrm{nbr\text{-}rescue}}
  =\widehat{\Pr}_{u\in\mathcal{T}}\!\left[
  u\in\mathcal{T}_{\neq}
  \ \wedge\
  \hat y_u^{\mathrm{aux\text{-}off}}=y_u^{\mathrm{nbr}}
  \ \wedge\
  \hat y_u^{\mathrm{full}}=y_u
  \right].
\]
The ratio between \(q_{\mathrm{nbr\text{-}rescue}}\) and \(q_{\mathrm{help}}\) measures the share of helped
flips attributable to strict neighbour-majority rescue.

Finally, we measure the logit-margin shift from the neighbour-majority class to the true one induced by the auxiliary path:
\begin{equation}
  \Delta_u^{\mathrm{nbr}}
  =
  \left(
  o_{u,y_u}^{\mathrm{full}}
  -
  o_{u,y_u^{\mathrm{nbr}}}^{\mathrm{full}}
  \right)
  -
  \left(
  o_{u,y_u}^{\mathrm{aux\text{-}off}}
  -
  o_{u,y_u^{\mathrm{nbr}}}^{\mathrm{aux\text{-}off}}
  \right).
  \label{eq:neighbor-margin-shift}
\end{equation}
\(\Delta_u^{\mathrm{nbr}}\) is identically zero when \(y_u=y_u^{\mathrm{nbr}}\)
(both terms compare a class logit with itself), so we report only its mean
on \(\mathcal{T}_{\neq}\):
\[
  \overline{\Delta}^{\mathrm{nbr}}_{\neq}
  =
  \frac{1}{|\mathcal{T}_{\neq}|}
  \sum_{u\in\mathcal{T}_{\neq}}
  \Delta_u^{\mathrm{nbr}}.
\]
A positive \(\overline{\Delta}^{\mathrm{nbr}}_{\neq}\) means that, on average,
the auxiliary path shifts the true-vs-neighbour-majority logit difference in
favour of the true class. This is a directional statement, not an absolute
one: the difference may still be negative under both models (neighbour-majority
still wins, by a smaller gap), flip sign, or grow from positive to more
positive --- the shift is detectable even when the final argmax does not
change.

\begin{table*}[t]
  \centering
  \caption{Neighbour-majority analysis of the auxiliary path.
  \(\mathcal H_{\mathrm{CE}}\) is the dataset-level
  clique-expansion homophily reported by \citet{wang2023hypergraph}.
  \(q_{\mathrm{nbr\text{-}rescue}}\) is the unconditional fraction of test
  nodes (over \(|\mathcal T|\)) for which aux-off follows the misleading
  neighbour-majority class and the full model predicts the true class.
  \(\overline{\Delta}^{\mathrm{nbr}}_{\neq}\) is the true-vs-neighbour-majority
  logit-margin shift on nodes with \(y_u^{\mathrm{nbr}}\neq y_u\).}
  \label{tab:neighbor-counterfactual}
  \small
  \setlength{\tabcolsep}{3pt}
  \begin{tabular*}{\textwidth}{@{\extracolsep{\fill}}rrrrrrrr@{}}
    \toprule
    Dataset
    & \(\mathcal H_{\mathrm{CE}}\)
    & \(\Delta\)Acc.
    & \(q_{\mathrm{mis}}\)
    & \(q_{\mathrm{help}}\)
    & \(q_{\mathrm{hurt}}\)
    & \(q_{\mathrm{nbr\text{-}rescue}}\)
    & \(\overline{\Delta}^{\mathrm{nbr}}_{\neq}\) \\
    \midrule
    Citeseer      & 0.795 & \(-1.0\)            & 23.97 & 1.84  & 0.82 & 0.22  & \(-0.28\) \\
    Cornell       & 0.420 & \(-4.6\)            & 57.84 & 5.95  & 1.35 & 1.08  & \(+0.09\) \\
    Roman-empire  & 0.236 & \(-27.7\)           & 95.68 & 29.44 & 1.77 & 2.20  & \(+5.29\) \\
    Telegram      & 0.285 & \(-54.0\)           & 81.73 & 56.54 & 2.50 & 16.35 & \(+3.58\) \\
    Texas         & 0.505 & \(\phantom{-}0.0\)  & 47.03 & 3.24  & 3.24 & 1.35  & \(+0.09\) \\
    \bottomrule
  \end{tabular*}
\end{table*}

Table~\ref{tab:neighbor-counterfactual} separates a margin-level
neighbour-majority effect (\(\overline{\Delta}^{\mathrm{nbr}}_{\neq}\)) from a
strict argmax-level rescue effect (\(q_{\mathrm{nbr\text{-}rescue}}\)), and
shows that the auxiliary path's behaviour on the neighbour axis is
dataset-dependent and qualitatively aligned with clique-expansion homophily. On
Roman-empire and Telegram, where \(\mathcal H_{\mathrm{CE}}\) is low and
\(q_{\mathrm{mis}}\) is high (\(95.7\%\) and \(81.7\%\)), the auxiliary path
strongly increases the true-vs-neighbour-majority margin
(\(\overline{\Delta}^{\mathrm{nbr}}_{\neq}=+5.29\) and \(+3.58\)). On these
low-homophily datasets, where the clique-expansion neighbourhood is often
unreliable, the auxiliary path therefore acts as a counter-neighbourhood
signal.

Strict neighbour-majority rescue, however, explains only part of that gain.
On Roman-empire \(q_{\mathrm{nbr\text{-}rescue}}=2.20\%\) is far below
\(q_{\mathrm{help}}=29.44\%\) (\(2.20/29.44\approx 7.5\%\); about \(125\) of
\(5666\) test nodes per run), and on Telegram the corresponding values are
\(16.35\%\) and \(56.54\%\) (\(\approx 28.9\%\)). The auxiliary path therefore
does not merely correct predictions that follow the misleading
neighbour-majority class; its main predictive role, especially on
Roman-empire, is broader margin improvement against many competing classes.

Citeseer shows the opposite pattern. It is the most homophilic dataset
(\(\mathcal H_{\mathrm{CE}}=0.795\)), the auxiliary intervention has only a
small effect (\(\Delta\mathrm{Acc.}=-1.0\)), and
\(\overline{\Delta}^{\mathrm{nbr}}_{\neq}=-0.28\): on the rare misleading-neighbour
nodes, the auxiliary path mildly leans \emph{with} the neighbour-majority,
consistent with the fact that the neighbour-majority is usually informative on
homophilic graphs. The auxiliary path is therefore not universally
anti-neighbour-majority; in these experiments it pushes against the
neighbour-majority mainly on the datasets where the neighbourhood is unreliable. Cornell and Texas sit between these regimes
with mid-range \(\mathcal H_{\mathrm{CE}}\) and small intervention effects, so
their \(\overline{\Delta}^{\mathrm{nbr}}_{\neq}\) values are close to zero.

\subsubsection*{Generalised margin shift \(\Delta_u^{\mathrm{max}}\) and conditional split by aux-off correctness}

The fact that \(q_{\mathrm{nbr\text{-}rescue}}\) is much smaller than
\(q_{\mathrm{help}}\), especially on Roman-empire, motivates a generalised
margin analysis: if the auxiliary path is not only correcting
neighbour-majority mistakes, then it should also improve the true-class margin
against non-neighbour-majority competitors. The neighbour-majority margin
\(\overline{\Delta}^{\mathrm{nbr}}_{\neq}\) focuses on one specific axis,
the true class versus the neighbour-majority class. To also cover corrections
whose strongest competitor is not the neighbour-majority class, we report a
second margin shift defined against the best non-true class at each node:
\begin{equation}
  \Delta_u^{\mathrm{max}}
  =
  \left(
  o_{u,y_u}^{\mathrm{full}}
  -
  \max_{c\neq y_u} o_{u,c}^{\mathrm{full}}
  \right)
  -
  \left(
  o_{u,y_u}^{\mathrm{aux\text{-}off}}
  -
  \max_{c\neq y_u} o_{u,c}^{\mathrm{aux\text{-}off}}
  \right).
  \label{eq:max-other-margin-shift}
\end{equation}
The competitor class is chosen independently of the neighbour-majority and
independently in the full and aux-off logits, so
\(\Delta_u^{\mathrm{max}}\) is defined for every test node and is positive
whenever the auxiliary path shifts the logit difference between the true
class and its strongest competitor in favour of the true class. We report its sample mean over all test nodes:
\[
  \overline{\Delta}^{\mathrm{max}}
  =
  \frac{1}{|\mathcal{T}|}
  \sum_{u\in\mathcal{T}}
  \Delta_u^{\mathrm{max}}.
\]
We focus on two
distinct services the auxiliary path can render: \emph{reinforcing} the
true-class margin on nodes the aux-off model already predicts correctly,
and \emph{rescuing} nodes the aux-off model gets wrong by closing (or
flipping) their true-vs-best-other gap. Splitting the test set by the aux-off
prediction's correctness separates the two. Let
\(\mathcal{T}^{\mathrm{aux\text{-}off}}_{\checkmark}=\{u\in\mathcal{T}:\hat
y_u^{\mathrm{aux\text{-}off}}=y_u\}\) and
\(\mathcal{T}^{\mathrm{aux\text{-}off}}_{\times}=\mathcal{T}\setminus
\mathcal{T}^{\mathrm{aux\text{-}off}}_{\checkmark}\), and write
\(n_{\checkmark}=|\mathcal{T}^{\mathrm{aux\text{-}off}}_{\checkmark}|\)
and
\(n_{\times}=|\mathcal{T}^{\mathrm{aux\text{-}off}}_{\times}|\) for the
corresponding per-checkpoint subset sizes (so
\(n_{\checkmark}+n_{\times}=|\mathcal{T}|\) by construction).
Table~\ref{tab:delta-max-conditional} reports the mean of
\(\Delta_u^{\mathrm{max}}\) and the fraction of positive
\(\Delta_u^{\mathrm{max}}\) on each subset, separately for each dataset;
the \(n\) columns of that table report the checkpoint-mean of
\(n_{\checkmark}\) and \(n_{\times}\).

\begin{table}[t]
  \centering
  \caption{Conditional split of \(\Delta_u^{\mathrm{max}}\) by aux-off
    correctness. \(n\) is the mean subset size per checkpoint; \(\%>0\) is the
    fraction of nodes in the subset with positive \(\Delta_u^{\mathrm{max}}\).
    \(n_{\mathrm{hurt}}\) counts aux-off-correct nodes flipped to wrong by the
    full model, while \(n_{\mathrm{help}}\) counts aux-off-wrong nodes flipped to
    correct. All columns are means over ten saved checkpoints.}
  \label{tab:delta-max-conditional}
  \small
  \setlength{\tabcolsep}{4pt}
  \begin{tabular*}{\textwidth}{@{\extracolsep{\fill}}lrrrrrrrr@{}}
    \toprule
    & \multicolumn{4}{c}{\(u\in\mathcal{T}^{\mathrm{aux\text{-}off}}_{\checkmark}\)
    (aux-off correct)}
    & \multicolumn{4}{c}{\(u\in\mathcal{T}^{\mathrm{aux\text{-}off}}_{\times}\)
    (aux-off wrong)} \\
    \cmidrule(lr){2-5}\cmidrule(lr){6-9}
    Dataset
    & \(n\) & \(\overline{\Delta}^{\mathrm{max}}\) & \(\%>0\)
    & \(n_{\mathrm{hurt}}\)
    & \(n\) & \(\overline{\Delta}^{\mathrm{max}}\) & \(\%>0\)
    & \(n_{\mathrm{help}}\) \\
    \midrule
    Citeseer      & 618  & \(+0.19\)  & \(69.6\) & \(6.8\)   & 210  & \(+0.09\)  & \(41.9\) & \(15.2\)   \\
    Cornell       & 26   & \(+0.75\)  & \(71.7\) & \(0.5\)   & 11   & \(+0.58\)  & \(55.3\) & \(2.2\)    \\
    Roman-empire  & 3484 & \(+2.99\)  & \(83.3\) & \(100.5\) & 2182 & \(+13.97\) & \(91.1\) & \(1668.0\) \\
    Telegram      & 18   & \(+0.14\)  & \(61.5\) & \(1.3\)   & 34   & \(+9.39\)  & \(98.5\) & \(29.4\)   \\
    Texas         & 31   & \(+0.03\)  & \(50.8\) & \(1.2\)   & 6    & \(+0.42\)  & \(50.4\) & \(1.2\)    \\
    \bottomrule
  \end{tabular*}
\end{table}

Two interpretive notes apply throughout. \(\Delta_u^{\mathrm{max}}\) is a
continuous logit-margin shift, not an argmax-flip indicator: on the
aux-off-wrong subset a positive \(\Delta_u^{\mathrm{max}}\) is necessary
for a wrong-to-right flip (the true-vs-best-other gap must cross zero) but
not sufficient -- the gap can shrink without crossing it, leaving the
prediction wrong with a smaller margin. The \(\%\!>\!0\) column on the
aux-off-wrong subset is therefore an upper bound on the actual
flip-to-correct rate, which is given by \(n_{\mathrm{help}}/n_{\times}\).
On the aux-off-correct subset there is no such ambiguity for the
reinforcement reading: aux-off-correct implies that the true-class logit
is maximal under the tie-breaking rule, which (ignoring measure-zero ties)
corresponds to
\((o_{u,y_u}-\max_{c\neq y_u}o_{u,c})^{\mathrm{aux\text{-}off}}>0\), so a
positive \(\Delta_u^{\mathrm{max}}\) means the gap is even larger and the
prediction stays correct; \(n_{\mathrm{hurt}}/n_{\checkmark}\) gives the
opposite-direction flip rate (correct to wrong).

Table~\ref{tab:delta-max-conditional} is intended as a rescue-versus-reinforcement diagnostic rather than additional evidence that the auxiliary path is useful, since that case is already made by Table~\ref{tab:analysis-intervention}. The two subsets answer different questions: the aux-off-correct subset shows whether the auxiliary path widens already-positive
gaps (reinforcement), while the aux-off-wrong subset shows whether it can flip negative gaps to positive (rescue). Both effects can coexist, and their relative size is what we read   
off the table.
  
Roman-empire is the clearest case where both mechanisms are active. On the aux-off-correct subset (\(n_{\checkmark}=3484\)), the per-node mean shift is \(\overline{\Delta}^{\mathrm{max}}=+2.99\) with \(83.3\%\) positive and only \(\sim\!100\) hurt nodes per checkpoint, so reinforcement is real but moderate in magnitude. On the
aux-off-wrong subset (\(n_{\times}=2182\)), the per-node mean shift is much larger (\(+13.97\)), \(91.1\%\) positive, and \(1668\) of those \(2182\) nodes flip to correct (\(n_{\mathrm{help}}/n_{\times}\approx 76\%\)). The total accuracy contribution is therefore dominated by rescue: the per-node shift is roughly \(5\times\) larger on aux-off-wrong
than on aux-off-correct, and \(1668\) rescued nodes vastly outnumber the \({\sim}100\) hurt by reinforcement.

Telegram is sharper: rescue is essentially the only mechanism. The aux-off-correct subset has only \(n_{\checkmark}=18\) nodes per checkpoint with a small mean shift (\(+0.14\)) and a
near-coin-flip \(61.5\%\) positive -- there is little reinforcement to do because there are few aux-off-correct nodes to begin with. The aux-off-wrong subset has \(n_{\times}=34\)
nodes per checkpoint with mean shift \(+9.39\), \(98.5\%\) positive, and \(29.4\) of those \(34\) nodes flip to correct (\(\approx 86\%\)). The auxiliary path is acting almost        
exclusively as a rescue mechanism on Telegram.

Citeseer does not show a consistent rescue signal. On its aux-off-wrong subset (\(n_{\times}=210\)), only \(41.9\%\) of nodes have a positive shift -- below \(50\%\), so on average   
the auxiliary path shrinks the gap on as many wrong nodes as it widens it. The mean shift (\(+0.09\)) is small on the logit scale, and only \(15.2\) of \(210\) wrong nodes flip to
correct (\({\approx}7\%\)). This is consistent with Citeseer's small intervention effect in Table~\ref{tab:analysis-intervention}: the auxiliary path is active in magnitude but does  
not translate into coherent rescue behaviour.

Cornell and Texas have too few aux-off-wrong nodes per checkpoint (\(n_{\times}\approx 11\) and \(n_{\times}\approx 6\) respectively) for the rescue percentages to be statistically   
meaningful -- a single misclassified node moves the rate by \({\sim}10\text{--}17\) percentage points. The Cornell numbers are qualitatively in line with Roman-empire (positive shift
in both subsets) but we treat them as suggestive at best. Texas is closer to noise: \(\%\!>\!0\) hovers around \(50\%\) on both subsets and per-node shifts are an order of magnitude  
smaller than on the larger datasets.

\subsection{Pass-wise geometry of the auxiliary path}
\label{sec:analysis-cos-pass}

The previous analyses characterise \emph{whether} the auxiliary path matters
and \emph{what kind} of predictive effect it produces. We next ask a simpler
geometric question: within each message-passing pass, is the auxiliary
representation aligned with the corresponding main-path representation, or does
it provide a distinct direction? We measure this from two complementary angles.
The \emph{source-side} cosine compares the main and auxiliary representations
of each source before message passing. The \emph{receiver-side} cosine compares
the aggregated main and auxiliary contributions arriving at each destination
after attention-weighted summation. The two answer different questions: the source-side characterises the
intrinsic geometry of the auxiliary representation relative to the main one at
each source, while the receiver-side characterises the geometry of the
contributions actually mixed at the destination, where per-incidence weights
and cross-source interference can either reinforce or wash out the per-source
angle.

\paragraph{Source-side cosine.}
For the \(\VE\) stage at layer \(\ell\), the source is a node \(v\), with main
and auxiliary representations \(\mathbf{x}_v^{\mathrm{main},(\ell)}\) and
\(\mathbf{x}_v^{\mathrm{aux},(\ell)}\). For the \(\EV\) stage, the source is a
hyperedge \(e\), with \(\mathbf{h}_e^{\mathrm{main},(\ell)}\) and
\(\mathbf{h}_e^{\mathrm{aux},(\ell)}\). The source-side cosine is
\begin{equation}
  c_s^{\mathrm{src},(\ell,p)}
  =
  \begin{cases}
    \cos\!\left(
    \mathbf{x}_v^{\mathrm{main},(\ell)},
    \mathbf{x}_v^{\mathrm{aux},(\ell)}
    \right)
     &
    \text{if } p=\VE,\; s=v\in\mathcal V,
    \\[2pt]
    \cos\!\left(
    \mathbf{h}_e^{\mathrm{main},(\ell)},
    \mathbf{h}_e^{\mathrm{aux},(\ell)}
    \right)
     &
    \text{if } p=\EV,\; s=e\in\mathcal E.
  \end{cases}
  \label{eq:cos-source-aux}
\end{equation}

\paragraph{Receiver-side cosine.}
Using the per-incidence main and auxiliary contributions of
Eq.~\eqref{eq:analysis-messages}, define the per-receiver aggregates
\begin{equation}
  \begin{aligned}
    \textcolor{cyan}{\hat m_e^{\mathrm{main},(\ell)}}
     & =
    \sum_{v\in H(e)\cup T(e)}
    \textcolor{cyan}{\hat\alpha_{e,v}^{\mathrm{main}}}\,\mathbf{x}_v^{\mathrm{main},(\ell)},
     &
    \textcolor{red}{\hat m_e^{\mathrm{aux},(\ell)}}
     & =
    \sum_{v\in H(e)\cup T(e)}
    \textcolor{red}{\hat\alpha_{e,v}^{\mathrm{aux}}}\,\mathbf{x}_v^{\mathrm{aux},(\ell)},
    \\
    \textcolor{orange}{\check m_u^{\mathrm{main},(\ell)}}
     & =
    \sum_{e\in\Gamma(u)}
    \textcolor{orange}{\check\alpha_{u,e}^{\mathrm{main}}}\,\mathbf{h}_e^{\mathrm{main},(\ell)},
     &
    \textcolor{blue}{\check m_u^{\mathrm{aux},(\ell)}}
     & =
    \sum_{e\in\Gamma(u)}
    \textcolor{blue}{\check\alpha_{u,e}^{\mathrm{aux}}}\,\mathbf{h}_e^{\mathrm{aux},(\ell)} ,
  \end{aligned}
  \label{eq:receiver-aggregates}
\end{equation}
The receiver-side cosine is then
\begin{equation}
  c_r^{\mathrm{recv},(\ell,p)}
  =
  \begin{cases}
    \cos\!\left(
    \textcolor{cyan}{\hat m_e^{\mathrm{main},(\ell)}},\,
    \textcolor{red}{\hat m_e^{\mathrm{aux},(\ell)}}
    \right)
     &
    \text{if } p=\VE,\; r=e\in\mathcal E,
    \\[2pt]
    \cos\!\left(
    \textcolor{orange}{\check m_u^{\mathrm{main},(\ell)}},\,
    \textcolor{blue}{\check m_u^{\mathrm{aux},(\ell)}}
    \right)
     &
    \text{if } p=\EV,\; r=u\in\mathcal V.
  \end{cases}
  \label{eq:cos-receiver-aux}
\end{equation}

Table~\ref{tab:analysis-cos-pass} reports the mean of both cosines across all
sources or receivers and ten saved checkpoints, grouped by dataset, layer, and
pass; Figures~\ref{fig:analysis-cos-hist} and~\ref{fig:analysis-cos-recv-hist} reports the distribution of the
source- and receiver-side cosine, respectively.

\begin{table}[t]
  \centering
  \caption{Source-side and receiver-side cosines between the main-path and
    auxiliary representations,
    Eqs.~\eqref{eq:cos-source-aux},~\eqref{eq:cos-receiver-aux}. Source-side
    means are computed across all sources of the pass (nodes for \(\VE\),
    hyperedges for \(\EV\)) and ten saved checkpoints; receiver-side means
    are computed across all receivers of the pass (hyperedges for \(\VE\),
    nodes for \(\EV\)) and ten checkpoints. Standard deviations across pooled
    source--checkpoint or receiver--checkpoint instances are shown in
    parentheses. Roman-empire is the only multi-layer configuration in this
    analysis.}
  \label{tab:analysis-cos-pass}
  \small
  \setlength{\tabcolsep}{4pt}
  \begin{tabular*}{\textwidth}{@{\extracolsep{\fill}}llcccc@{}}
    \toprule
    &
    & \multicolumn{2}{c}{\(\VE\)}
    & \multicolumn{2}{c}{\(\EV\)} \\
    \cmidrule(lr){3-4} \cmidrule(lr){5-6}
    Dataset & Layer
    & \(\mathrm{Avg.}\; c_s^{\mathrm{src}}\) (std)
    & \(\mathrm{Avg.}\; c_r^{\mathrm{recv}}\) (std)
    & \(\mathrm{Avg.}\; c_s^{\mathrm{src}}\) (std)
    & \(\mathrm{Avg.}\; c_r^{\mathrm{recv}}\) (std) \\
    \midrule
    Citeseer & 0 & \(+0.375\,(0.099)\) & \(+0.300\,(0.094)\) & \(+0.006\,(0.124)\) & \(+0.025\,(0.114)\) \\
    Cornell  & 0 & \(+0.546\,(0.130)\) & \(+0.190\,(0.091)\) & \(+0.027\,(0.131)\) & \(+0.090\,(0.119)\) \\
    Texas    & 0 & \(+0.490\,(0.145)\) & \(+0.211\,(0.114)\) & \(+0.035\,(0.129)\) & \(+0.086\,(0.094)\) \\
    Telegram & 0 & \(+0.138\,(0.113)\) & \(+0.036\,(0.101)\) & \(-0.023\,(0.171)\) & \(-0.044\,(0.166)\) \\
    \midrule
    \multirow{4}{*}{Roman-empire}
    & 0 & \(+0.529\,(0.106)\) & \(+0.389\,(0.077)\) & \(+0.125\,(0.114)\) & \(+0.006\,(0.103)\) \\
    & 1 & \(+0.324\,(0.140)\) & \(+0.217\,(0.108)\) & \(-0.017\,(0.137)\) & \(-0.035\,(0.092)\) \\
    & 2 & \(+0.213\,(0.149)\) & \(+0.172\,(0.113)\) & \(-0.160\,(0.174)\) & \(-0.087\,(0.133)\) \\
    & 3 & \(+0.146\,(0.188)\) & \(+0.130\,(0.165)\) & \(-0.082\,(0.178)\) & \(-0.057\,(0.128)\) \\
    \bottomrule
  \end{tabular*}
\end{table}

\paragraph{\(\VE\) keeps a positive component of the main direction.}
The node-to-hyperedge pass has positive mean source-side cosine in every
dataset and layer. On Citeseer, Cornell, Texas, and the first Roman-empire
layer, the source-side cosine is moderately positive (\(0.38\)--\(0.55\)),
suggesting that the auxiliary node representation preserves a sizeable
component of the corresponding main-path direction. The receiver-side cosine
is also positive throughout but smaller in magnitude (e.g.\ Cornell
\(0.55\to0.19\), Texas \(0.49\to0.21\), Roman-empire layer~0
\(0.53\to0.39\)). Aggregation thus dilutes but does not invert the alignment:
the auxiliary contributions still arrive at hyperedges with a positive
component along the main aggregate, just less strongly than the per-source
geometry would suggest.

Telegram and the deeper Roman-empire layers are different. Telegram has a much
smaller source-side \(\VE\) cosine (\(+0.138\)), and Roman-empire decreases
from \(+0.529\) at layer \(0\) to \(+0.146\) at layer \(3\). These are also
the settings where the auxiliary path is most predictively important. Thus,
when the auxiliary path carries substantial predictive load, its
node-to-hyperedge representation becomes less aligned with the main path and
more geometrically distinct.

\paragraph{\(\EV\) is nearly orthogonal at both ends.}
The hyperedge-to-node pass is centred near zero on the single-layer datasets
at the source side and stays close to zero at the receiver side: Citeseer,
Cornell, Texas, and Telegram all have receiver-side cosines in
\([-0.05,+0.10]\). The auxiliary hyperedge representation is therefore neither
a rescaled copy of the main one at the source, nor does aggregation
reintroduce alignment at the receiver: at both ends of the pass the auxiliary
contribution provides a distinct feature direction.

Roman-empire makes the picture sharper across layers. The source-side \(\EV\)
cosine moves from mildly positive at layer \(0\) (\(+0.125\)) to near-zero at
layer \(1\), and becomes negative in layers \(2\) and \(3\) (\(-0.160\) and
\(-0.082\)). The receiver-side cosine consistently sits closer to zero than
the source-side: layer~0 collapses from \(+0.125\) to \(+0.006\), and layer~2
moves from \(-0.160\) to \(-0.087\). This matches the intervention results:
Roman-empire relies on the auxiliary path across multiple layers, and the
auxiliary contribution arrives at each node along a direction that is
essentially orthogonal to the main aggregate, mildly tilted away from it in
the deeper layers.

\paragraph{Interpretation.}
These cosines support the view that the auxiliary path is not merely a
duplicate message stream. In the \(\VE\) pass, the auxiliary representation
preserves part of the main node direction at the source and at the receiver,
but this alignment weakens in Telegram and in deeper Roman-empire layers,
where the auxiliary path is most important. In the \(\EV\) pass, the
auxiliary representation is nearly orthogonal at the source, and aggregation
generally tightens rather than loosens this orthogonality, so the auxiliary
contribution at each node provides a geometrically distinct channel that
cannot be obtained by rescaling the main aggregate.

\begin{figure}[htb!]
  \centering
  \includegraphics[width=\textwidth]{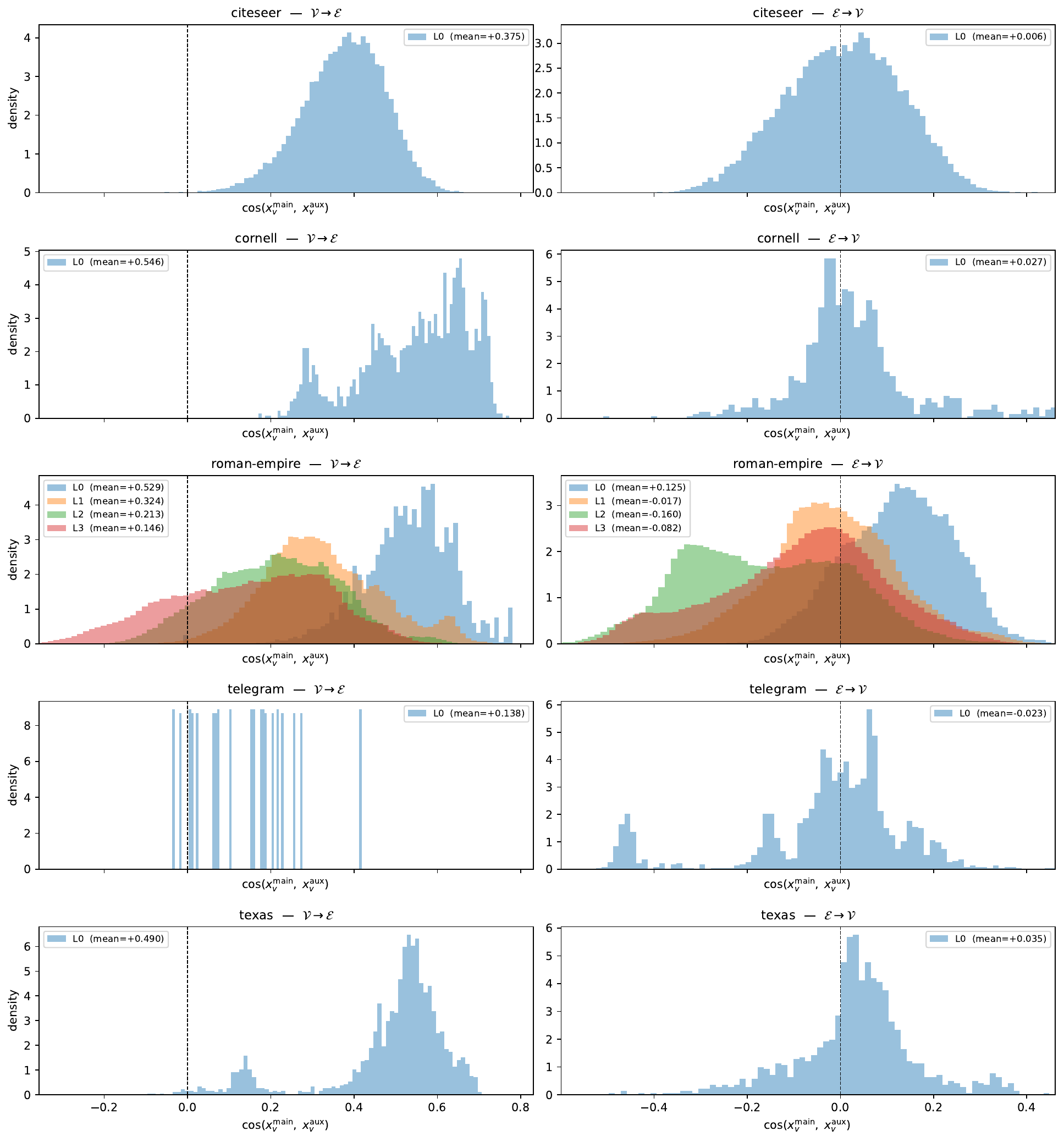}
  \caption{Source-wise distribution of \(c_s^{(\ell,p)}\)
  (Eq.~\eqref{eq:cos-source-aux}) across all sources and ten saved
  checkpoints, overlaid by layer for the multi-layer Roman-empire family.
  The vertical dashed line marks \(c=0\). The \(\VE\) cosine is uniformly
  positive across families and layers; the \(\EV\) cosine is concentrated
  near zero on Citeseer, Cornell, Texas, and Telegram, and spreads more
  widely on Roman-empire (with layer~2 mass clearly to the left of zero).}
  \label{fig:analysis-cos-hist}
\end{figure}

\begin{figure*}[t]
  \centering
  \includegraphics[width=\textwidth]{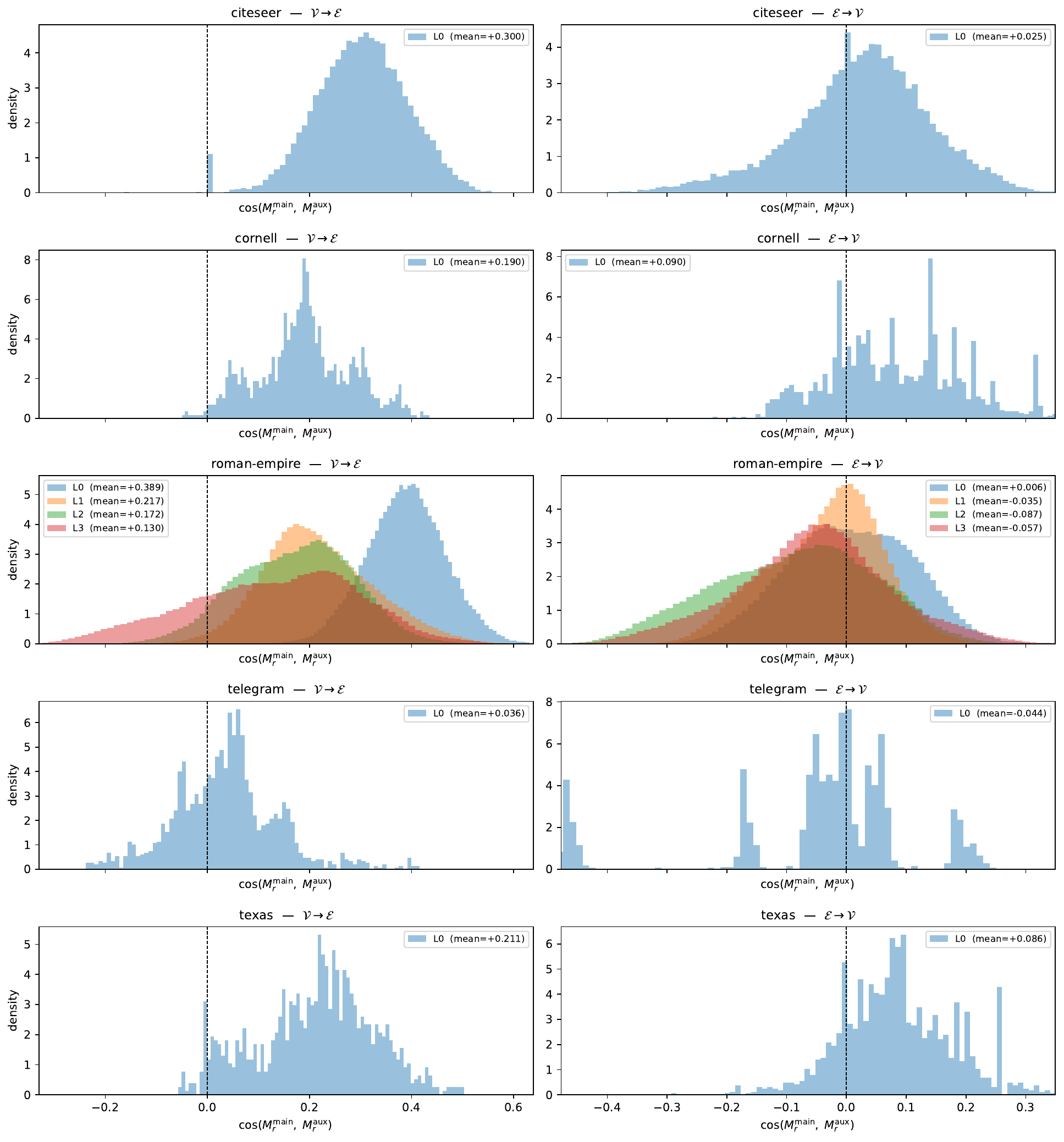}
  \caption{Receiver-side distribution of \(c_r^{\mathrm{recv},(\ell,p)}\)
  (Eq.~\eqref{eq:cos-receiver-aux}) across all receivers and ten saved
  checkpoints, overlaid by layer for the multi-layer Roman-empire family.
  The vertical dashed line marks \(c=0\). The \(\VE\) receiver-side cosine
  is positive throughout but smaller in magnitude than its source-side
  counterpart; the \(\EV\) receiver-side cosine is concentrated near zero
  on every dataset and layer, including Roman-empire's deeper layers, where
  aggregation tightens the source-side near-orthogonality further.}
  \label{fig:analysis-cos-recv-hist}
\end{figure*}

\section*{NeurIPS Paper Checklist}

\begin{enumerate}

\item {\bf Claims}
    \item[] Question: Do the main claims made in the abstract and introduction accurately reflect the paper's contributions and scope?
    \item[] Answer: \answerYes{} % Replace by \answerYes{}, \answerNo{}, or \answerNA{}.
    \item[] Justification: Section \ref{sec:results} and Table \ref{tab:eval-merged} show our quantitative evaluation results, supporting our performance claims. We also provide Theorems \ref{thm:gedi-recovery} and \ref{thm:gedi-star-expansion}, with proofs in Appendix \ref{sec:appendix-proofs}.
    \item[] Guidelines:
    \begin{itemize}
        \item The answer \answerNA{} means that the abstract and introduction do not include the claims made in the paper.
        \item The abstract and/or introduction should clearly state the claims made, including the contributions made in the paper and important assumptions and limitations. A \answerNo{} or \answerNA{} answer to this question will not be perceived well by the reviewers. 
        \item The claims made should match theoretical and experimental results, and reflect how much the results can be expected to generalize to other settings. 
        \item It is fine to include aspirational goals as motivation as long as it is clear that these goals are not attained by the paper. 
    \end{itemize}

\item {\bf Limitations}
    \item[] Question: Does the paper discuss the limitations of the work performed by the authors?
    \item[] Answer: \answerYes{} % Replace by \answerYes{}, \answerNo{}, or \answerNA{}.
    \item[] Justification: We discuss the limitations within our result analysis in Section \ref{sec:results} and the conclusion in Section \ref{sec:conclusions}.
    \item[] Guidelines:
    \begin{itemize}
        \item The answer \answerNA{} means that the paper has no limitation while the answer \answerNo{} means that the paper has limitations, but those are not discussed in the paper. 
        \item The authors are encouraged to create a separate ``Limitations'' section in their paper.
        \item The paper should point out any strong assumptions and how robust the results are to violations of these assumptions (e.g., independence assumptions, noiseless settings, model well-specification, asymptotic approximations only holding locally). The authors should reflect on how these assumptions might be violated in practice and what the implications would be.
        \item The authors should reflect on the scope of the claims made, e.g., if the approach was only tested on a few datasets or with a few runs. In general, empirical results often depend on implicit assumptions, which should be articulated.
        \item The authors should reflect on the factors that influence the performance of the approach. For example, a facial recognition algorithm may perform poorly when image resolution is low or images are taken in low lighting. Or a speech-to-text system might not be used reliably to provide closed captions for online lectures because it fails to handle technical jargon.
        \item The authors should discuss the computational efficiency of the proposed algorithms and how they scale with dataset size.
        \item If applicable, the authors should discuss possible limitations of their approach to address problems of privacy and fairness.
        \item While the authors might fear that complete honesty about limitations might be used by reviewers as grounds for rejection, a worse outcome might be that reviewers discover limitations that aren't acknowledged in the paper. The authors should use their best judgment and recognize that individual actions in favor of transparency play an important role in developing norms that preserve the integrity of the community. Reviewers will be specifically instructed to not penalize honesty concerning limitations.
    \end{itemize}

\item {\bf Theory assumptions and proofs}
    \item[] Question: For each theoretical result, does the paper provide the full set of assumptions and a complete (and correct) proof?
    \item[] Answer: \answerYes{} % Replace by \answerYes{}, \answerNo{}, or \answerNA{}.
    \item[] Justification: We provide proofs for our theorems in Appendix \ref{sec:appendix-proofs}.
    \item[] Guidelines:
    \begin{itemize}
        \item The answer \answerNA{} means that the paper does not include theoretical results. 
        \item All the theorems, formulas, and proofs in the paper should be numbered and cross-referenced.
        \item All assumptions should be clearly stated or referenced in the statement of any theorems.
        \item The proofs can either appear in the main paper or the supplemental material, but if they appear in the supplemental material, the authors are encouraged to provide a short proof sketch to provide intuition. 
        \item Inversely, any informal proof provided in the core of the paper should be complemented by formal proofs provided in appendix or supplemental material.
        \item Theorems and Lemmas that the proof relies upon should be properly referenced. 
    \end{itemize}

    \item {\bf Experimental result reproducibility}
    \item[] Question: Does the paper fully disclose all the information needed to reproduce the main experimental results of the paper to the extent that it affects the main claims and/or conclusions of the paper (regardless of whether the code and data are provided or not)?
    \item[] Answer: \answerYes{} % Replace by \answerYes{}, \answerNo{}, or \answerNA{}.
    \item[] Justification: We provide complete details about our plug-in model in Sections \ref{sec:gadh} and \ref{sec:method-2}. We provide the information about the datasets, baseline models and hyperparameter tuning in Section \ref{sec:eval-setup} and Appendix \ref{app:details}. Furthermore, we provide our codebase as supplementary material with the submission.
    \item[] Guidelines:
    \begin{itemize}
        \item The answer \answerNA{} means that the paper does not include experiments.
        \item If the paper includes experiments, a \answerNo{} answer to this question will not be perceived well by the reviewers: Making the paper reproducible is important, regardless of whether the code and data are provided or not.
        \item If the contribution is a dataset and\slash or model, the authors should describe the steps taken to make their results reproducible or verifiable. 
        \item Depending on the contribution, reproducibility can be accomplished in various ways. For example, if the contribution is a novel architecture, describing the architecture fully might suffice, or if the contribution is a specific model and empirical evaluation, it may be necessary to either make it possible for others to replicate the model with the same dataset, or provide access to the model. In general. releasing code and data is often one good way to accomplish this, but reproducibility can also be provided via detailed instructions for how to replicate the results, access to a hosted model (e.g., in the case of a large language model), releasing of a model checkpoint, or other means that are appropriate to the research performed.
        \item While NeurIPS does not require releasing code, the conference does require all submissions to provide some reasonable avenue for reproducibility, which may depend on the nature of the contribution. For example
        \begin{enumerate}
            \item If the contribution is primarily a new algorithm, the paper should make it clear how to reproduce that algorithm.
            \item If the contribution is primarily a new model architecture, the paper should describe the architecture clearly and fully.
            \item If the contribution is a new model (e.g., a large language model), then there should either be a way to access this model for reproducing the results or a way to reproduce the model (e.g., with an open-source dataset or instructions for how to construct the dataset).
            \item We recognize that reproducibility may be tricky in some cases, in which case authors are welcome to describe the particular way they provide for reproducibility. In the case of closed-source models, it may be that access to the model is limited in some way (e.g., to registered users), but it should be possible for other researchers to have some path to reproducing or verifying the results.
        \end{enumerate}
    \end{itemize}

\item {\bf Open access to data and code}
    \item[] Question: Does the paper provide open access to the data and code, with sufficient instructions to faithfully reproduce the main experimental results, as described in supplemental material?
    \item[] Answer: \answerYes{} % Replace by \answerYes{}, \answerNo{}, or \answerNA{}.
    \item[] Justification: We provide our codebase as supplementary material with the submission, which includes scripts to download the relevant datasets. All the datasets used in our submission are publicly available, as noted in Section \ref{sec:eval-setup}.
    \item[] Guidelines:
    \begin{itemize}
        \item The answer \answerNA{} means that paper does not include experiments requiring code.
        \item Please see the NeurIPS code and data submission guidelines (\url{https://neurips.cc/public/guides/CodeSubmissionPolicy}) for more details.
        \item While we encourage the release of code and data, we understand that this might not be possible, so \answerNo{} is an acceptable answer. Papers cannot be rejected simply for not including code, unless this is central to the contribution (e.g., for a new open-source benchmark).
        \item The instructions should contain the exact command and environment needed to run to reproduce the results. See the NeurIPS code and data submission guidelines (\url{https://neurips.cc/public/guides/CodeSubmissionPolicy}) for more details.
        \item The authors should provide instructions on data access and preparation, including how to access the raw data, preprocessed data, intermediate data, and generated data, etc.
        \item The authors should provide scripts to reproduce all experimental results for the new proposed method and baselines. If only a subset of experiments are reproducible, they should state which ones are omitted from the script and why.
        \item At submission time, to preserve anonymity, the authors should release anonymized versions (if applicable).
        \item Providing as much information as possible in supplemental material (appended to the paper) is recommended, but including URLs to data and code is permitted.
    \end{itemize}

\item {\bf Experimental setting/details}
    \item[] Question: Does the paper specify all the training and test details (e.g., data splits, hyperparameters, how they were chosen, type of optimizer) necessary to understand the results?
    \item[] Answer: \answerYes{} % Replace by \answerYes{}, \answerNo{}, or \answerNA{}.
    \item[] Justification: We provide complete details about our plug-in model in Sections \ref{sec:gadh} and \ref{sec:method-2}. We provide the information about the datasets, baseline models and hyperparameter tuning in Section \ref{sec:eval-setup} and Appendix \ref{app:details}. Furthermore, we provide our codebase as supplementary material with the submission.
    \item[] Guidelines:
    \begin{itemize}
        \item The answer \answerNA{} means that the paper does not include experiments.
        \item The experimental setting should be presented in the core of the paper to a level of detail that is necessary to appreciate the results and make sense of them.
        \item The full details can be provided either with the code, in appendix, or as supplemental material.
    \end{itemize}

\item {\bf Experiment statistical significance}
    \item[] Question: Does the paper report error bars suitably and correctly defined or other appropriate information about the statistical significance of the experiments?
    \item[] Answer: \answerYes{} % Replace by \answerYes{}, \answerNo{}, or \answerNA{}.
    \item[] Justification: Our results in Section \ref{sec:results} includes the standard deviations and appropriate information.
    \item[] Guidelines:
    \begin{itemize}
        \item The answer \answerNA{} means that the paper does not include experiments.
        \item The authors should answer \answerYes{} if the results are accompanied by error bars, confidence intervals, or statistical significance tests, at least for the experiments that support the main claims of the paper.
        \item The factors of variability that the error bars are capturing should be clearly stated (for example, train/test split, initialization, random drawing of some parameter, or overall run with given experimental conditions).
        \item The method for calculating the error bars should be explained (closed form formula, call to a library function, bootstrap, etc.)
        \item The assumptions made should be given (e.g., Normally distributed errors).
        \item It should be clear whether the error bar is the standard deviation or the standard error of the mean.
        \item It is OK to report 1-sigma error bars, but one should state it. The authors should preferably report a 2-sigma error bar than state that they have a 96\% CI, if the hypothesis of Normality of errors is not verified.
        \item For asymmetric distributions, the authors should be careful not to show in tables or figures symmetric error bars that would yield results that are out of range (e.g., negative error rates).
        \item If error bars are reported in tables or plots, the authors should explain in the text how they were calculated and reference the corresponding figures or tables in the text.
    \end{itemize}

\item {\bf Experiments compute resources}
    \item[] Question: For each experiment, does the paper provide sufficient information on the computer resources (type of compute workers, memory, time of execution) needed to reproduce the experiments?
    \item[] Answer: \answerYes{} % Replace by \answerYes{}, \answerNo{}, or \answerNA{}.
    \item[] Justification: We provide information about our computer resources in Appendix \ref{app:details}.
    \item[] Guidelines:
    \begin{itemize}
        \item The answer \answerNA{} means that the paper does not include experiments.
        \item The paper should indicate the type of compute workers CPU or GPU, internal cluster, or cloud provider, including relevant memory and storage.
        \item The paper should provide the amount of compute required for each of the individual experimental runs as well as estimate the total compute. 
        \item The paper should disclose whether the full research project required more compute than the experiments reported in the paper (e.g., preliminary or failed experiments that didn't make it into the paper). 
    \end{itemize}
    
\item {\bf Code of ethics}
    \item[] Question: Does the research conducted in the paper conform, in every respect, with the NeurIPS Code of Ethics \url{https://neurips.cc/public/EthicsGuidelines}?
    \item[] Answer: \answerYes{} % Replace by \answerYes{}, \answerNo{}, or \answerNA{}.
    \item[] Justification: Our research conforms with NeurIPS Code of Ethics.
    \item[] Guidelines:
    \begin{itemize}
        \item The answer \answerNA{} means that the authors have not reviewed the NeurIPS Code of Ethics.
        \item If the authors answer \answerNo, they should explain the special circumstances that require a deviation from the Code of Ethics.
        \item The authors should make sure to preserve anonymity (e.g., if there is a special consideration due to laws or regulations in their jurisdiction).
    \end{itemize}

\item {\bf Broader impacts}
    \item[] Question: Does the paper discuss both potential positive societal impacts and negative societal impacts of the work performed?
    \item[] Answer: \answerNA{} % Replace by \answerYes{}, \answerNo{}, or \answerNA{}.
    \item[] Justification: This paper presents work whose goal is to advance the field of machine learning. There are many potential societal consequences of our work, none of which we feel must be specifically highlighted here.
    \item[] Guidelines:
    \begin{itemize}
        \item The answer \answerNA{} means that there is no societal impact of the work performed.
        \item If the authors answer \answerNA{} or \answerNo, they should explain why their work has no societal impact or why the paper does not address societal impact.
        \item Examples of negative societal impacts include potential malicious or unintended uses (e.g., disinformation, generating fake profiles, surveillance), fairness considerations (e.g., deployment of technologies that could make decisions that unfairly impact specific groups), privacy considerations, and security considerations.
        \item The conference expects that many papers will be foundational research and not tied to particular applications, let alone deployments. However, if there is a direct path to any negative applications, the authors should point it out. For example, it is legitimate to point out that an improvement in the quality of generative models could be used to generate Deepfakes for disinformation. On the other hand, it is not needed to point out that a generic algorithm for optimizing neural networks could enable people to train models that generate Deepfakes faster.
        \item The authors should consider possible harms that could arise when the technology is being used as intended and functioning correctly, harms that could arise when the technology is being used as intended but gives incorrect results, and harms following from (intentional or unintentional) misuse of the technology.
        \item If there are negative societal impacts, the authors could also discuss possible mitigation strategies (e.g., gated release of models, providing defenses in addition to attacks, mechanisms for monitoring misuse, mechanisms to monitor how a system learns from feedback over time, improving the efficiency and accessibility of ML).
    \end{itemize}
    
\item {\bf Safeguards}
    \item[] Question: Does the paper describe safeguards that have been put in place for responsible release of data or models that have a high risk for misuse (e.g., pre-trained language models, image generators, or scraped datasets)?
    \item[] Answer: \answerNA{} % Replace by \answerYes{}, \answerNo{}, or \answerNA{}.
    \item[] Justification: The paper poses no such risks.
    \item[] Guidelines:
    \begin{itemize}
        \item The answer \answerNA{} means that the paper poses no such risks.
        \item Released models that have a high risk for misuse or dual-use should be released with necessary safeguards to allow for controlled use of the model, for example by requiring that users adhere to usage guidelines or restrictions to access the model or implementing safety filters. 
        \item Datasets that have been scraped from the Internet could pose safety risks. The authors should describe how they avoided releasing unsafe images.
        \item We recognize that providing effective safeguards is challenging, and many papers do not require this, but we encourage authors to take this into account and make a best faith effort.
    \end{itemize}

\item {\bf Licenses for existing assets}
    \item[] Question: Are the creators or original owners of assets (e.g., code, data, models), used in the paper, properly credited and are the license and terms of use explicitly mentioned and properly respected?
    \item[] Answer: \answerYes{} % Replace by \answerYes{}, \answerNo{}, or \answerNA{}.
    \item[] Justification: We have included relevant citations and licences for the code and data we use. The relevant licence is listed in Appendix \ref{app:details}.
    \item[] Guidelines:
    \begin{itemize}
        \item The answer \answerNA{} means that the paper does not use existing assets.
        \item The authors should cite the original paper that produced the code package or dataset.
        \item The authors should state which version of the asset is used and, if possible, include a URL.
        \item The name of the license (e.g., CC-BY 4.0) should be included for each asset.
        \item For scraped data from a particular source (e.g., website), the copyright and terms of service of that source should be provided.
        \item If assets are released, the license, copyright information, and terms of use in the package should be provided. For popular datasets, \url{paperswithcode.com/datasets} has curated licenses for some datasets. Their licensing guide can help determine the license of a dataset.
        \item For existing datasets that are re-packaged, both the original license and the license of the derived asset (if it has changed) should be provided.
        \item If this information is not available online, the authors are encouraged to reach out to the asset's creators.
    \end{itemize}

\item {\bf New assets}
    \item[] Question: Are new assets introduced in the paper well documented and is the documentation provided alongside the assets?
    \item[] Answer: \answerYes{} % Replace by \answerYes{}, \answerNo{}, or \answerNA{}.
    \item[] Justification: We provide anonymised code as supplementary material.
    \item[] Guidelines:
    \begin{itemize}
        \item The answer \answerNA{} means that the paper does not release new assets.
        \item Researchers should communicate the details of the dataset\slash code\slash model as part of their submissions via structured templates. This includes details about training, license, limitations, etc. 
        \item The paper should discuss whether and how consent was obtained from people whose asset is used.
        \item At submission time, remember to anonymize your assets (if applicable). You can either create an anonymized URL or include an anonymized zip file.
    \end{itemize}

\item {\bf Crowdsourcing and research with human subjects}
    \item[] Question: For crowdsourcing experiments and research with human subjects, does the paper include the full text of instructions given to participants and screenshots, if applicable, as well as details about compensation (if any)? 
    \item[] Answer: \answerNA{} % Replace by \answerYes{}, \answerNo{}, or \answerNA{}.
    \item[] Justification: The paper does not involve crowdsourcing nor research with human subjects.
    \item[] Guidelines:
    \begin{itemize}
        \item The answer \answerNA{} means that the paper does not involve crowdsourcing nor research with human subjects.
        \item Including this information in the supplemental material is fine, but if the main contribution of the paper involves human subjects, then as much detail as possible should be included in the main paper. 
        \item According to the NeurIPS Code of Ethics, workers involved in data collection, curation, or other labor should be paid at least the minimum wage in the country of the data collector. 
    \end{itemize}

\item {\bf Institutional review board (IRB) approvals or equivalent for research with human subjects}
    \item[] Question: Does the paper describe potential risks incurred by study participants, whether such risks were disclosed to the subjects, and whether Institutional Review Board (IRB) approvals (or an equivalent approval/review based on the requirements of your country or institution) were obtained?
    \item[] Answer: \answerNA{} % Replace by \answerYes{}, \answerNo{}, or \answerNA{}.
    \item[] Justification: The paper does not involve crowdsourcing nor research with human subjects.
    \item[] Guidelines:
    \begin{itemize}
        \item The answer \answerNA{} means that the paper does not involve crowdsourcing nor research with human subjects.
        \item Depending on the country in which research is conducted, IRB approval (or equivalent) may be required for any human subjects research. If you obtained IRB approval, you should clearly state this in the paper. 
        \item We recognize that the procedures for this may vary significantly between institutions and locations, and we expect authors to adhere to the NeurIPS Code of Ethics and the guidelines for their institution. 
        \item For initial submissions, do not include any information that would break anonymity (if applicable), such as the institution conducting the review.
    \end{itemize}

\item {\bf Declaration of LLM usage}
    \item[] Question: Does the paper describe the usage of LLMs if it is an important, original, or non-standard component of the core methods in this research? Note that if the LLM is used only for writing, editing, or formatting purposes and does \emph{not} impact the core methodology, scientific rigor, or originality of the research, declaration is not required.
    \item[] Answer: \answerNA{} % Replace by \answerYes{}, \answerNo{}, or \answerNA{}.
    \item[] Justification: The core method development in this research does not involve LLMs.
    \item[] Guidelines:
    \begin{itemize}
        \item The answer \answerNA{} means that the core method development in this research does not involve LLMs as any important, original, or non-standard components.
        \item Please refer to our LLM policy in the NeurIPS handbook for what should or should not be described.
    \end{itemize}

\end{enumerate}

\end{document}